\documentclass{article}

 \usepackage[preprint]{neurips_2026}

\usepackage[utf8]{inputenc} % allow utf-8 input
\usepackage[T1]{fontenc}    % use 8-bit T1 fonts
\usepackage{hyperref}       % hyperlinks
\usepackage{url}            % simple URL typesetting
\usepackage{booktabs}       % professional-quality tables
\usepackage{amsfonts}       % blackboard math symbols
\usepackage{nicefrac}       % compact symbols for 1/2, etc.
\usepackage{microtype}      % microtypography
\usepackage{xcolor}         % colors
\usepackage{graphicx}
\usepackage{subcaption}
\usepackage{amsmath}
\usepackage{mathtools}
\usepackage{multirow}
\usepackage{color}
\usepackage{hyperref}
\usepackage{lipsum}
\usepackage{float}
\usepackage{enumitem}
\usepackage{bbm}
\usepackage{booktabs}  % For professional-looking tables
\usepackage{pifont}% http://ctan.org/pkg/pifont
\usepackage{etoc}
\usepackage{algorithm}
\usepackage{algorithmic}
\usepackage{wrapfig}

\input{mysymbol.sty}

\title{Flow Map Denoisers: Traversing the Distortion-Perception Plane for Inverse Problems}

\author{%
  Nicolas Zilberstein\\
  Rice University\\
  \texttt{nzilberstein@rice.edu}\\
  \And
  Morteza Mardani \\ %\thanks{Equal advising.} \\
  NVIDIA Inc. \\
  \texttt{mmardani@nvidia.com} \\
  \AND
  Santiago Segarra \\%\footnotemark[1] \\
  Rice University\\
  \texttt{segarra@rice.edu} \\
}

\begin{document}
\maketitle

% \mm{@Nicolas: have you seen this recent work by Boffi? https://arxiv.org/abs/2604.27147 ... please cite it. How to Guide Your Flow: Few-Step Alignment via Flow Map Reward Guidance
% ... }
% \nz{I will add it.}
% \nz{Link to repo: https://anonymous.4open.science/r/flow_map_denoisers/README.md}
 
\begin{abstract}
Image restoration faces a fundamental tradeoff: methods that minimize error produce blurry reconstructions, while those that maximize perceptual quality yield sharp but less faithful images.
Existing approaches either commit to a single operating point on this distortion perception (DP) frontier or require paired-data supervision, auxiliary models, or hyperparameter tuning of the sampler to access different points.
We show that flow map models, a recent extension of flow matching for few-step sampling that learns an average field, implicitly define a one-parameter family of denoisers that continuously spans the DP frontier. The lookahead parameter t acts as a control knob between the MMSE and perceptual regimes. For Gaussian targets, we prove that varying t exactly recovers the optimal DP frontier; for natural images, we observe similar behavior empirically. Within a Plug-and-Play solver, the same mechanism extends to general inverse problems, where it controls a tradeoff between perceptual alignment and data consistency. Despite the lack of exact optimality guarantees in this setting, a single trained flow map spans the DP tradeoff, matching or exceeding specialized baselines at both extremes.
Extensive experiments on CelebA ($128\times 128$) and AFHQ ($256\times 256$) across several linear and nonlinear inverse tasks validate our findings.
Code is available in \url{https://github.com/nzilberstein/Flow-map-denoisers}
\end{abstract}

% =============================================================================
% =============================================================================
    \section{Introduction}
Image restoration is a highly ill-posed inverse problem: a single degraded observation can correspond to a vast set of plausible clean images.
Standard regression approaches that minimize mean squared error (MSE) approximate the conditional expectation~\citep{ongie2020deep}, achieving minimum distortion at the cost of over-smoothed reconstructions that lack fine detail.
In contrast, posterior sampling techniques~\citep{kawar2021snips} produce realistic-looking solutions at the cost of higher distortion.
This inherent compromise is formalized by the \emph{distortion-perception (DP) tradeoff}~\citep{blau2018perception, freirich2021theory}.
Because the ideal balance between low distortion and high perceptual quality is strictly user-dependent, designing a single model capable of traversing the DP plane at inference time remains a central challenge in computational imaging.

Most existing methods commit to a \emph{single operating point in the DP plane}, either targeting minimum distortion through supervised regression~\citep{dong2015image, terris2026reconstruct}, or minimum perception by leveraging recent generative models as priors~\citep{pokle2024trainingfree, chung2022diffusion, kadkhodaie2021stochastic} to draw samples from the posterior.
A few recent works based on data-dependent flow models aim to traverse the DP plane~\citep{ohayonposterior, delbracio2023inversion, interpolant}, by explicitly interpolating between measurement-dependent estimators (e.g., the conditional mean) and the clean image.
Unlike these approaches, which require external mechanisms, such as interpolation, discretization choices, or retraining, to move along the DP frontier, we show that this tradeoff is already implicitly encoded in the dynamics of flow maps.

This raises the question of whether this behavior can be achieved without relying on solver choices, such as increasing the number of discretization steps.
In this work, we propose a unified approach based on \emph{flow maps}~\citep{boffibuild, boffiflow, gengmean, sabouralign}, which unifies all operating points along the DP plane within a single model. 
While prior work primarily uses flow maps to accelerate sampling~\citep{gulle2025consistency, sabour2025test, spagnoletti2025latino}, we instead leverage them to \emph{traverse the DP frontier}. 
Our key observation is that the lookahead $t$ selects a point along a continuum of estimators—termed \emph{average denoisers}—implicitly encoded within a single trained flow map, thus acting as a \emph{knob} to control the characteristics of the restored image. 
This perspective reveals DP control as an intrinsic property of the learned flow map, eliminating the need for multiple models, auxiliary networks, or heuristic solver choices.

This family spans the entire DP frontier, recovering the MMSE estimator at one extreme and approaching posterior sampling at the other. 
For Gaussian targets, we prove that this family spans the entire DP frontier, recovering the MMSE estimator at one extreme and approaching posterior sampling at the other, tracing the optimal DP curve~\citep{freirich2021theory}. While this exact theoretical correspondence strictly holds for Gaussian noise problems, we empirically observe similar behavior for natural images.
When extending this approach to general, non-Gaussian inverse problems, traversing the lookahead serves as an approximation to the true DP frontier; here, the lookahead controls a tradeoff that combines perceptual alignment with data-consistency effects, rather than exactly reproducing the denoising DP frontier. By embedding these denoisers within a Plug-and-Play (PnP) framework~\citep{venkatakrishnan2013plug}, we leverage this property to obtain a versatile solver that effectively traverses the DP frontier across a wide range of inverse problems. Fig.~\ref{fig:teaser} illustrates this behavior on a 2D Gaussian mixture model.

Overall, our contributions are:
\begin{itemize}[leftmargin=0.1in]
\setlength\itemsep{-0.05em}
\item \textbf{Flow maps as a continuum of DP estimators.}
We reinterpret flow maps as a family of denoisers indexed by a lookahead parameter $t$, enabling continuous traversal of the DP plane with a single model.

\item \textbf{Exact optimality in the Gaussian case.}
We show that this family recovers the \emph{optimal} DP curve for Gaussian targets, establishing a direct link between flow maps and DP theory.

\item \textbf{PnP reconstruction with continuous control.}
We embed these denoisers into a plug-and-play framework, yielding a unified solver that spans the entire DP plane without retraining, and enables high-perceptual reconstructions at a fraction of the cost of posterior sampling methods.

\item \textbf{Empirical validation.}
On CelebA and AFHQ inverse problems (inpainting, motion deblurring, super-resolution, Gaussian deblurring), our method matches or exceeds specialized baselines at both DP endpoints from a single trained model, and uniquely traces a smooth DP curve in between.
\end{itemize}
% =============================================================================

\begin{figure}[t]
    \centering
    \includegraphics[width=1.0\linewidth]{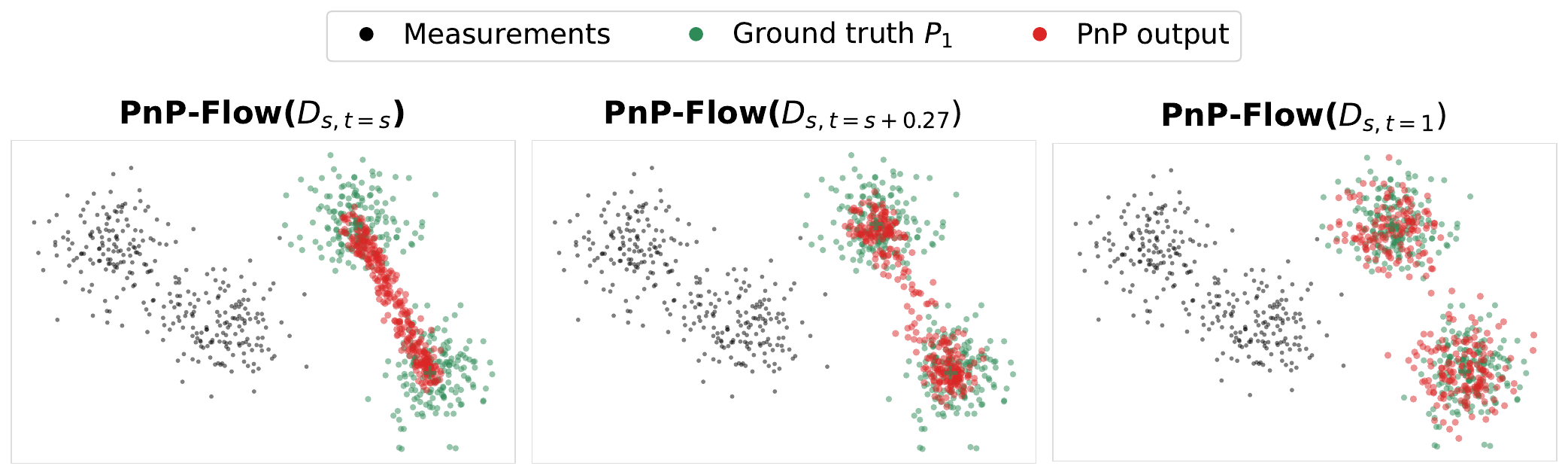}
    \caption{\small{PnP with average denoisers on a 2D mixture-of-Gaussians inverse problem ($\bby = \bbH\bbx + \sigma\bbepsilon$, with $\bbH$ a rotation+scaling operator and $\sigma=0.3$).
    $D_{s,t}$ denotes the average denoiser (defined formally in Section~\ref{sec:avg_denoiser}). 
    The lookahead $t$ controls the DP trade-off: at $t=s$ the denoiser is MMSE and PnP reconstructions concentrate at the posterior mean, which for this symmetric bimodal posterior lies between the two modes, yielding low distortion but poor perceptual quality; as $t\to 1$, the lookahead progressively restores the bimodal structure, recovering samples from both modes at the cost of higher distortion; details of this experiment are in Appendix~\ref{app:teaser_experiment}.}}
    \label{fig:teaser}
    \vspace{-0.2cm}
\end{figure}
\vspace{-0.4cm}
\paragraph{Notation.}
Bold symbols (e.g., $\mathbf{x}$) denote vectors, and bold uppercase symbols (e.g., $\mathbf{H}$) denote matrices. 
Mappings such as the flow map $X_{s,t}$ and denoisers $D_{s,t}$ are written in standard (non-bold) font.
\vspace{-0.4cm}

\section{Background}
% =============================================================================
 
% -----------------------------------------------------------------------------
\subsection{Flow matching and flow maps}
\label{sec:flow_matching}
% -----------------------------------------------------------------------------
 
\paragraph{Stochastic interpolants.}
Stochastic interpolants \citep{albergo2023stochastic, lipmanflow} 
define a family of stochastic processes that interpolate between a 
source distribution $p_0$ (typically $\mathcal{N}(0, \mathbf{I})$) and a target 
distribution $p_1$ (the data distribution):
\begin{equation}
    \bbx_t = \alpha_t \bbx_0 + \beta_t \bbx_1,
    \label{eq:interpolant}
\end{equation}
where $\alpha, \beta: [0,1] \to \mathbb{R}$ are continuously differentiable functions satisfying the boundary conditions $\alpha_0 = 1, \alpha_1 = 0$ and $\beta_0 = 0, \beta_1 = 1$. 
A common choice is the linear interpolant $\alpha_t = 1-t$ and $\beta_t = t$, yielding $\bbx_t = (1-t)\bbx_0 + t \bbx_1$. 
The marginal velocity field $\bbv(\bbx, t) = \mathbb{E}[\dot{\bbx}_t \mid \bbx_t = \bbx]$ yields an ODE $\dot{\bbx}_t = \bbv_t(\bbx_t)$, defining a continuous-time transport map from the source to the target distribution and which can be used to draw samples from $p_1$. 
In practice, a neural network $\bbv_\theta(\bbx, t)$ is trained to approximate $v_t$ via the mean square error loss:
\begin{equation}
\label{eq:dsm_loss}
\mathbb{E}_{t, \bbx_0, \bbx_1} \left[ \left\| \bbv_\theta(\bbx_t, t) - (\dot{\alpha}_t \bbx_0 + \dot{\beta}_t \bbx_1) \right\|_2^2 \right],
\end{equation}
which plays a role analogous to denoising score matching \cite{vincent2011connection} in diffusion models \cite{song2020score}.
For the linear interpolant, the velocity field relates directly to the Minimum Mean Squared Error (MMSE) denoiser -- \emph{instantaneous denoiser} -- via Tweedie's formula~\citep{robbins1992empirical, miyasawa1961empirical}:
\begin{equation}
    D_t(\bbx) := \mathbb{E}[\bbx_1 \mid \bbx_t = \bbx] = \bbx + (1-t)\bbv(\bbx, t).
    \label{eq:inst_denoiser}
\end{equation}
A detailed characterization of denoisers can be found in~\citep{milanfar2025denoising}.
\vspace{-0.3cm}
\paragraph{Flow maps.}
Sampling from $p_1$ using the learned velocity field requires numerical integration of the ODE, typically with many function evaluations. 
Flow maps~\citep{boffibuild, boffiflow, gengmean} bypass this integration by training a model to directly predict the average over a finite interval $[s,t]$:
\begin{equation}
    X_{s,t}(\bbx) = \bbx + (t-s)\,\bbv(\bbx, s, t),
    \label{eq:flow_map}
\end{equation}
where $\bbv(\bbx, s, t)$ is the \emph{average velocity} over the interval $[s,t]$.
A network $v_\theta(x, s, t)$ is trained to predict this average velocity, conditioned on both the current time $s$ and the target time $t$. 
While there are different alternatives for the loss for training flow maps~\cite{boffibuild}, in this work we consider the Lagrangian self-distillation (LSD) given by
\begin{equation}
\label{eq:loss_lsd}
\mathcal{L}_{\mathrm{LSD}}(\hat{v})=\int_0^1 \int_0^t \mathbb{E}_{\bbx_0, \bbx_1}\left[\left\|\partial_t \hat{X}_{s, t}\left(\bbx_s\right)-\bbv_{\theta}\left(\hat{X}_{s, t}\left(\bbx_s\right), t\right)\right\|^2\right] d \mathrm{s} d \mathrm{t}.
\end{equation}
We consider the LSD as it achieved the most stable training in practice and a good visual result; we include further background in Appendix~\ref{app:flowmaps}.
% -----------------------------------------------------------------------------
\subsection{Inverse problems and Plug-and-Play methods}
\label{sec:pnp}

We consider inverse problems of the form
\begin{equation}
    \bby = f(\bbx) + \bbv, 
    \label{eq:forward_model}
\end{equation}
where $f(\cdot)$ is a known (possibly non-linear) degradation operator and $\bbv$ is an additive noise, typically Gaussian $\bbv\sim \ccalN(0,\sigma\bbI)$. Two main families of methods combine flow or diffusion priors with data-consistency updates~\citep{diffusion_survey, chung2025diffusion, zhenginversebench}: guidance-based methods~\citep{zhang2024flow, pokle2024trainingfree, chung2022diffusion, song2022pseudoinverse} and plug-and-play (optimization-based) methods~\citep{ben2024d, martin2025pnp}. 
We focus on the latter and defer a discussion of guidance-based approaches to Appendix~\ref{app:inverse_problems}.

\paragraph{Plug-and-Play methods.}
Plug-and-Play (PnP) algorithms~\citep{venkatakrishnan2013plug, romano2017little} originate from proximal optimization methods for solving inverse problems of the form $\min_\bbx g(\bbx) + h(\bbx)$. In particular, forward--backward splitting (FBS) alternates between a gradient step on the data-fidelity term and a proximal step on the regularizer:
\begin{align}
    \bbz^k &= \bbx^k - \lambda_t \nabla g(\bbx^k), \\
    \bbx^{k+1} &= \prox_{\lambda h}(\bbz^k),
\end{align}
where the proximal operator is defined as $\prox_{\lambda h}(\bbz) := \arg\min_{\bbx} \tfrac{1}{2}\|\bbx - \bbz\|^2 + \lambda h(\bbx)$. 
Computing $\prox_{\lambda h}$ corresponds to solving a denoising problem.

\paragraph{Flow and diffusion models as PnP priors.}
Recent works instantiate this proximal step with a learned denoiser $D_t$, yielding the iteration $\bbx^{k+1} = D_t(\bbz^k)$~\citep{zhu2023denoising, martin2025pnp, mardani2023variational, zilbersteinrepulsive, renaud2024plug, laumont2022bayesian, laumont2023maximum, hu2024restoration}. 
This approach leverages implicit priors learned by generative models while retaining the flexibility of PnP for arbitrary forward operators. 
However, these methods typically optimize distortion, often at the expense of perceptual quality, resulting in over-smoothed reconstructions. 
In Section~\ref{sec:method}, we show that replacing the instantaneous denoiser with an average denoiser associated to the flow-map enables continuous control over this tradeoff and improves perceptual quality.
% -----------------------------------------------------------------------------
\subsection{Distortion-Perception Tradeoff}
\label{sec:pd_tradeoff}
% -----------------------------------------------------------------------------
The standard evaluation of image restoration methods relies on average distortion $D = \mathbb{E}[\Delta(\bbx, \hat{\bbx})]$ (e.g., MSE or LPIPS), which measures the discrepancy between the ground truth $\bbx$ and its estimate $\hat{\bbx}$. 
However, since reconstructions should also appear natural to humans, methods are additionally assessed by perceptual quality. 
In practice, this is approximated by a \emph{perceptual index} $P = d(p_\bbx, p_{\hat{\bbx}})$, measuring the divergence (e.g., Wasserstein distance) between the true and reconstructed distributions, and estimated via surrogates such as FID~\citep{heusel2017gans} or KID~\citep{binkowski2018demystifying}.

\citet{blau2018perception} showed that distortion and perception are fundamentally at odds, defining the \emph{distortion--perception (DP) tradeoff}. 
Formally, the DP function is
\[
D(P) = \min_{p_{\hat{\bbx}|\bby}} \mathbb{E}[ \|\bbx - \hat{\bbx}\|] \quad \text{s.t.} \quad W_2(p_\bbx, p_{\hat{\bbx}}) \leq P,
\]
where we considered the MSE distortion and Wasserstein distance for perception.
For this choice, \citet{freirich2021theory} showed that the optimal curve is $D(P) = D^* + \max\{(P^* - P)^2, 0\}, \quad P \in [0, P^*]$, where $D^* = \mathbb{E}[\|\bbx - \bbx^*\|^2]$ is the MMSE distortion and $P^* = W_2(p_\bbx, p_{\bbx^*})$ is the perceptual gap of the posterior mean $\bbx^* = \mathbb{E}[\bbx|\bby]$. 
The optimal estimator at each perception level is given by the interpolation
\begin{equation}
\label{eq:optimal_pd}
\hat{\bbx}_P = \left(1 - \frac{P}{P^*}\right)\hat{\bbx}_0 + \frac{P}{P^*}\bbx^*,
\end{equation}
where $\hat{\bbx}_0$ is a perfectly perceptual estimator.
This result implies that the entire DP frontier can be recovered by interpolating between two endpoints: the MMSE estimator $\bbx^*$ and a perceptually optimal estimator $\hat{\bbx}_0$. 
However, existing approaches require explicit interpolation between multiple models or task-specific retraining to access different operating points~\citep{delbracio2023inversion, ohayonposterior}. 
In contrast, we seek a \emph{single} model that directly parameterizes this continuum, enabling continuous traversal of the DP plane.
% =============================================================================
\section{Average denoisers: traversing the DP plane}
\label{sec:method}
% =============================================================================
 
In this section we define and analyze the \emph{average denoiser} associated to flow maps and its connection to the DP tradeoff (Section~\ref{sec:avg_denoiser}).

% -----------------------------------------------------------------------------
\subsection{Flow maps as denoisers}
\label{sec:avg_denoiser}

Recall we seek a family of estimators parametrized by a \emph{single model} that interpolates between $P^*$ (lowest distortion) and $P = 0$ (lowest perception).
A natural candidate is the denoiser implicitly defined by the flow map (Section~\ref{sec:flow_matching}). The same object was independently introduced by \citet{lee2026one} for one-step language modeling on discrete data; here we study it in the continuous setting and show it parametrizes the DP frontier.
\begin{definition}[Average denoiser]
    \label{def:avg_denoiser}
    Given a flow map with average velocity $\bbv(., s, t)$, the \emph{average denoiser} (or two-time denoiser) is:
    \begin{equation}
        D_{s,t}(\bbx) := \bbx + (1-s)\bbv(\bbx, s, t).
        \label{eq:avg_denoiser}
    \end{equation}
\end{definition}
This definition has the same form as the instantaneous denoiser~\eqref{eq:inst_denoiser}, but replaces the instantaneous velocity $\bbv(.,s)$ with the average velocity $\bbv(.,s,t)$ over the interval $[s,t]$. 
This modification has an important consequence: the parameter $t$ now controls where $D_{s,t}(\bbx)$ sits on the DP plane, i.e., varying $t$ traces a continuous path between the low-distortion and low-perception extremes.
Importantly, both $s$ and $t$ are inputs of $\bbv_\theta(\mathbf{x}, s, t)$, so a \emph{single trained model} produces a whole family of denoisers with different characteristics, as we show in the following section.

% -----------------------------------------------------------------------------
\subsection{Traversing the DP plane with flow maps}
\label{sec:traversing}

\subsubsection{Warm-up: Gaussian case (1D analysis)}
\label{sec:gaussian}

We start with the Gaussian case, which renders a closed-form expression of the average denoiser in~\eqref{eq:avg_denoiser}.
For clarity, we derive the results in the scalar case ($x \in \mathbb{R}$); all results extend directly to the multivariate setting by applying the analysis component-wise.
Let $p_1 = \ccalN(0, \sigma_p^2)$, with $\sigma_p^2 \leq 1$; following the stochastic interpolant formulation in Section~\ref{sec:flow_matching}, the marginal variance at noise level $t$ is $\sigma_t^2 = t^2 \sigma_p^2 + (1-t)^2$,
and the instantaneous denoiser is \emph{linear}, given by $\mathbb{E}[x_1\mid x_t = x] = A_t x$, with $A_t = t\sigma_p^2/\sigma_t^2$.
Using Tweedie's relationship in~\eqref{eq:inst_denoiser}, we also have a linear instantaneous velocity field $v(x,t) = B_t x$ with $B_t = [t(\sigma_p^2+1) - 1]/\sigma_t^2$.

\paragraph{Flow map and average denoiser.}
The linearity of the instantaneous velocity yields a scalar flow map
$x_t = \Phi(t,s)\,x_s$ with
$\Phi(t,s) = \sigma_t / \sigma_s$, obtained by integrating $\int_s^t B_\tau\,d\tau = \log(\sigma_t/\sigma_s)$ (see Appendix~\ref{app:gaussian}).
Following Definition~\ref{def:avg_denoiser}, the average denoiser reduces to
$D_{s,t}(x_s) = \Lambda(s,t)\,x_s$ where:
\begin{equation}
    \Lambda(s,t) = \frac{(1-s)\sigma_t - (1-t)\sigma_s}
    {\sigma_s(t-s)}.
    \label{eq:Lambda}
\end{equation}
This gain satisfies $\Lambda(s,s) = A_s$ (lowest distortion) and $\Lambda(s,1) = \sigma_p/\sigma_s$ (lowest perception), and is strictly increasing in $t$ (see Appendix~\ref{app:gaussian}).

\paragraph{Optimal DP estimators in the Gaussian case.}
For a Gaussian target, the optimal DP estimator
in~\eqref{eq:optimal_pd} is also linear, given by $\hat{x}_P = \Gamma(\alpha)\, x_s$, with the gain given by
\begin{equation}
    \Gamma(\alpha) = \alpha\, A_s + (1-\alpha)\, \sigma_p/\sigma_s.
    \label{eq:Gamma}
\end{equation}
The parameter $\alpha = P / P^* \in [0,1]$ parametrizes the perception level, interpolating linearly between the MMSE gain $A_s$ and the best perception gain $\sigma_p/\sigma_s$.
Given this particular instance of the DP estimator, we can establish a direct relationship between the flow map and the DP estimator in the Gaussian case:

\begin{theorem}[Exact optimality]
    \label{thm:optimal}
    Let $p_1 = \mathcal{N}(0, \sigma_p^2)$ with $\sigma_p^2 \leq 1$, and
    let $\Lambda(s,t)$ be the average denoiser gain~\eqref{eq:Lambda}
    associated with the true flow map. Then for every $s \in (0,1)$ and
    $t \in [s,1]$, there exists a perception level
    \begin{equation}
        \alpha(s,t) = \frac{\sigma_p/\sigma_s - \Lambda(s,t)}
        {\sigma_p/\sigma_s - A_s} \in [0,1]
    \end{equation}
    such that $\Lambda(s,t) = \Gamma(\alpha(s,t))$. Moreover,
    $\alpha(s,t)$ decreases monotonically from $1$ at $t=s$
    (MMSE, $P = P^*$) to $0$ at $t=1$ (perfect perception, $P=0$).
    Consequently, the average denoiser $D_{s,t}$ traces the optimal
    DP curve exactly as $t$ varies in $[s,1]$.
\end{theorem}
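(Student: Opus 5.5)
The plan is to exploit linearity: every estimator in play is a scalar multiple of $x_s$. It therefore suffices to show that $t\mapsto\Lambda(s,t)$ sweeps the segment $[A_s,\sigma_p/\sigma_s]$ monotonically. This holds because $\Gamma$ in~\eqref{eq:Gamma} is an affine bijection from $[0,1]$ onto that segment, reversing order.

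\textbf{Step 1: invert $\Gamma$.} Since $\Gamma(\alpha)=\sigma_p/\sigma_s-\alpha(\sigma_p/\sigma_s-A_s)$, the equation $\Lambda=\Gamma(\alpha)$ is solved uniquely by the stated $\alpha(s,t)$, provided the denominator $\sigma_p/\sigma_s-A_s$ is nonzero. I will check that it is strictly positive for $s\in(0,1)$. Write
\[
\frac{\sigma_p}{\sigma_s}-A_s=\frac{\sigma_p(\sigma_s-s\sigma_p)}{\sigma_s^2},
\qquad
\sigma_s^2-s^2\sigma_p^2=(1-s)^2>0,
\]
so $\sigma_s>s\sigma_p$ and the denominator is positive.

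\textbf{Step 2: endpoint values.} I will take the facts stated just after~\eqref{eq:Lambda}, namely $\Lambda(s,s)=A_s$ (understood as the limit $t\downarrow s$) and $\Lambda(s,1)=\sigma_p/\sigma_s$. The value at $t=1$ is immediate: $\sigma_1=\sigma_p$, so~\eqref{eq:Lambda} gives $(1-s)\sigma_p/(\sigma_s(1-s))$. The value at $t=s$ follows by L'H\^opital using $\dot\sigma_s=(s\sigma_p^2-(1-s))/\sigma_s$. These give $\alpha(s,s)=1$ and $\alpha(s,1)=0$.

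\textbf{Step 3: monotonicity.} Again I will use the previously stated claim (Appendix~\ref{app:gaussian}) that $\Lambda(s,\cdot)$ is strictly increasing on $[s,1]$. Since $\alpha$ is an affine function of $\Lambda$ with negative slope $-1/(\sigma_p/\sigma_s-A_s)$, $\alpha$ is strictly decreasing in $t$. Combined with Step 2, this gives $\alpha(s,t)\in[0,1]$. If the monotonicity had to be re-derived, this would be the main obstacle. I would write
\[
\Lambda(s,t)=\frac{1}{\sigma_s}\cdot\frac{(1-s)\sigma_t-(1-t)\sigma_s}{t-s},
\]
which up to the factor $1/\sigma_s$ is the slope of the chord from $(s,\sigma_s)$ to $(t,\sigma_t)$ of the function $g(\tau)=\sigma_\tau/(1-\tau)$, scaled by $(1-s)(1-t)$. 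I would then use the convexity of $\sigma_\tau$, which holds because $\sigma_\tau^2$ is a convex quadratic whose square root is a norm of an affine map. If this chord formulation gets messy, I would instead differentiate in $t$ directly and reduce the sign of the derivative to the inequality $\sigma_t-\sigma_s-(t-s)\dot\sigma_t\le 0$, which is exactly convexity of $\sigma$.

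\textbf{Step 4: conclusion.} By~\eqref{eq:optimal_pd} and the result of \citet{freirich2021theory}, each $\Gamma(\alpha)x_s$ with $\alpha\in[0,1]$ is the optimal DP estimator at perception level $P=\alpha P^*$. Since $t\mapsto\alpha(s,t)$ is a continuous, strictly decreasing bijection from $[s,1]$ onto $[0,1]$, $D_{s,t}$ traces the entire optimal DP curve exactly, running from the MMSE point at $t=s$ to perfect perception at $t=1$.
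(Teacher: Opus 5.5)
Your proposal is correct and follows essentially the same route as the paper. Both invert the affine map $\Gamma$, use the endpoint values $\Lambda(s,s)=A_s$ (via L'H\^opital) and $\Lambda(s,1)=\sigma_p/\sigma_s$, and obtain monotonicity by reducing $\partial_t\Lambda(s,t)$ to the sign of $(t-s)\dot\sigma_t-(\sigma_t-\sigma_s)$; the paper settles this via $\ddot\sigma_\tau>0$, as in your fallback. Your explicit check $\sigma_p/\sigma_s-A_s=\sigma_p(\sigma_s-s\sigma_p)/\sigma_s^2>0$ supplies a step the paper only asserts.

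For the \emph{strict} monotonicity used in your Step~4, convexity alone gives only $\le$; you need strict convexity, which holds since $\ddot\sigma_\tau=\sigma_p^2/\sigma_\tau^3>0$. Also, the identity $\Lambda(s,t)=1+\frac{1-s}{\sigma_s}\cdot\frac{\sigma_t-\sigma_s}{t-s}$ turns your chord idea into a one-line argument, avoiding the awkward $t$-dependent factor $(1-t)$ in your $g$-formulation.
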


The proof is deferred to Appendix~\ref{app:gaussian}.
In summary, both $\Lambda(s,t)$ and the optimal DP gain $\Gamma(\alpha)$ are continuous monotonic functions that share the same endpoints: $A_s$ (MMSE, lowest distortion) and $\sigma_p/\sigma_s$ (lowest perception).
Hence, inverting the affine relation $\Gamma(\alpha) = \Lambda(s,t)$ yields a bijection $\alpha(s,t)$ between the lookahead and the perception level, so the average denoiser exactly realizes the optimal DP estimator at every intermediate operating point.
We now extend this insight to more general distributions.

\subsection{Beyond the Gaussian case: image denoising experiments}
\label{sec:learned}

\begin{figure*}[t] % Use figure* if you are in a two-column format so it spans the whole top page
    \centering
    
    % --- Top Row: Three Plots ---
    \begin{subfigure}[b]{0.74\textwidth}
        \centering
        \includegraphics[width=1.01\textwidth]{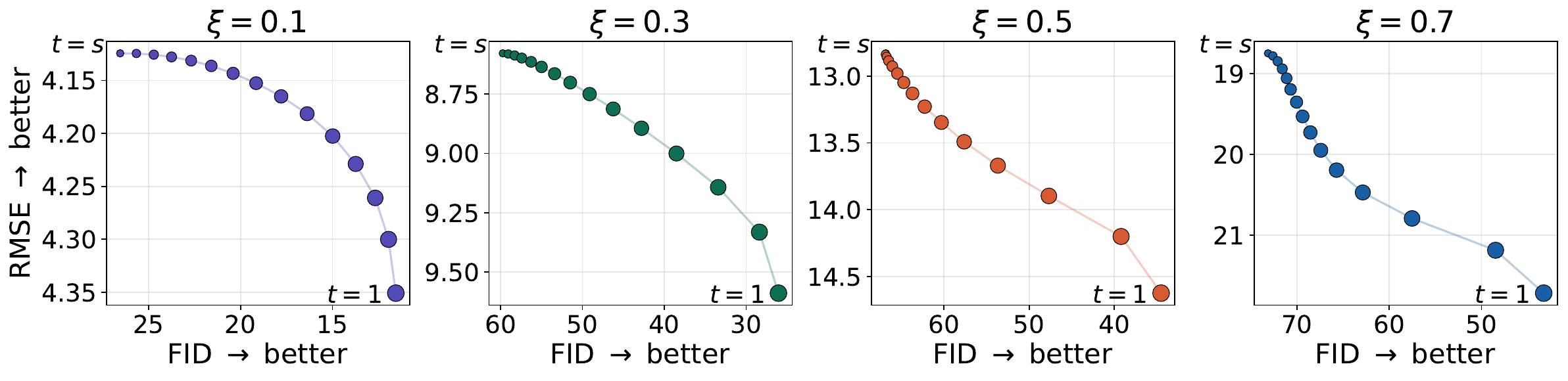}
        \caption{DP tradeoff curves}
        \label{fig:pd_curves}
    \end{subfigure}
    % \vspace{0.5cm} % Space between the top 4 plots and this row
        \begin{subfigure}[b]{0.24\textwidth}
            \centering
            \includegraphics[width=1.01\textwidth]{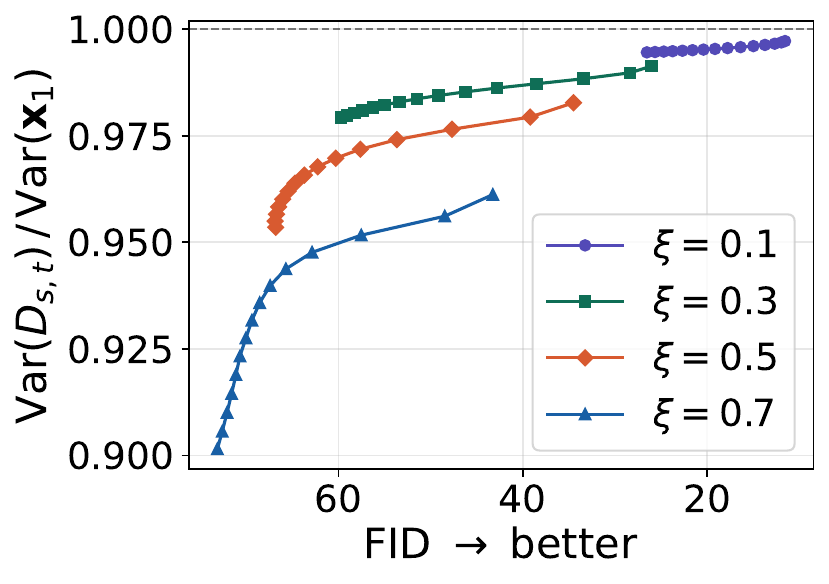}
            \caption{Var. vs FID}
            \label{fig:variance_vs_fid}
        \end{subfigure}

    % --- Main Caption ---
    \caption{{\small Analysis of the perception-distortion DP tradeoff and variance restoration on CelebA $128 \times 128$, quantified as RMSE/Var ratio vs FID. \textbf{(a)} DF tradeoff at different noise levels. Each curve is swept by varying the lookahead parameter $t$ using a single trained model.  \textbf{(b)} The near-collapse of curves across noise levels indicates a universal relationship between variance restoration and perceptual quality (FID) improvement.}}
    \label{fig:combined_pd_results}
    \vspace{-0.7cm}
\end{figure*}

For general image distributions, the average denoiser no longer admits a closed-form characterization. While exact DP optimality is not expected beyond the Gaussian case, the structural insight underlying Theorem~\ref{thm:optimal} should still hold: as $t$ varies from $s$ to $1$, the output of $D_{s,t}$ is expected to interpolate smoothly between the MMSE estimator (low distortion, low perception quality) and a perceptually-aligned estimator (higher distortion, improved perception).
Consequently, we expect $(i)$ smooth, monotonic DP curves parameterized by $t$, and $(ii)$ convex trajectories consistent with the optimal DP function (Section~\ref{sec:pd_tradeoff}), which we verify empirically below. While this section focuses on image data, we further support this intuition for a 2D mixture of Gaussians (MoG) in Appendix~\ref{app:mog_pd}, where the DP curve can be computed explicitly.

\paragraph{Setup.}
We train a flow map network $\bbv_\theta(\bbx, s, t)$ on CelebA $128 \times 128$
(Appendix~\ref{app:exp_training}) and evaluate $D_{s,t}(\bbx_s)$ at multiple lookahead values $t \in [s, 1]$, where $t = s$ recovers the MMSE denoiser $D_s$ and $t = 1$ reaches the endpoint of the flow.
Since the stochastic interpolant~\eqref{eq:interpolant} at time $s$ takes the form $\bbx_s = s\,\bbx_1 + (1-s)\,\bbepsilon$ with $\bbepsilon \sim \ccalN(0, \bbI)$, applying the average denoiser $D_{s,t}$ to $\bbx_s$ corresponds to Gaussian denoising: recovering the clean signal $\bbx_1$ from an observation where the signal is scaled by $s$ and corrupted by noise of standard deviation $1-s$. 
We parameterize the noise level by $\xi := 1 - s \in (0,1)$, where larger $\xi$ means heavier noise.

\paragraph{DP traversal.}
Fig.~\ref{fig:pd_curves} shows the empirical DP curves obtained by sweeping the lookahead $t$ from $s$ (MMSE) to $1$ (full perception) across four noise levels $\xi \in \{0.1, 0.3, 0.5, 0.7\}$. 
As expected, the average denoiser traces smooth, monotone curves in the DP plane: increasing $t$ improves perception (lower FID) at the cost of distortion (higher RMSE). 
The resulting curves exhibit the convex shape predicted by the optimal DP function.
Unlike prior methods that provide only discrete or heuristic control, our approach yields a smooth and continuous parameterization of the DP frontier from a single trained model.

We also observe that the traversal range increases with the noise level. 
For $\xi = 0.1$, the range is narrow since the MMSE estimator is already close to the data distribution, whereas for $\xi = 0.7$, the lookahead spans a wide range of operating points. 
This is consistent with~\citet{freirich2021theory}, as the perception gap $P^*$ grows with noise, allowing a larger tradeoff between distortion and perception. 
Fig.~\ref{fig:visual} illustrates this effect: as $t$ increases from $s$ to $1$, reconstructions transition from smooth averages to sharp, detailed images.

\paragraph{Variance restoration.}
A natural question is \emph{why} increasing the lookahead improves perceptual quality. 
A simple explanation follows from the law of total variance: for any noisy observation $\bbx_s$, we have $\mathrm{Var}(\bbx_1) = \mathrm{Var}\!\big(\mathbb{E}[\bbx_1 \mid \bbx_s]\big) + \mathbb{E}\!\big[\mathrm{Var}(\bbx_1 \mid \bbx_s)\big]$.
Since the second term is strictly positive, the MMSE estimator $D_s(\bbx_s) = \mathbb{E}[\bbx_1 \mid \bbx_s]$ has lower variance than the true data, i.e., $\mathrm{Var}(D_s(\bbx_s)) < \mathrm{Var}(\bbx_1)$. 
This \emph{variance gap} leads to reduced contrast and over-smoothed reconstructions, and is precisely what the optimal DP estimator~\eqref{eq:optimal_pd} corrects via interpolation. 
As the lookahead $t$ increases, the average denoiser progressively restores this variance, effectively traversing the same tradeoff within a single model. 
Fig.~\ref{fig:variance_vs_fid} supports this empirically: the relationship between variance ratio and FID is nearly identical across noise levels, suggesting that variance restoration underlies the observed perceptual improvements.

%-----------------------------------------------------------------------------
\section{Plug-and-play with average denoisers}
\label{sec:pnp_flow_map}

We now embed the average denoiser~\eqref{eq:avg_denoiser} into the PnP framework from Section~\ref{sec:pnp} to solve general inverse problems with continuous DP control.
Importantly, the DP interpretation applies at the level of the denoiser $D_{s,t}$, while the overall PnP solution reflects its interaction with the forward model. As a result, in general inverse problems, the lookahead controls a tradeoff that combines perceptual alignment with data-consistency effects, rather than exactly reproducing the denoising DP frontier.

\paragraph{Algorithm.}
Given a trained flow map $\bbv_\theta(\cdot, s, t)$, observations $\bby$, and a chosen lookahead $t \in [s, 1]$, our method alternates between a gradient descent step on the data-fidelity term $g(\bbx) = \tfrac{1}{2}\|\bby - f(\bbx)\|^2$ and a proximal step utilizing the average denoiser. 
Following~\citep{martin2025pnp}, we include a renoising step prior to applying the denoiser. 
This reprojects the gradient-updated iterate onto the support of the distribution at noise level $s$, ensuring that the average denoiser $D_{s,t}$ processes inputs at its intended operational noise level. 
The complete procedure is summarized in Algorithm~\ref{alg:pnp_fm}.

\begin{algorithm}[t]
\small
\caption{PnP with Flow Map Denoisers}
\label{alg:pnp_fm}
\begin{algorithmic}[1]
\REQUIRE Trained flow map $\bbv_\theta(\cdot, s, t)$, observations $\bby$, forward model $\bbH$, noise levels $\{s_k\}_{k=0}^{K-1}$, lookahead $t_k \in [s_k, 1]$, step sizes $\{\lambda_k\}_{k=0}^{K-1}$, iterations $K$
\STATE Initialize $\bbx^0$
\FOR{$k = 0, \ldots, K-1$}
    \STATE $\bbz^k = \bbx^k - \lambda_k \nabla g(\bbx^k)$ \hfill $\triangleright$ Gradient step (data fidelity)
    \STATE Sample $\epsilon \sim \ccalN(0, \bbI)$
    \STATE $\tilde{\bbz}^k = (1 - s_k)\,\epsilon + s_k\,\bbz^k$ \hfill $\triangleright$ Renoising (reproject to noise level $s$)
    \STATE $\bbx^{k+1} = \tilde{\bbz}^k + (1-s_k)\, \bbv_\theta(\tilde{\bbz}^k, s_k, t_k)$ \hfill $\triangleright$ Flow map denoiser
\ENDFOR
\RETURN $\bbx^K$
\end{algorithmic}
\end{algorithm}

\paragraph{Role of the renoising step.} The stochastic renoising step is what makes the lookahead a DP knob in the inverse-problem setting. With renoising, $D_{s,t}$ operates at its intended noise level $s$, placing it in the regime where Theorem~\ref{thm:optimal} characterizes its behavior, and the injected noise restores the sample diversity required at the perceptual end of the frontier. Without renoising, the iteration reduces to a fixed-point Tikhonov estimator and the lookahead degenerates into a bias-variance knob; we treat this case in Appendix~\ref{app:fixed_point}.

\paragraph{Computational cost and connection to PnP-Flow.}
Algorithm~\ref{alg:pnp_fm} requires one forward pass through $\bbv_\theta$ per iteration.
The lookahead $t$ is merely a conditioning input, adding zero overhead and preserving efficiency over methods requiring ODE backpropagation or trace computations.

%=================================================
\section{Related work}
\label{sec:related}

\paragraph{Consistency models and flow maps for inverse problems.}
Consistency models~\citep{song2023consistency} learn to map any point on a probability flow ODE to its origin, enabling single-step generation. 
Consistency trajectory models~\citep{kimgeneralized} and flow maps~\citep{boffiflow} generalize this to arbitrary pairs $(s,t)$ by learning the ODE solution operator. 
More recent works repurposed these models for controlled generation, either as guided samplers~\citep{sabour2025test, spagnoletti2025latino, garber2025zero, holderrieth2026diamond}, as priors~\citep{gulle2025consistency}, or for optimizing the noise initialization~\citep{mammadov2026variational}.
Most recently, and concurrently to this work, \citet{huang2026guide} also leverage flow maps for guidance, formulating reward-based generation as an optimal control problem and using the flow map for fast guided sampling.
However, all these approaches primarily target accelerated sampling at a fixed operating point and do not provide explicit control over the restoration tradeoff.
In contrast, we treat the lookahead $t$ as a continuous parameter, turning a single flow map into a family of estimators that spans the entire distortion–perception (DP) frontier.

\paragraph{Methods that traverse the DP plane.}
Recent approaches explore distortion--perception control by learning data-dependent flows between degraded and clean distributions. Inversion by Direct Iteration (InDI)~\citep{delbracio2023inversion} and stochastic interpolants~\citep{interpolant} learn flows from measurements to clean images, while Posterior-Mean Rectified Flow (PMRF)~\citep{ohayonposterior} replaces the source distribution with an approximation of the posterior mean. These methods provide inference-time mechanisms for traversing the DP plane, either through integration schedules or through a scalar parameter that controls the interpolation between posterior-mean and perceptual reconstructions. 
However, they rely on supervised training with paired dat $(\bby, \bbx)$ and, in the case of PMRF, require an auxiliary model to estimate the posterior mean. 
Moreover, extending such approaches to handle different degradations generally requires collecting paired training data across those operators and scaling model capacity accordingly. In contrast, our approach learns a single unconditional flow map from clean images alone, whose lookahead parameter naturally induces a continuous family of denoisers. As a result, DP control is obtained from a single model without paired supervision, auxiliary networks, or retraining, and extends naturally to arbitrary forward operators through Plug-and-Play inference.
%s=============================================================================
\section{Numerical experiments}
\label{sec:experiments}
% =============================================================================
We now focus on the numerical results.
Throughout the experiments, we aim to $(i)$ demonstrate competitive performance with state-of-the-art methods, and $(ii)$ show that our method uniquely enables continuous traversal of the DP plane with a single model.

\begin{table*}[t]
\centering
\caption{\small{Quantitative results for inpainting with random mask ($90\%$) w/$\sigma = 0.01$ and motion deblurring w/$\sigma = 0.05$ across the CelebA and AFHQ datasets. Best results are in \textbf{bold}; second best \underline{underline}.
}}  
\label{table:inv_problems}
\setlength{\tabcolsep}{2pt} % Tightens column spacing to fit side-by-side
\scalebox{0.62}{
    \begin{tabular}{@{} r cccc cccc @{\hspace{15pt}} cccc cccc @{}}
        \toprule
        \multirow{3}{*}{\textbf{Sampler}} & \multicolumn{8}{c}{\textbf{CelebA $128\times 128$}} & \multicolumn{8}{c}{\textbf{AFHQ $256\times 256$}}\\
        \cmidrule(r){2-9} \cmidrule(l){10-17}
        & \multicolumn{4}{c}{Inpainting} & \multicolumn{4}{c}{Motion Deblurring} & \multicolumn{4}{c}{Inpainting} & \multicolumn{4}{c}{Motion Deblurring} \\
        \cmidrule(lr){2-5} \cmidrule(lr){6-9} \cmidrule(lr){10-13} \cmidrule(lr){14-17}
        & \textbf{PSNR$\uparrow$} & \textbf{LPIPS$\downarrow$} & \textbf{FID$\downarrow$} & \textbf{DISTS$\downarrow$} &\textbf{PSNR$\uparrow$} & \textbf{LPIPS$\downarrow$} & \textbf{FID$\downarrow$} & \textbf{DISTS$\downarrow$}  &\textbf{PSNR$\uparrow$} & \textbf{LPIPS$\downarrow$} & \textbf{FID$\downarrow$} & \textbf{DISTS$\downarrow$} &\textbf{PSNR$\uparrow$} & \textbf{LPIPS$\downarrow$} & \textbf{FID$\downarrow$} & \textbf{DISTS$\downarrow$}  \\
        \midrule
        Flow-priors & 26.50 & 0.089 & 59.43 & 0.132& 
        25.07& 0.151 & 114.41 & 0.188 & 
        \underline{23.94} & 0.193& 27.22 & 0.164 &
        23.14 & \textbf{0.207} & 18.97 & \underline{0.154}\\
        D-Flow & 
        \underline{26.55} & \underline{0.066} & 50.27 & {0.110} &
        \underline{29.77} & \textbf{0.071} & 73.15 & \textbf{0.112}& 
        22.80&0.192& 18.31 &0.137&
        \textbf{25.52} & \underline{0.230} & \underline{19.92} & 0.156 \\
        OT-ODE & 22.16 & 0.158 & 84.92 & 0.179 & 
        21.07 & {0.198} & 106.78 & {0.202} & 
        21.14 & 0.263 & 16.03 & 0.181 & 
        16.73 & 0.348 & 77.53 & 0.234\\
        DPS-ODE &
        25.60 &0.070 & \textbf{45.74} & \textbf{0.106} &
        27.36& {0.101} & 57.02 & {0.133}& 
        23.08 &\underline{0.127} & \textbf{8.69} & \underline{0.098} & 
        20.49 & 0.292 & 36.71 & 0.179\\
        % DPS-ODE  &&&&&&&&\\
        \midrule 
        PnP-Flow ($t = s$)  & 
        \textbf{27.29} & 0.073 & 48.88 & 0.113& 
        \textbf{29.79} & 0.114 & \underline{53.65} & 0.130 & 
        \textbf{24.99} & 0.162 & 13.49 & 0.135 &
        \underline{25.40} & {0.347} & {20.37} & {0.201}\\
        PnP-Flow ($t = 1$) & 
        26.03 & \textbf{0.065} & \underline{46.79} & \underline{0.109} & 
        {29.42} & \underline{0.082} & \textbf{52.61} & \underline{0.119} & 
        23.23 & \textbf{0.122} & \underline{9.11} & \textbf{0.095} & 
        23.31 & \textbf{0.207} & \textbf{14.04} & \textbf{0.137} \\
        \bottomrule
    \end{tabular}
}
\vspace{-0.2cm}
\end{table*}

\paragraph{Setup.}
We evaluate all methods in two datasets: Celeba $128\times128$ and AFHQ $256 \times 256$; we also consider AFHQ-cats $192\times 192$.
For each dataset, we train a flow map model $v_\theta(\bbx, s, t)$, parametrized by the Song UNet architecture~\citep{song2020denoising, ronneberger2015u}, from scratch using the off-diagonal loss of LSD described in~\eqref{eq:loss_lsd}.
More training details can be found in Appendix~\ref{app:training_details}.
We assess the reconstruction quality of the different samplers with mean squared error (in terms of $\mathrm{PSNR} = -10\log_{10} \mathrm{MSE}$) and two perceptual metrics, LPIPS~\citep{zhang2018unreasonable} and DISTS~\citep{ding2020image}.
In addition, we compute the FID~\cite{heusel2017gans} and KID~\cite{binkowski2018demystifying} between ensembles of test set images and posterior samples.
We compute the average of 100 samples, and for all the methods we consider Gaussian noise with $\sigma = 0.05$, unless random inpainting with $\sigma = 0.01$; also, in Appendix~\ref{app:poisson} we include an experiment with Poisson noise.

We compared with recent methods that leverage pre-trained flow models, namely OT-ODE~\citep{pokle2024trainingfree}, Flow Priors~\citep{zhang2024flow}, D-Flow~\citep{ben2024d}, DPS-ODE~\citep{chung2022diffusion} and PnP-Flow~\citep{martin2025pnp}, which corresponds to our proposed method for $t = s$.
For all the models, we performed a search grid of hyperparameters to obtain the best performance possible, and we use the same trained network for the prior/regularizer; we defer to Appendix~\ref{app:hyperparamters} for further details.
We also include a comparison with baselines using a flow matching model as prior (a model trained without the second loss in~\eqref{eq:loss_lsd}), which can be found in App.~\ref{app:flow_matching_comparison}.

\subsection{Inverse problems}
We evaluate on the following tasks: $(i)$ motion deblurring with random kernel genearted with the code\footnote{https://github.com/LeviBorodenko/motionblur} and size $31\times31$ (CelebA) and $61\times61$ (AFHQ) and $\sigma_b = 1$; $(ii)$ random inpainting with $90\%$ missing pixels; $(iii)$ super-resolution ($\times 4$); and $(iv)$ Gaussian deblurring, which are deferred to Appendix~\ref{app:additional}.

Quantitative results are reported in Table~\ref{table:inv_problems} (with error bars in Appendix~\ref{app:error_bars}), and visual results in Appendix~\ref{app:visual_results} and Fig.~\ref{fig:main_visual}.
PnP-Flow with $t = s$ achieves the best or near-best PSNR across all four (dataset, task) settings, confirming that the average denoiser at small lookaheads behaves as a low-distortion estimator.
At the other extreme, PnP-Flow with $t=1$ achieves the best or near-best  LPIPS DISTS and FID on every task; on inpainting, it matches DPS-ODE on FID while substantially improving LPIPS and DISTS. 
Competing methods either commit to a single operating point (DPS-ODE favors perception, D-Flow favors distortion) or fall short on both, whereas our approach attains both regimes from a single trained network. Beyond accuracy, PnP-Flow is also the fastest of the methods compared, as shown in Appendix~\ref{app:running_time}.
Lastly, we note that PnP-Flow with $t=1$ is not uniformly the best perceptual method in all tasks. In particular, on Gaussian deblurring and super-resolution (in Appendix~\ref{app:additional}), some baselines achieve better perceptual metrics. Nevertheless, increasing the lookahead from $t=s$ to $t=1$ consistently shifts PnP-Flow toward the perceptual end of the DP spectrum, substantially improving FID, LPIPS, and DISTS relative to the distortion-oriented regime. We therefore view the main contribution not as identifying a universally optimal perceptual operating point, but as providing a principled mechanism for continuously traversing the DP tradeoff with a single trained model (see Appendix~\ref{app:failures_limitations} for further discussion).
% \nz{We should include something for this "The paper provides a visual comparison with competing methods only from page 32 in the appendix. Given that this is an image restoration paper and perceptual metrics are not always reliable proxies for human perception (see the next point), this is problematic. Specifically, based on results such as Gaussian blur and SR on AFHQ, PnP-Flow with does not appear to be the best perceptual method. This trend is evident across additional tasks and examples. While this does not render the paper's contribution null, it significantly weakens the empirical claims and should be properly acknowledged in the paper."}

\subsection{Analysis of the DP tradeoff}
\label{subsec:dp_exp}

\begin{figure}[t]
    \centering
    \includegraphics[width=1\linewidth]{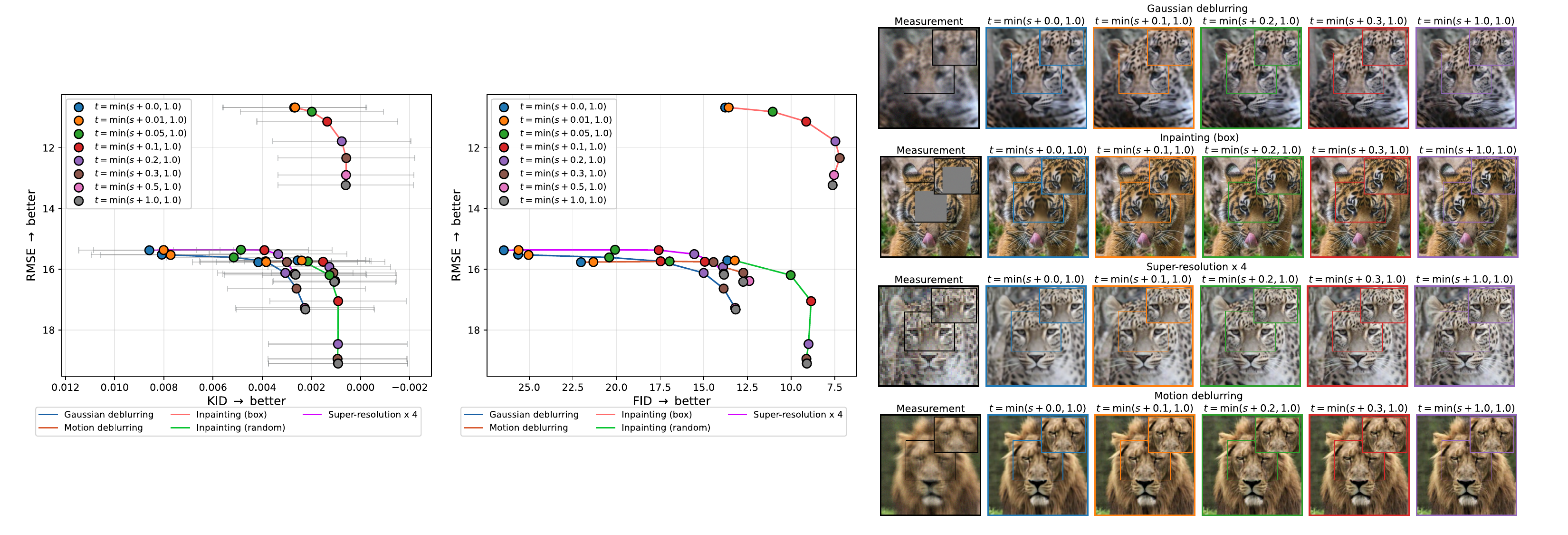}
    \vspace{-0.4cm}
    \caption{\small{Across five linear inverse problems on AFHQ (Gaussian/motion deblurring, box/random inpainting, SR$\times4$), we sweep the lookahead $t$ within a PnP solver and report RMSE (distortion) versus FID and KID (perception). Each marker corresponds to a value of $t$, with small lookaheads favoring low distortion and large ones favoring low FID. A single trained model spans the full frontier across all tasks, without retraining or guidance tuning; reconstructions on the right illustrate the perceptual change as $t$ increases (larger version in Fig.~\ref{fig:trajectory_afhq}).}}
    \label{fig:DP_exp}
    \vspace{-0.4cm}
\end{figure}

We now analyze how the lookahead parameter $t$ controls the characteristics of our reconstructions by moving in DP plane.
We evaluate the average denoiser $\{D_{s,t}\}_t$ at multiple lookahead values within a PnP solver and report RMSE (distortion) against FID (perception); for box inpainting, we use a mask of size $80\times80$.

\begin{wrapfigure}{r}{0.45\linewidth}
    % \vspace{-10pt}
    \centering
    \includegraphics[width=\linewidth]{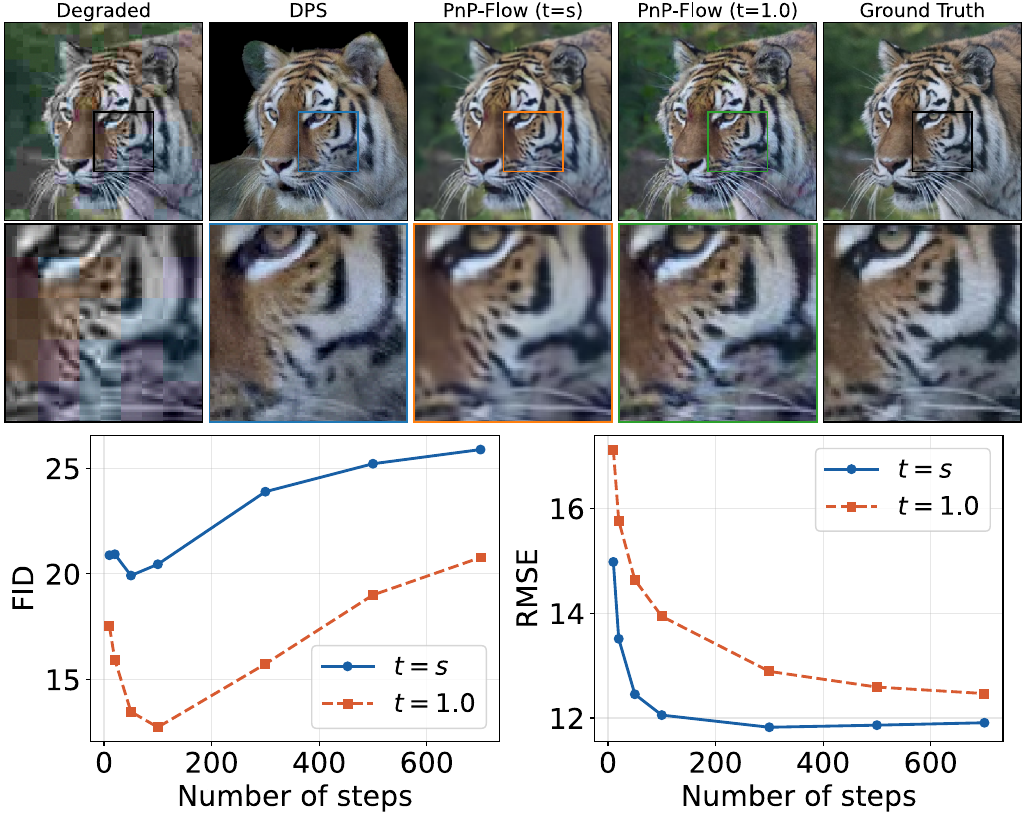}
    \caption{DP behavior on JPEG compression.}
    \label{fig:dp_jpeg}
    % \vspace{-80pt}
\end{wrapfigure}

Results are in in Fig.~\ref{fig:DP_exp}, and three observations stand out.
First, varying $t$ produces a smooth, (almost) monotonic trajectory in the DP plane: small lookaheads yields minimum RMSE, large lookaheads minimum FID, and intermediate values smoothly interpolate between the two.
This empirically confirms our analysis from Section~\ref{sec:gaussian} and shows that the behavior extends to natural images and general degradations.
Second, the same qualitative trend appears across all five inverse problems — Gaussian deblurring, motion deblurring, random/box inpainting, and super-resolution — despite the very different forward operators, suggesting that the DP control given by $t$ is degradation-agnostic.
Third, no single $t$ is universally best: the optimal lookahead depends on the user's preference between distortion and perception, which is precisely the use case our framework is designed to support.

\paragraph{Non-linear inverse problems.}
Finally, we evaluate the DP tradeoff on JPEG compression, a non-linear degradation, with quantization factor 10.
We study the effect of the number of PnP iterations; the results are shown in Fig.~\ref{fig:dp_jpeg} for AFHQ; see Appendix~\ref{app:nonlinear} for comparison with baselines.
PnP with flow maps consistently achieves better perceptual quality. However, increasing the number of iterations eventually degrades FID, consistent with observations in other inverse problems.

\begin{figure}[t] % Use figure* if you are in a two-column format so it spans the whole top page
    \centering
    
    % --- Top Row: Three Plots ---
    \begin{subfigure}[b]{0.47\textwidth}
        \centering
        \includegraphics[width=1\textwidth]{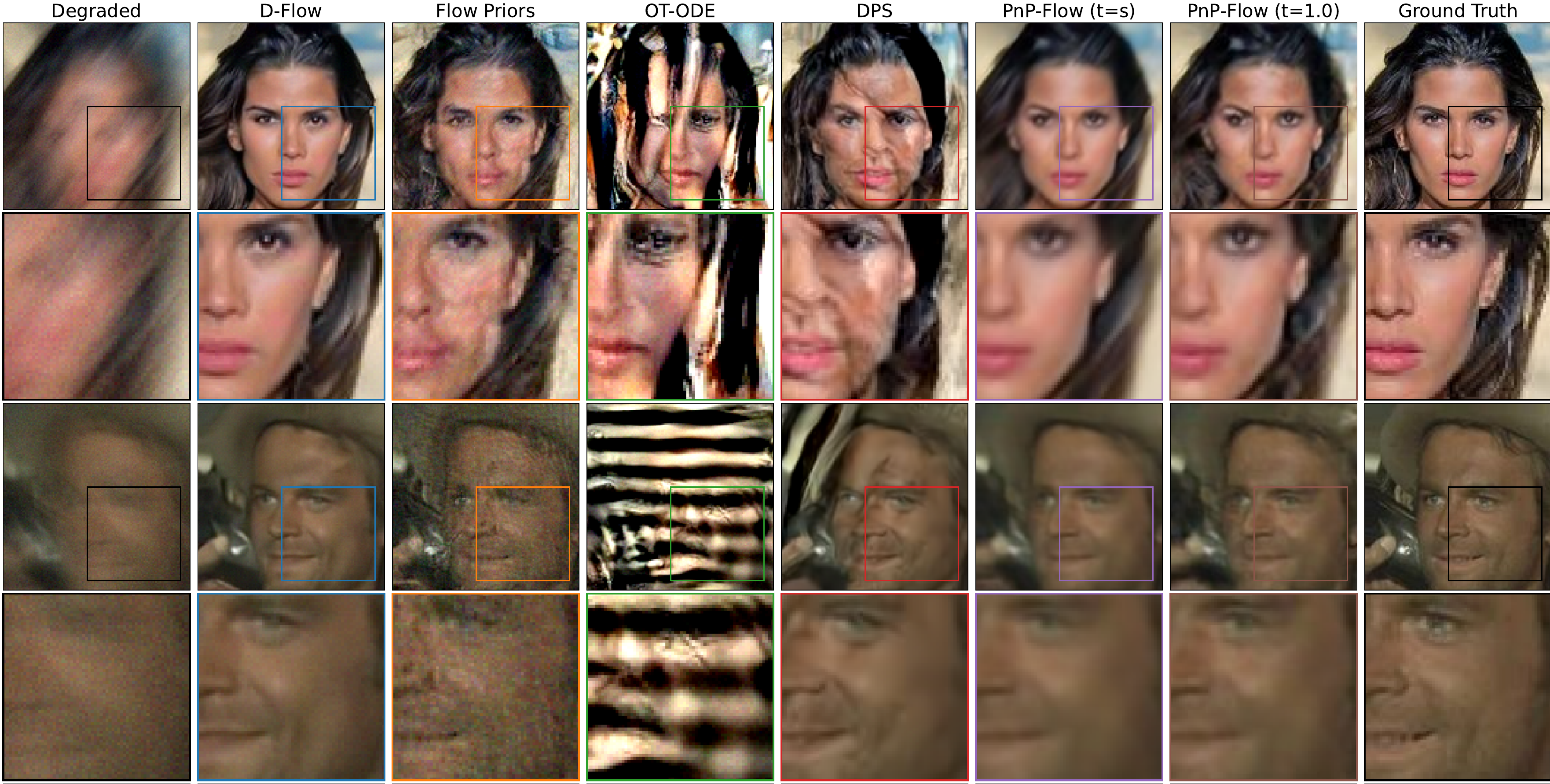}
        \caption{Motion deblurring}
        \label{fig:motion}
    \end{subfigure}
    % \vspace{0.5cm} % Space between the top 4 plots and this row
        \begin{subfigure}[b]{0.47\textwidth}
            \centering
            \includegraphics[width=1\textwidth]{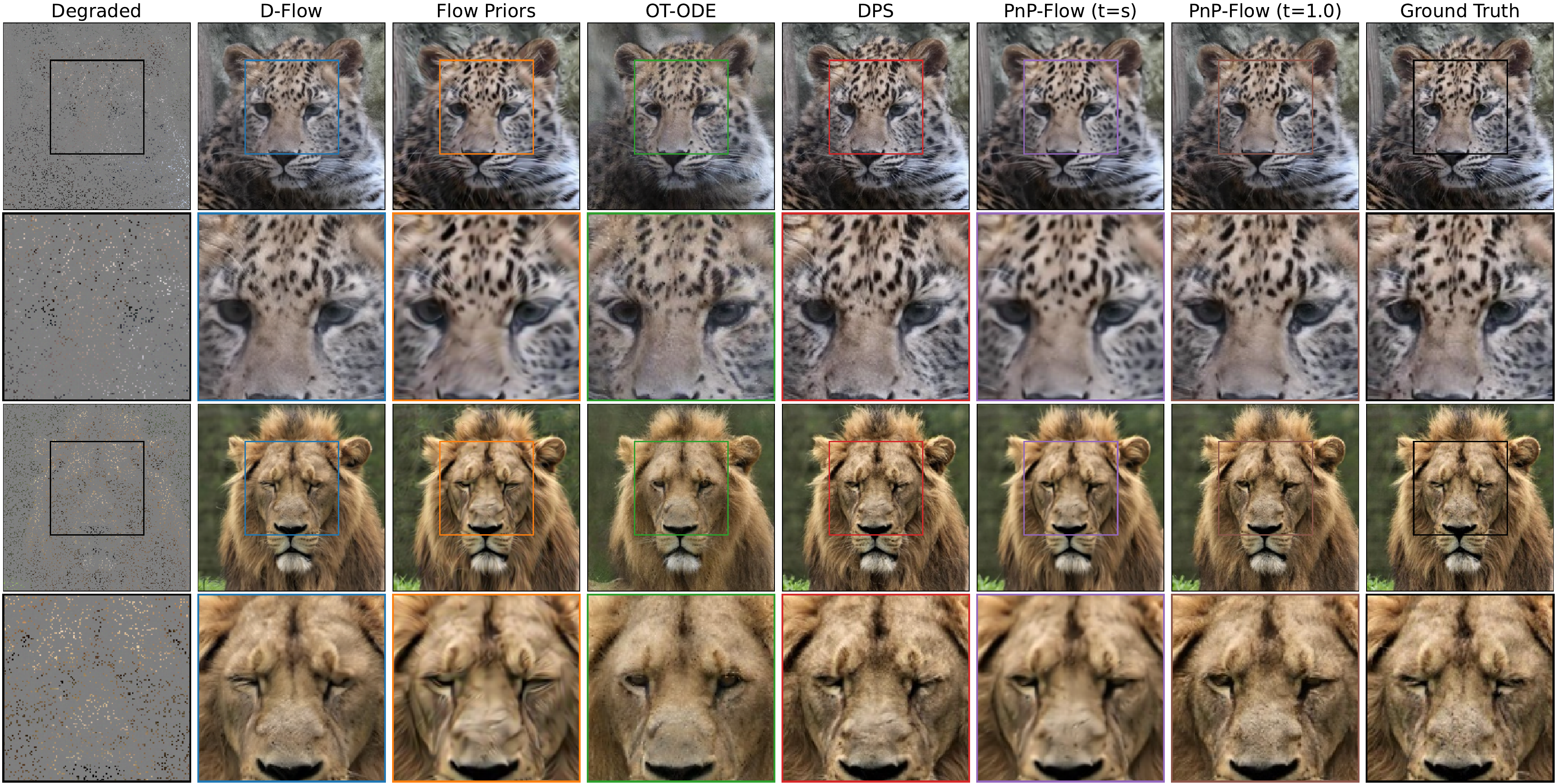}
            \caption{Random inpainting}
            \label{fig:random_inp}
        \end{subfigure}

    % --- Main Caption ---
    \caption{{\small Results on CelebA and AFHQ. PnP-flow with $t=1$ generates sharper results than with $t=s$, and closer to posterior sampling methods.}}
    \label{fig:main_visual}
    % \vspace{-0.7cm}
\end{figure}

\section{Discussion and limitations}
\label{sec:conclusions}
\vspace{-0.1cm}
We introduced flow map denoisers, a generalization of the standard instantaneous denoiser that enables continuous, zero-shot traversal of the DP tradeoff. By reinterpreting flow maps as a continuum of average denoisers indexed by a lookahead parameter $t$, we link flow maps to the theory of optimal DP estimators. This traversal is provably optimal for Gaussian targets and empirically validated for natural images. Embedded within a Plug-and-Play (PnP) solver, our approach enables inference-time control over restoration quality across diverse degradations, without retraining, paired data, or auxiliary models.

Our approach has several limitations.
First, the formal optimality result is restricted to the Gaussian case. While the image experiments in Section~\ref{sec:learned} suggest similar behavior more broadly, a general theoretical characterization remains open. In particular, understanding failure modes is an important direction for future work, as certain degradations can lead to non-monotonic DP behavior, like box inpainting in Fig~\ref{fig:DP_exp}.
Second, flow map models are more expensive to train than standard flow matching models and may underperform on pure distortion metrics such as PSNR.
Third, while increasing the lookahead parameter consistently moves reconstructions toward the perceptual end of the DP spectrum, the resulting endpoint ($t=1$) is not always the best perceptual method. 
This suggests that the lookahead parameter should primarily be viewed as a mechanism for traversing the DP tradeoff rather than as a way of reaching the optimal perceptual operating point for every restoration task.

The perspective in this work opens several directions for future work, borrowing ideas from the denoiser literature \citep{milanfar2025denoising}. 
Average denoisers could be trained directly via denoising objectives, such as SURE-based/consistency losses \citep{raphan2011least, darasambient, tachella2024unsure}.
Extending the construction to more general noising processes, such as general restoration priors~\citep{hu2024restoration, elata2025invfussion}, is an additional natural next step.
Lastly, our experiments are limited to pixel-space models (CelebA 128, AFHQ 256); extending the framework to latent-space and text-to-image settings is an important direction.

\section*{Acknowledgement}
This work was supported in part by the NVIDIA Academic Grants Program.
The research was sponsored by the National Science Foundation (CCF 2340481).

\bibliography{citations}
\bibliographystyle{apalike}
% =============================================================================
% APPENDIX
% =============================================================================
\newpage
% \etocsetnexttocdepth{subsection}
\appendix

\section*{Appendix}
% \addcontentsline{toc}{section}{Appendix}

% % Only show appendix entries in TOC
% \etocsettocdepth{subsection}

% % \etocsetnexttocdepth{subsection}

% % Filter: only sections after Appendix
% \etocsetnexttocdepth{subsection}
% \localtableofcontents

\section{Background}

\subsection{Flow Maps}
\label{app:flowmaps}

Consider a probability flow defined by the ordinary differential equation
\[
\dot{\bbx}_t = \bbv_t(\bbx_t), \quad \bbx_0 \sim p_0,
\]
which transports samples from a simple distribution $p_0$ to a target distribution $p_1$. 
The associated \emph{flow map} $\bbX_{s,t} : \mathbb{R}^d \to \mathbb{R}^d$ is defined as the solution operator of this ODE:
\[
X_{s,t}(\bbx_s) = \bbx_t,
\]
i.e., it maps a point at time $s$ to its position at time $t$ along the flow trajectory.
In particular, $X_{0,1}$ directly maps samples from $p_0$ to $p_1$, enabling one-step generation without numerical integration.
A convenient parameterization of the flow map is given by
\[
X_{s,t}(\bbx) = \bbx + (t - s)\,\bbv(\bbx, s, t),
\]
where $\bbv_{s,t}$ can be interpreted as the average velocity between times $s$ and $t$. 
In the limit $t \to s$, this recovers the instantaneous velocity field:
\[
\lim_{s \to t} \partial_t X_{s,t}(\bbx) = \bbv(\bbx, t),
\]
which implies that $\bbv(\bbx, t, t) = \bbv(\bbx, t)$.

\paragraph{Characterization of the flow map.}
The flow map can be equivalently characterized in three ways:

\begin{itemize}
\item \textbf{Lagrangian form:}
\[
\partial_t X_{s,t}(\bbx) = \bbv_t(X_{s,t}(\bbx)),
\]
which describes evolution along trajectories;

\item \textbf{Eulerian form:}
\[
\partial_s X_{s,t}(\bbx) + \nabla X_{s,t}(\bbx)\,\bbv_s(\bbx) = 0,
\]
which expresses conservation along the flow;

\item \textbf{Semigroup property:}
\[
X_{s,t}(\bbx) = X_{u,t}(X_{s,u}(\bbx)),
\]
which encodes consistency across time intervals.
\end{itemize}

These equivalent formulations provide the basis for learning flow maps in practice~\citep{boffibuild}.

\paragraph{Self-distillation framework.}
Instead of learning the instantaneous velocity field $\bbv(.,t)$ and integrating the ODE, flow map models directly learn $\bbv(.,s,t)$ by combining:

\begin{itemize}
\item the \emph{diagonal loss} in~\eqref{eq:dsm_loss} enforcing $\bbv(\bbx,t,t) \approx \bbv(\bbx, t)$,
\item an \emph{off-diagonal loss} enforcing consistency with one of the above three characterizations.
\end{itemize}

Notice that throughout this work we used the LSD loss in~\eqref{eq:loss_lsd}, which is the one associated to the Lagrangian condition.
This approach, known as \emph{self-distillation}, allows training a single model that approximates the full flow map $X_{s,t}$ without requiring a pre-trained teacher model.

\subsection{Inverse problems with flow/diffusion priors}
\label{app:inverse_problems}

While in this work we focus on flow-based models, there is a large body of works using diffusion priors, which are very related to our proposed method.
We now summarize some of this early works.

These methods generate a sample from the posterior by running the reverse process using conditional score at $t$ obtained via Bayes' rule as
\begin{equation}\label{eq:score_post}
    \nabla_{\bbx_t}\log p(\bbx_t|\bby) =  \nabla_{\bbx_t}\log p(\bby|\bbx_t) +  \nabla_{\bbx_t}\log p(\bbx_t).
\end{equation}
While the second term uses a pre-trained diffusion model, the first is intractable, as seen from $p(\bby|\bbx_t) = \int p(\bby|\bbx_0)p(\bbx_0|\bbx_t)\text{d}\bbx_0$. Prior works~\citep{chung2022diffusion,song2022pseudoinverse,kadkhodaie2021stochastic,song2023loss} address this with a Gaussian approximation of $p(\bbx_0|\bbx_t)$ using Tweedie's formula $\mathbb{E}[\bbx_0|\bbx_t] = \frac{1}{\alpha_t}\left(\bbx_t - \sigma_t \bbepsilon_{\bbtheta}(\bbx_t, t)\right)$.
Still, this requires the computation of the \emph{score Jacobian}, which is computationally expensive, especially for pixel-based models at high-resolution. 
In fact, this approximation is the one used for DPS-ODE in Section~\ref{sec:experiments}.

Beyond guidance and PnP/optimization-based techniques, there is a body of works using sequential monte carlo (SMC)~\citep{wu2023practical}.
In a nutshell, these methods leverage particle-based techniques, which allows to target the exact posterior in the particle limit.

\section{Implementation}
\label{app:exp_training}

\subsection{Training details}
\label{app:training_details}

All models are trained unconditionally using the Flow Map objective~\citep{boffibuild} with the LSD loss~\eqref{eq:loss_lsd}.

\paragraph{Architecture.}
We use the SongUNet~\cite{song2020score} -- a UNet with self-attention -- with a base channel width of 128 and channel multipliers $[1, 2, 3, 4]$, giving four resolution scales. 
Self-attention is applied at spatial resolutions $16\times16$ and $8\times8$.  
Each resolution level contains 3 residual blocks, dropout rate $0.1$, and a cosine resampling filter $[1,3,3,1]$.  

\paragraph{Time conditioning.}
The network is conditioned on two scalars $(\alpha_r, \alpha_t) \in [0,1]^2$ representing the interpolation times of the source and target.
Both are encoded with a \emph{positional (sinusoidal) embedding}.
Following~\citet{boffibuild}, the two conditioning scalars are embedded as $\mathrm{PE}(\alpha_t - \alpha_r)$ and $\mathrm{PE}(\alpha_r)$, summed, and passed through a two-layer MLP with SiLU activations to produce a global conditioning vector injected into every residual block via adaptive group normalisation.

\paragraph{Loss weighting.}
For the off-diagonal term, we sample $s, t$ uniformly from $\{(s,t) : 0 < s < t \leq 1\}$ as detailed in~\citet{boffibuild}.
We follow~\cite{karras2024analyzing} and learn a per-sample uncertainty weight for the LSD loss, we attach a small network that outputs a scalar log-variance $\log\sigma^2(\alpha_r, \alpha_t) \in \mathbb{R}$ per training example.
The network takes the concatenated pair $[\alpha_r, \alpha_t] \in
\mathbb{R}^{B\times 2\times H\times W}$ and generate a scalar used to weight the loss as $\mathcal{L} \propto e^{-\log\sigma^2}\,\ell + \log\sigma^2$.

\paragraph{Datasets and resolution.}
\begin{itemize}[noitemsep]
  \item \textbf{CelebA-HQ $128\!\times\!128$}~\cite{liu2015faceattributes}:
        202{,}599 images; 90\% train split.
  % \item \textbf{AFHQ-Cat $192\!\times\!192$}~\cite{choi2020starganv2}:
  %       5{,}653 images; 90\% train split.
  \item \textbf{AFHQ $256\!\times\!256$}~\cite{choi2020starganv2}:
        16{,}130 images; 90\% train split.
\end{itemize}

During training, 75\% of the batch is used for the on-diagonal loss, while the remaining 25\% for the off-diagonal one.
Augmentation (horizontal flip, rotation, scaling, anisotropic
scaling, translation) is applied with probability $p=0.12$.

\paragraph{Optimization.}
All models are optimized with Adam~\citep{kingma2014adam} ($\beta_1=0.9$, $\beta_2=0.95$,
$\varepsilon=10^{-8}$) at a learning rate of $2\times10^{-4}$ with no
learning-rate schedule. 

\paragraph{Compute.}
For training, we use 8 NVIDIA GPUs A100, while for inference we use a single GPU for all methods.

\begin{table}[H]
\centering
\caption{Per-run training configuration.}
\small
\begin{tabular}{lcccc}
\toprule
Dataset & Resolution & Batch size & GPUs & Days\\
\midrule
CelebA & $128\!\times\!128$ & 128 & 8 & 10 ($\approx 700$k iterations)\\
% AFHQ-Cat  & $192\!\times\!192$ & 64  & 8 & \\
AFHQ      & $256\!\times\!256$ & 32  & 8 & 9 ($\approx 580$k iterations)\\
\bottomrule
\end{tabular}
\label{tab:training_config}
\end{table}

\subsection{Hyperparameters and baselines}
\label{app:hyperparamters}

For all baselines, unless otherwise specified, we use $\sigma = 0.05$ for Gaussian/motion deblurring and super-resolution, and $\sigma = 0.01$ for inpainting. 
Blur kernel size is set to 61 for AFHQ and 31 for CelebA.

\paragraph{Flow-Priors~\citep{zhang2024flow}.}
We follow the original parameterization with guidance weight $\lambda$ and step size $\eta$. We use $\lambda = 10^{5}$, $\eta = 10^{-2}$, and $K = 1$ inner iteration in all settings. 
The number of outer iterations is fixed to $N = 100$.

\paragraph{D-Flow~\citep{ben2024d}.}
We tune the regularization parameter $\lambda \in \{0.1, 0.01, 0.001\}$ and the initialization blending parameter $\alpha \in \{0.1, 0.3, 0.5\}$, retaining $\lambda = 0.01$ and $\alpha = 0.1$. As recommended by the authors, the inner LBFGS optimization runs for $20$ iterations and the ODE is solved with $6$ Euler steps. The number of outer iterations is fixed to $20$.

\paragraph{OT-ODE~\citep{pokle2024trainingfree}.}
We tune the initial time $t_0 \in \{0.1, 0.2, 0.3, 0.4\}$ and the step-size schedule $\gamma \in \{\text{constant}, \sqrt{t}\}$. We use $\gamma = \text{constant}$ throughout. The retained values are $t_0 = 0.1$ for inpainting, $t_0 = 0.3$ for Gaussian deblurring and super-resolution, and $t_0 = 0.4$ for motion deblurring on AFHQ. The number of iterations is fixed to $N = 100$.

\paragraph{DPS-ODE~\citep{chung2022diffusion}.}
This methods corresponds to the ODE (deterministic) version of the original DPS designed for diffusion models.
We tune the guidance weight $\eta \in \{10^{2}, 10^{3}, 10^{4}\}$, retaining $\eta = 10^{3}$ as default, $\eta = 10^{2}$ on CelebA box inpainting, and $\eta = 10^{2}$ on AFHQ Gaussian deblurring. 
The number of iterations is fixed to $N = 100$.

\paragraph{PnP-Flow w/ $t=s$~\citep{martin2025pnp} and w/ $t=1$ (ours).}
We adjust the learning-rate exponent $\alpha \in \{0.05, 0.1\}$ and the number of time steps $N \in \{30, 50, 100, 300\}$. We use $\alpha = 0.05$ in most settings and $\alpha = 0.1$ for motion deblurring; $N = 100$ by default, $N = 300$ for hard random inpainting on AFHQ, and $N \in \{30, 50\}$ for motion deblurring.

\subsection{Computational cost}
\label{app:running_time}

\paragraph{Running time.}
Table~\ref{tab:running_time} reports per-image reconstruction times on a single A100 GPU. PnP-Flow is the fastest method we compare against, while reaching perceptual quality typically associated with posterior sampling. This illustrates a concrete advantage of using a flow map as a PnP denoiser: it inherits the speed of MAP-style approaches without sacrificing perceptual fidelity.

\begin{table}[H]
\centering
\caption{Running time for each method. PnP with a flow map as a denoiser is the fastest method, while generating images that are closer to those from posterior sampling methods.}
\begin{tabular}{lc}
\toprule
Sampler & Time [sec] \\
\midrule
Flow-priors   & 34.8 \\
D-Flow      & 322 \\
OT-ODE &  8.4\\
DPS-ODE &  7.7\\
PnP-flow &  4.5 \\
\bottomrule
\end{tabular}
\label{tab:running_time}
\end{table}

\paragraph{Memory footprint.} Overall, almost all the methods fit in a single A100 with a batch of 5 images for AFHQ 256x256; the only one that could not fit is D-flow, which needed to use a single image per batch.

\subsection{Experiment of the teaser plot}
\label{app:teaser_experiment}

We detail here the experiment underlying Fig.~\ref{fig:teaser}. 
The data distribution $p_1$ is a 2D isotropic mixture of Gaussians,
\[
p_1 = \tfrac{1}{2}\mathcal{N}(\bbmu_a, \sigma_c^2 \bbI)
     + \tfrac{1}{2}\mathcal{N}(\bbmu_b, \sigma_c^2 \bbI),
\quad
\bbmu_a = (4.5, 3.5),\ \bbmu_b = (3.5, 5.5),\ \sigma_c = 0.35.
\]
Observations are generated according to the linear forward model in~\eqref{eq:forward_model}, with $f(\cdot)=\bbH$ a fixed rotation of angle $\pi/6$ followed by isotropic rescaling. 

Similarly to the case in Appendix~\ref{app:mog_pd}, the MMSE denoiser $D_s(x)=\mathbb{E}[X_1\mid X_s=x]$ admits a closed-form expression via the component posteriors. We use this analytic denoiser to construct the flow-map denoiser
\[
D_{s,t}(\bbx) = \bbx + (1-s)\,\bbv(\bbx, s, t), 
\quad 
\bbv(\bbx, s, t), = \frac{1}{t-s}\int_s^t \frac{D_\tau(\bbx_\tau)-\bbx_\tau}{1-\tau}\,d\tau,
\]
where the average velocity is approximated by Euler integration of the flow $\dot\bbx_\tau = (D_\tau(\bbx_\tau)-\bbx_\tau)/(1-\tau)$ using $100$ steps.
Using this approximation of the average denoiser, we run PnP-Flow in Alg.~\ref{alg:pnp_fm} with $s=0.45$ (which is fixed to isolate the behavior of the lookahead), step size $\gamma_k=\lambda\sqrt{1-k/K}$ with $\lambda=0.4$, and $K=50$ iterations. 
We initialize with $\bbx^{(0)}=\bbH^{-1}\bby$ and run the algorithm independently for each sample.
We consider three lookaheads $t\in\{s,\,s+0.27,\,1\}$, corresponding to the MMSE, an intermediate, and the full lookahead regime.

The behavior follows the distortion–perception tradeoff: at $t=s$, reconstructions collapse toward the inter-mode mean (low variance, high distortion), while at $t=1$ the bimodal structure is recovered (higher perceptual quality). Intermediate lookaheads interpolate smoothly between these regimes, all using a single denoiser.

\section{Additional experiments}

\subsection{Results with error bars}
\label{app:error_bars}

Here, we provide the results from Section~\ref{sec:experiments} with the corresponding error bars.

\begin{table}[H]
\centering
\caption{\small{Quantitative results for random inpainting ($90\%$) with $\sigma = 0.01$ on CelebA and AFHQ. Best results are in \textbf{bold}.}}
\label{table:inpainting_errorbars}
\setlength{\tabcolsep}{4pt}
\scalebox{0.8}{
\begin{tabular}{@{} r cccc cccc @{}}
\toprule
\multirow{2}{*}{\textbf{Sampler}} 
& \multicolumn{4}{c}{\textbf{CelebA $128\times128$}} 
& \multicolumn{4}{c}{\textbf{AFHQ $256\times256$}} \\
\cmidrule(r){2-5} \cmidrule(l){6-9}
& \textbf{PSNR$\uparrow$} & \textbf{LPIPS$\downarrow$} & \textbf{FID$\downarrow$} & \textbf{DISTS$\downarrow$}
& \textbf{PSNR$\uparrow$} & \textbf{LPIPS$\downarrow$} & \textbf{FID$\downarrow$} & \textbf{DISTS$\downarrow$} \\
\midrule
Flow-priors &
$26.50\pm {\scriptstyle 1.00}$ & $0.089\pm {\scriptstyle 0.011}$ & 59.43 & $0.132 \pm {\scriptstyle 0.012}$ & 
$23.94 \pm {\scriptstyle 0.79}$ & $0.193 \pm {\scriptstyle 0.021}$  & 18.96 & $0.164 \pm {\scriptstyle 0.011}$ \\
D-Flow & 
$\underline{26.55}\pm {\scriptstyle 2.79}$ & $\underline{0.066}\pm {\scriptstyle 0.031}$ & 50.27 & ${0.110}\pm {\scriptstyle 0.031}$ &
$22.80\pm {\scriptstyle 3.80}$ & $0.192\pm {\scriptstyle0.146}$ & 18.31 & $0.137\pm {\scriptstyle0.083}$ \\
OT-ODE & 
$22.16 \pm {\scriptstyle 0.74}$ & $0.158 \pm {\scriptstyle 0.019}$ & 84.92 & $0.179\pm {\scriptstyle 0.016}$ & 
$21.14 \pm {\scriptstyle 0.26}$ & $0.263 \pm {\scriptstyle 0.019}$ & 16.03 & $0.181\pm {\scriptstyle0.012}$ \\
DPS-ODE & 
$25.60 \pm {\scriptstyle1.32}$ & $0.070 \pm {\scriptstyle 0.054}$ & \textbf{45.74} & $\textbf{0.106}\pm {\scriptstyle 0.024}$ & 
$23.08\pm {\scriptstyle 0.73}$ & $\underline{0.127}\pm {\scriptstyle 0.021}$ & \textbf{8.69} & $\underline{0.098}\pm {\scriptstyle 0.014}$ \\
\midrule
PnP-Flow ($t = s$) & 
$\textbf{27.29}\pm {\scriptstyle 1.15}$ & $0.073\pm {\scriptstyle 0.073}$ & 48.88 & $0.113\pm {\scriptstyle 0.013}$ & 
$\mathbf{24.99}\pm {\scriptstyle 0.78}$ & $0.162\pm {\scriptstyle0.013}$ & 13.49 & $0.135\pm {\scriptstyle0.008}$ \\
PnP-Flow ($t = 1$) & 
$26.03\pm {\scriptstyle 1.03}$ & $\textbf{0.065}\pm {\scriptstyle 0.009}$ & \underline{46.79} & $\underline{0.109} \pm {\scriptstyle0.012}$ & 
$23.23\pm {\scriptstyle 0.739}$ & $\textbf{0.122}\pm {\scriptstyle 0.012}$ & \underline{9.11} & $\textbf{0.095}\pm {\scriptstyle 0.006}$ \\
\bottomrule
\end{tabular}
}
\vspace{-0.2cm}
\end{table}

\begin{table}[H]
\centering
\caption{\small{Quantitative results for motion deblurring with $\sigma = 0.05$ on CelebA and AFHQ. Best results are in \textbf{bold}.}}
\label{table:motion_deblur_errorbars}
\setlength{\tabcolsep}{4pt}
\scalebox{0.8}{
\begin{tabular}{@{} r cccc cccc @{}}
\toprule
\multirow{2}{*}{\textbf{Sampler}} 
& \multicolumn{4}{c}{\textbf{CelebA $128\times128$}} 
& \multicolumn{4}{c}{\textbf{AFHQ $256\times256$}} \\
\cmidrule(r){2-5} \cmidrule(l){6-9}
& \textbf{PSNR$\uparrow$} & \textbf{LPIPS$\downarrow$} & \textbf{FID$\downarrow$} & \textbf{DISTS$\downarrow$}
& \textbf{PSNR$\uparrow$} & \textbf{LPIPS$\downarrow$} & \textbf{FID$\downarrow$} & \textbf{DISTS$\downarrow$} \\
\midrule
Flow-priors & $25.07 \pm {\scriptstyle 1.34}$ & $0.151\pm {\scriptstyle .035}$ & 114.41 & $0.188\pm {\scriptstyle 0.021}$ &
$23.14\pm {\scriptstyle 2.26}$ & $\textbf{0.207} \pm {\scriptstyle 0.063}$ & 27.73 & $\underline{0.154}\pm {\scriptstyle 0.028}$ \\
D-Flow & 
$\textbf{29.77} \pm {\scriptstyle 1.67}$ & $\textbf{0.071}\pm {\scriptstyle 0.022}$ & 73.15 & $\textbf{0.112}\pm {\scriptstyle0.019}$ & 
$\textbf{25.52}\pm {\scriptstyle 2.60}$ & $\underline{0.230}\pm {\scriptstyle 0.085}$ & \underline{19.92} & $0.156 \pm {\scriptstyle 0.042}$ \\
OT-ODE & 
$21.07\pm {\scriptstyle 7.43}$ & $0.198\pm {\scriptstyle0.157}$ & 106.78 & $0.202\pm {\scriptstyle 0.113}$ & 
$16.73 \pm {\scriptstyle 5.51}$ & $0.348 \pm {\scriptstyle 0.165}$ & 77.53 & $0.234 \pm {\scriptstyle 0.122}$ \\
DPS-ODE & 
$27.36\pm {\scriptstyle5.16}$ & ${0.101}\pm {\scriptstyle 0.063}$ & 57.02 & ${0.133}\pm {\scriptstyle 0.054}$ & 
$20.49 \pm {\scriptstyle 3.66}$ & $0.292 \pm {\scriptstyle 0.123}$ & 36.71 & $0.179 \pm {\scriptstyle 0.60}$ \\
\midrule
PnP-Flow ($t = s$) & 
$\underline{29.79}\pm {\scriptstyle 2.70}$  & $0.114\pm {\scriptstyle 0.056}$ & \underline{53.65} & $0.130\pm {\scriptstyle 0.034}$ & 
$\underline{25.40} \pm {\scriptstyle 2.15}$& $0.347\pm {\scriptstyle 0.076}$ & 20.37 & $0.201 \pm {\scriptstyle 0.033}$ \\
PnP-Flow ($t = 1$) & 
${29.42} \pm {\scriptstyle 2.80}$& $\underline{0.082}\pm {\scriptstyle0.042}$ & \textbf{52.61} & $\underline{0.119}\pm {\scriptstyle 0.029}$ & 
$23.31\pm {\scriptstyle 2.50}$ & $\textbf{0.207}\pm {\scriptstyle 0.059}$ & \textbf{14.04} & $\textbf{0.137}\pm {\scriptstyle 0.028}$ \\
\bottomrule
\end{tabular}
}
\vspace{-0.2cm}
\end{table}

\subsection{Additional inverse problems}
\label{app:additional}

\paragraph{Gaussian deblurring.} We consider a kernel of size $31\times31$ for CelebA and $61\times61$ for AFHQ, with $\sigma_b=3$.
The quantitative results are in Table~\ref{table:gaussian_deblur}.

\paragraph{Super-resolution.} We consider super-resolution with a cubic method, and a factor of 4.
The quantitative results are in Table~\ref{table:super_resolution}.

\paragraph{Box-inpainting.} We consider a box of size $80\times 80$ for AFHQ and $60\times 60$ for CelebA.
The quantitative results are in Table~\ref{table:box_inp}.

\begin{table}[H]
\centering
\caption{\small{Quantitative results for Gaussian deblurring w/$\sigma = 0.05$ across CelebA and AFHQ datasets. Best results are in \textbf{bold}.}}
\label{table:gaussian_deblur}
\setlength{\tabcolsep}{4pt}
\scalebox{0.7}{
\begin{tabular}{@{} r cccc @{\hspace{15pt}} cccc @{}}
\toprule
\multirow{2}{*}{\textbf{Sampler}} 
& \multicolumn{4}{c}{\textbf{CelebA $128\times128$}} 
& \multicolumn{4}{c}{\textbf{AFHQ $256\times256$}} \\
\cmidrule(r){2-5} \cmidrule(l){6-9}
& \textbf{PSNR$\uparrow$} & \textbf{LPIPS$\downarrow$} & \textbf{FID$\downarrow$} & \textbf{DISTS$\downarrow$}
& \textbf{PSNR$\uparrow$} & \textbf{LPIPS$\downarrow$} & \textbf{FID$\downarrow$} & \textbf{DISTS$\downarrow$} \\
\midrule
Flow-priors 
& \textbf{30.41} & \textbf{0.044} & 80.53 & \underline{0.125} 
& {22.73} & 0.182 & 12.87 & 0.124 \\

D-Flow 
& 26.59 & \underline{0.081} & 66.93 & \textbf{0.120} 
& 22.80 & \underline{0.175} & 12.31 & \underline{0.121} \\

OT-ODE 
& 25.77 & {0.100} & \underline{63.10} & 0.129
& \underline{24.04} & \textbf{0.168} & \underline{11.51} & \textbf{0.114} \\

DPS-ODE 
& 24.80 & 0.129 & 72.13 & 0.148 
& 20.62 & 0.253 & 15.47 & 0.158 \\

\midrule
PnP-Flow ($t = s$)  
& \underline{27.25} & 0.233 & 70.80 & 0.183 
& \textbf{25.14} & 0.453 & 25.67 & 0.235 \\

PnP-Flow ($t = 1$) 
& {26.60} & {0.157} & \textbf{61.57} & {0.152} 
& 23.41 & 0.191 & \textbf{11.09} & 0.122 \\
\bottomrule
\end{tabular}
}
\vspace{-0.2cm}
\end{table}

\begin{table}[H]
\centering
\caption{\small{Quantitative results for super-resolution $\times4$ w/$\sigma = 0.05$ across CelebA and AFHQ datasets. Best results are in \textbf{bold}.}}
\label{table:super_resolution}
\setlength{\tabcolsep}{4pt}
\scalebox{0.7}{
\begin{tabular}{@{} r cccc @{\hspace{15pt}} cccc @{}}
\toprule
\multirow{2}{*}{\textbf{Sampler}} 
& \multicolumn{4}{c}{\textbf{CelebA $128\times128$}} 
& \multicolumn{4}{c}{\textbf{AFHQ $256\times256$}} \\
\cmidrule(r){2-5} \cmidrule(l){6-9}
& \textbf{PSNR$\uparrow$} & \textbf{LPIPS$\downarrow$} & \textbf{FID$\downarrow$} & \textbf{DISTS$\downarrow$}
& \textbf{PSNR$\uparrow$} & \textbf{LPIPS$\downarrow$} & \textbf{FID$\downarrow$} & \textbf{DISTS$\downarrow$} \\
\midrule
Flow-priors 
& 26.14 & 0.227 & 96.46 & 0.211 
& \underline{22.93} & 0.338 & 20.41 & 0.209 \\

D-Flow 
& 24.84 & \underline{0.137} & 119.93 & 0.181 
& 23.77 & 0.389 & 20.69 & 0.204 \\

OT-ODE 
& 24.79 & {0.142} & 81.18 & 0.173 
& 20.36 & 0.400 & 44.28 & 0.247 \\

DPS-ODE 
& 26.38 & \textbf{0.083} & \textbf{55.10} & \textbf{0.123} 
& 24.18 & \textbf{0.165} & \textbf{10.88} & \textbf{0.114} \\

\midrule
PnP-Flow ($t = s$)  
& \textbf{26.94} & 0.235 & 71.30 & 0.187 
& \textbf{25.10} & 0.429 & 26.45 & 0.232\\

PnP-Flow ($t = 1$) 
& \underline{26.68} & {0.166} & \underline{63.97} & \underline{0.158} 
& \underline{24.71} & \underline{0.269} & \underline{13.88} & \underline{0.159} \\
\bottomrule
\end{tabular}
}
\vspace{-0.2cm}
\end{table}

% \subsubsection{Box-inpainting}

\begin{table}[H]
\centering
\caption{\small{Quantitative results for box inpainting w/$\sigma = 0.01$ across CelebA and AFHQ datasets. Best results are in \textbf{bold}.}}
\label{table:box_inp}
\setlength{\tabcolsep}{4pt}
\scalebox{0.7}{
\begin{tabular}{@{} r cccc @{\hspace{15pt}} cccc @{}}
\toprule
\multirow{2}{*}{\textbf{Sampler}} 
& \multicolumn{4}{c}{\textbf{CelebA $128\times128$}} 
& \multicolumn{4}{c}{\textbf{AFHQ $256\times256$}} \\
\cmidrule(r){2-5} \cmidrule(l){6-9}
& \textbf{PSNR$\uparrow$} & \textbf{LPIPS$\downarrow$} & \textbf{FID$\downarrow$} & \textbf{DISTS$\downarrow$}
& \textbf{PSNR$\uparrow$} & \textbf{LPIPS$\downarrow$} & \textbf{FID$\downarrow$} & \textbf{DISTS$\downarrow$} \\
\midrule
Flow-priors 
& 25.09 & {0.062} & 36.28 & \underline{0.068} 
& \underline{26.51} & \underline{0.048} & 16.62 & \underline{0.052} \\

D-Flow 
& \underline{26.36} & \textbf{0.056} & 34.70 & 0.073 
& 26.87 & 0.069 & \underline{11.08} & 0.079 \\

OT-ODE 
& 24.65 & {0.065} & \underline{31.99} & 0.073 
& 20.80 & 0.085 & 25.97 & 0.095 \\

DPS-ODE 
& 24.13 & 0.115 & 58.64 & 0.132
& 21.07 & 0.069 & 19.48 & 0.071 \\

\midrule
PnP-Flow ($t = s$)  
& \textbf{26.77} & \underline{0.059} & 32.29 & \textbf{0.062} 
& \textbf{28.33} & 0.057 & 13.79 & \underline{0.052} \\

PnP-Flow ($t = 1$) 
& {24.77} & {0.062} & \textbf{26.07} & \textbf{0.062} 
& 26.40 & \textbf{0.031} & \textbf{7.61} & \textbf{0.033} \\
\bottomrule
\end{tabular}
}
\vspace{-0.2cm}
\end{table}

\subsubsection{Non-linear}
\label{app:nonlinear}
\paragraph{JPEG.}
We consider JPEG, following the setting from Section~\ref{subsec:dp_exp}. 
Given that OT-ODE and Flow priors cannot handle non-linear degradations, we only compare with DPS-ODE; the result is shown in Table~\ref{table:non_linear_jpeg} for AFHQ.

\begin{table}[H]
\centering
\caption{\small{Quantitative results for JPEG w/$\sigma = 0.01$ on AFHQ datasets. Best results are in \textbf{bold}.}}
\label{table:non_linear_jpeg}
\setlength{\tabcolsep}{4pt}
\scalebox{0.7}{
\begin{tabular}{@{} r cccc @{}}
\toprule
\multirow{2}{*}{\textbf{Sampler}} & \multicolumn{4}{c}{\textbf{AFHQ $256\times256$}} \\
\cmidrule(r){2-5}
& \textbf{PSNR$\uparrow$} & \textbf{LPIPS$\downarrow$} & \textbf{FID$\downarrow$} & \textbf{DISTS$\downarrow$} \\
\midrule
DPS-ODE & 18.76 & 0.287 & 19.05 & 0.184 \\
\midrule
PnP-Flow ($t = s$)  & \textbf{26.79} & 0.245 & 20.33 & 0.166 \\
PnP-Flow ($t = 1$)  & 25.52 & \textbf{0.143} & \textbf{12.84} & \textbf{0.124} \\
\bottomrule
\end{tabular}
}
\vspace{-0.2cm}
\end{table}

\paragraph{Colorization.}
We consider also Colorization; the result is shown in Table~\ref{table:non_linear_col} for AFHQ.

\begin{table}[H]
\centering
\caption{\small{Quantitative results for Colorization w/$\sigma = 0.001$ on AFHQ datasets. Best results are in \textbf{bold}.}}
\label{table:non_linear_col}
\setlength{\tabcolsep}{4pt}
\scalebox{0.7}{
\begin{tabular}{@{} r cccc @{}}
\toprule
\multirow{2}{*}{\textbf{Sampler}} & \multicolumn{4}{c}{\textbf{AFHQ $256\times256$}} \\
\cmidrule(r){2-5}
& \textbf{PSNR$\uparrow$} & \textbf{LPIPS$\downarrow$} & \textbf{FID$\downarrow$} & \textbf{DISTS$\downarrow$} \\
\midrule
DPS-ODE & 18.60 & 0.413 & 33.79 & 0.236 \\
\midrule
PnP-Flow ($t = s$)  & \textbf{26.30} & 0.111 & 9.95 & 0.110 \\
PnP-Flow ($t = 1$)  & 25.37 & \textbf{0.105} & \textbf{9.47} & \textbf{0.094} \\
\bottomrule
\end{tabular}
}
\vspace{-0.2cm}
\end{table}

\subsubsection{Gaussian deblurring with Poisson noise}
\label{app:poisson}

Lastly, we consider Poisson noise, showing that our analysis goes beyond Gaussian noise.
In particular, we normalized each image to $[0,1]$, and consider an independent Poisson process normalised by a peak rate $\lambda > 0$:
\begin{equation}
  y_i \,\big|\, \bbx \;\sim\; \mathrm{Poisson}\!\big(\lambda\,(\bbH\bbx)_i\big),
  \qquad i = 1, \ldots, N,
  \label{eq:poisson_obs}
\end{equation}
so that $\mathbb{E}[y_i \mid \bbx] = (\bbH\bbx)_i$ and $\mathrm{Var}[y_i \mid \bbx] = (\bbH\bbx)_i / \lambda$. 
The signal-to-noise ratio scales as $\sqrt{\lambda}$, where lower $\lambda$ corresponds to fewer photons and a noisier observation; we consider $\lambda = 20$.
Given this noise model, the corresponding negative log-likelihood, used as the data-fidelity term in Alg.~\ref{alg:pnp_fm}, is (up to a constant in $\bbx$)
\begin{equation}
  g(\bbx)
  \;=\; \sum_{i=1}^{N} \big[\,(\bbH\bbx)_i \,-\, y_i \log (\bbH\bbx)_i\,\big].
  \label{eq:poisson_nll}
\end{equation}
The results as a function of the number of steps is shown in Fig.~\ref{fig:ablation_num_steps_poisson}, while qualitative comparisons are in Fig.~\ref{fig:poisson_visual}.

\begin{figure}[H]
    \centering
    \includegraphics[width=0.7\linewidth]{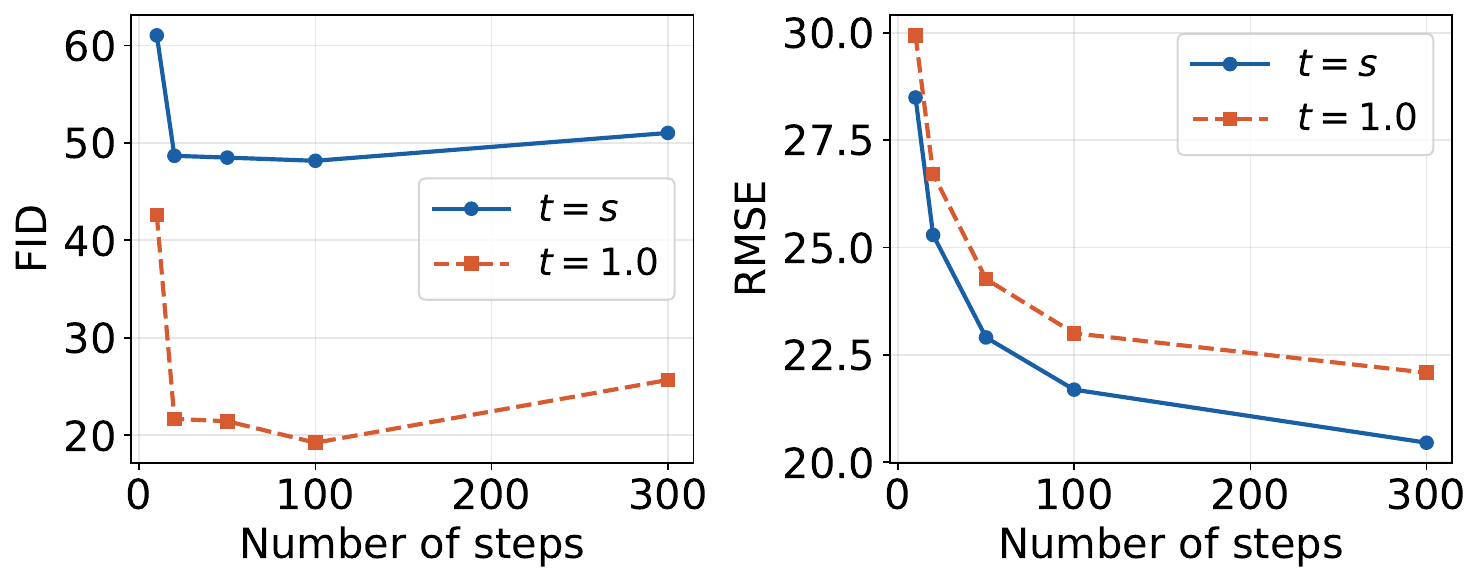}
    \caption{\small{Ablation of PnP-flow as a function of the number of steps with Poisson noise and Gaussian deblurring}}
    \label{fig:ablation_num_steps_poisson}
\end{figure}

\subsection{Distortion-perception of CelebA}
We include here the same plot as in Section~\ref{sec:pd_tradeoff}; it is shown in Fig.~\ref{fig:pd_celeba}.
Again, varying $t$ produces a smooth, monotonic trajectory in the DP plane: small lookaheads minimize RMSE, large lookaheads minimize FID, and intermediate values smoothly interpolate between the two.

\begin{figure}[H]
    \centering
    \includegraphics[width=1.0\linewidth]{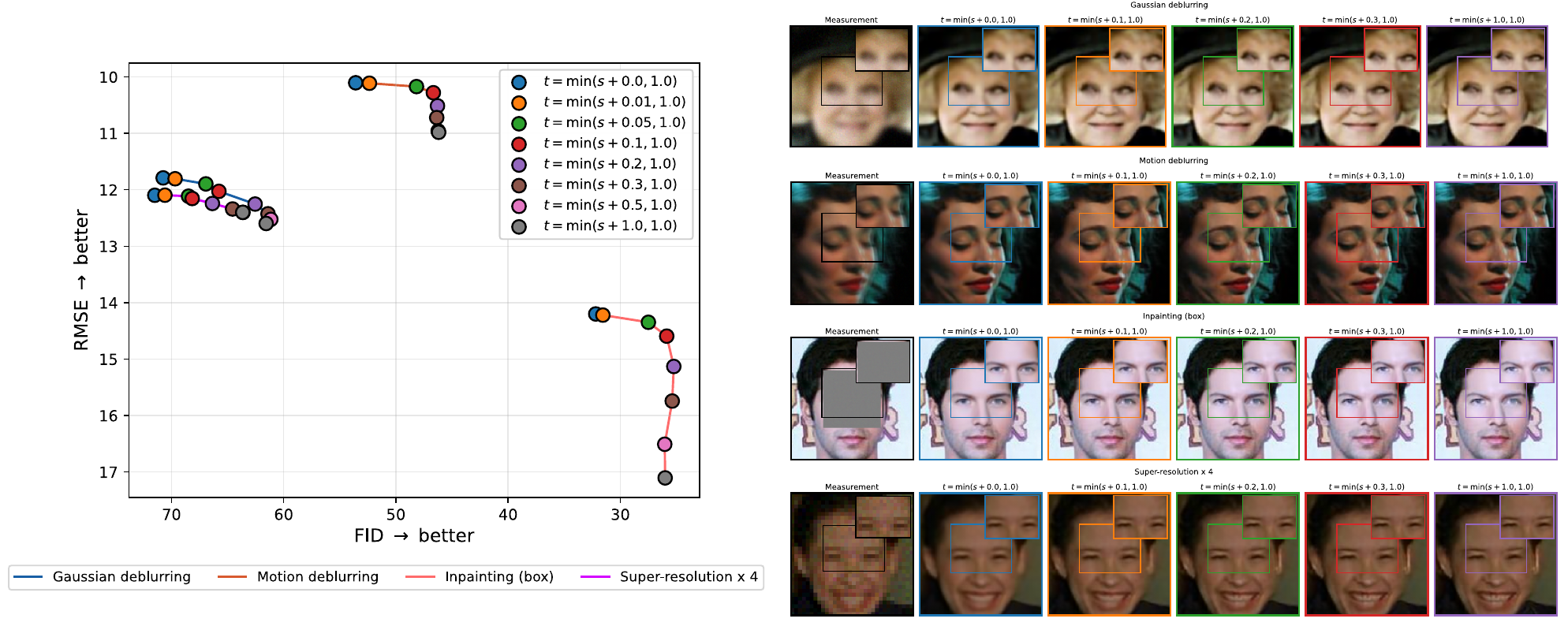}
    \caption{{\small \textbf{Distortion-perception tradeoff traced by a single flow map on CelebA.} Same setup as Fig.~\ref{fig:DP_exp}, but on CelebA $128\times128$. For each of four linear inverse problems (Gaussian deblurring, motion deblurring, box inpainting, and super-resolution), we sweep the lookahead $t$ of our average denoiser within a PnP solver and report RMSE (distortion, $\downarrow$) against FID (perception, $\downarrow$). Each marker corresponds to a different value of $t$; small lookaheads cluster near the low-distortion regime (top of each curve) while large lookaheads cluster near the low-FID regime (right). The same trained model spans the entire frontier across all four tasks. Reconstructions on the right illustrate the perceptual change as $t$ increases.}}
    \label{fig:pd_celeba}
\end{figure}

\begin{figure}[H]
    \centering
    \includegraphics[width=0.8\linewidth]{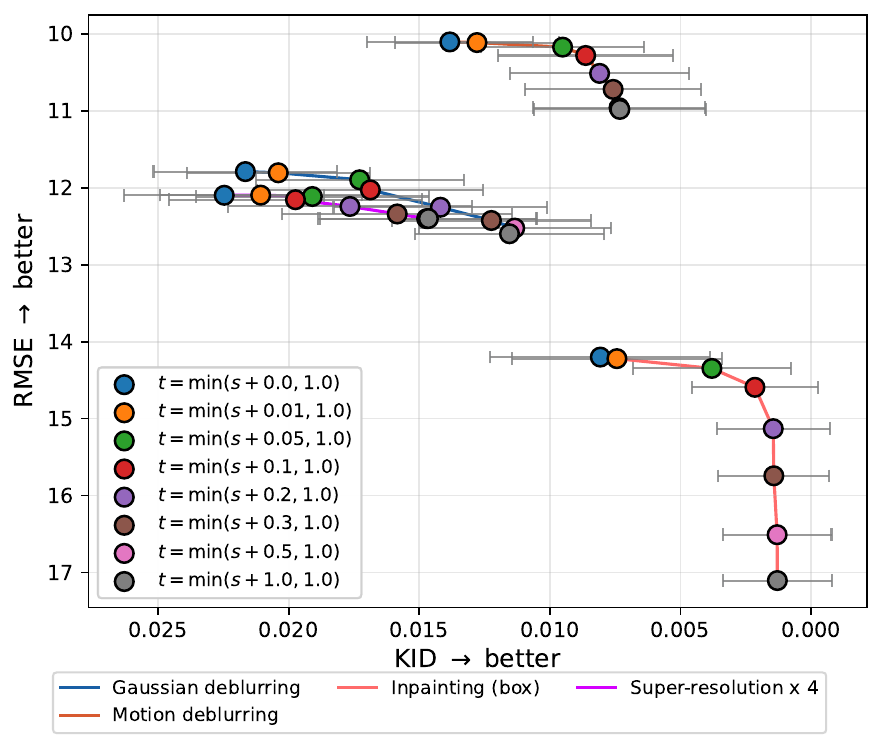}
    \caption{{\small \textbf{Distortion-perception tradeoff traced by a single flow map on CelebA.} Same setup as Fig.~\ref{fig:pd_celeba}, but on CelebA $128\times128$, but evaluating KID.}}
    \label{fig:kid_celeba}
\end{figure}

\subsection{Comparison with baselines using a flow matching model}
\label{app:flow_matching_comparison}

Table~\ref{table:comparison_w_flow_matching} compares all methods under both a flow matching backbone and a flow map backbone, allowing us to isolate the contribution of the flow map training objective from the choice of sampler.

\paragraph{Flow matching backbone.}
PnP-Flow ($t = s$) achieves the highest PSNR ($25.36$ dB), but at the cost of perceptual quality relative to Flow-priors (LPIPS $0.359$ vs.\ $0.205$). PnP-Flow ($t = 1$) is inapplicable here, as it requires the consistency property enforced only by flow map training.

\paragraph{Impact of the backbone.}
The results are largely consistent across backbones, with DPS-ODE achieving a lower FID with the flow matching model.
However, the flow map backbone with PnP-flow ($t = 1$) still yields notably lower FID scores throughout and is required to unlock PnP-flow ($t = 1$).

\begin{table}[H]
\centering
\caption{\small{Quantitative results for motion deblurring w/$\sigma = 0.05$ across AFHQ dataset. Best results are in \textbf{bold}.}}
\label{table:comparison_w_flow_matching}
\setlength{\tabcolsep}{4pt}
\scalebox{0.7}{
\begin{tabular}{@{} r cccc @{\hspace{15pt}} cccc @{}}
\toprule
\multirow{2}{*}{\textbf{Sampler}} 
& \multicolumn{4}{c}{\textbf{Flow Matching backbone}} 
& \multicolumn{4}{c}{\textbf{Flow Map backbone}} \\
\cmidrule(r){2-5} \cmidrule(l){6-9}
& \textbf{PSNR$\uparrow$} & \textbf{LPIPS$\downarrow$} & \textbf{FID$\downarrow$} & \textbf{DISTS$\downarrow$}
& \textbf{PSNR$\uparrow$} & \textbf{LPIPS$\downarrow$} & \textbf{FID$\downarrow$} & \textbf{DISTS$\downarrow$} \\
\midrule
Flow-priors 
& 23.20 & {0.205} & 18.48 & \underline{0.153} 
& \underline{23.14} & \underline{0.207} & 18.96 & \underline{0.154} \\

D-Flow 
&  &  &   & 
& 25.52 & 0.230 & \underline{19.92} & 0.156 \\

OT-ODE 
& 17.2 & {0.335} & 68.71  & 0.224
& 17.73 & 0.348 & 77.53 & 0.234 \\

DPS-ODE 
& 20.62 & 0.262 & 25.71 & 0.162
& 20.49 & 0.292 & 36.71 & 0.179 \\

\midrule
PnP-Flow ($t = s$)  
& 25.36 &  0.359  & 20.90 &  0.206
& \textbf{25.40} & 0.347 & 20.37 & \underline{0.201} \\

PnP-Flow ($t = 1$) 
& - & - & - & -
& 23.31 & \textbf{0.207} & \textbf{14.04} & \textbf{0.137} \\
\bottomrule
\end{tabular}
}
\vspace{-0.2cm}
\end{table}

\subsection{Visual results}
\label{app:visual_results}

\begin{figure}[H]
    \centering
    \includegraphics[width=1.1\linewidth]{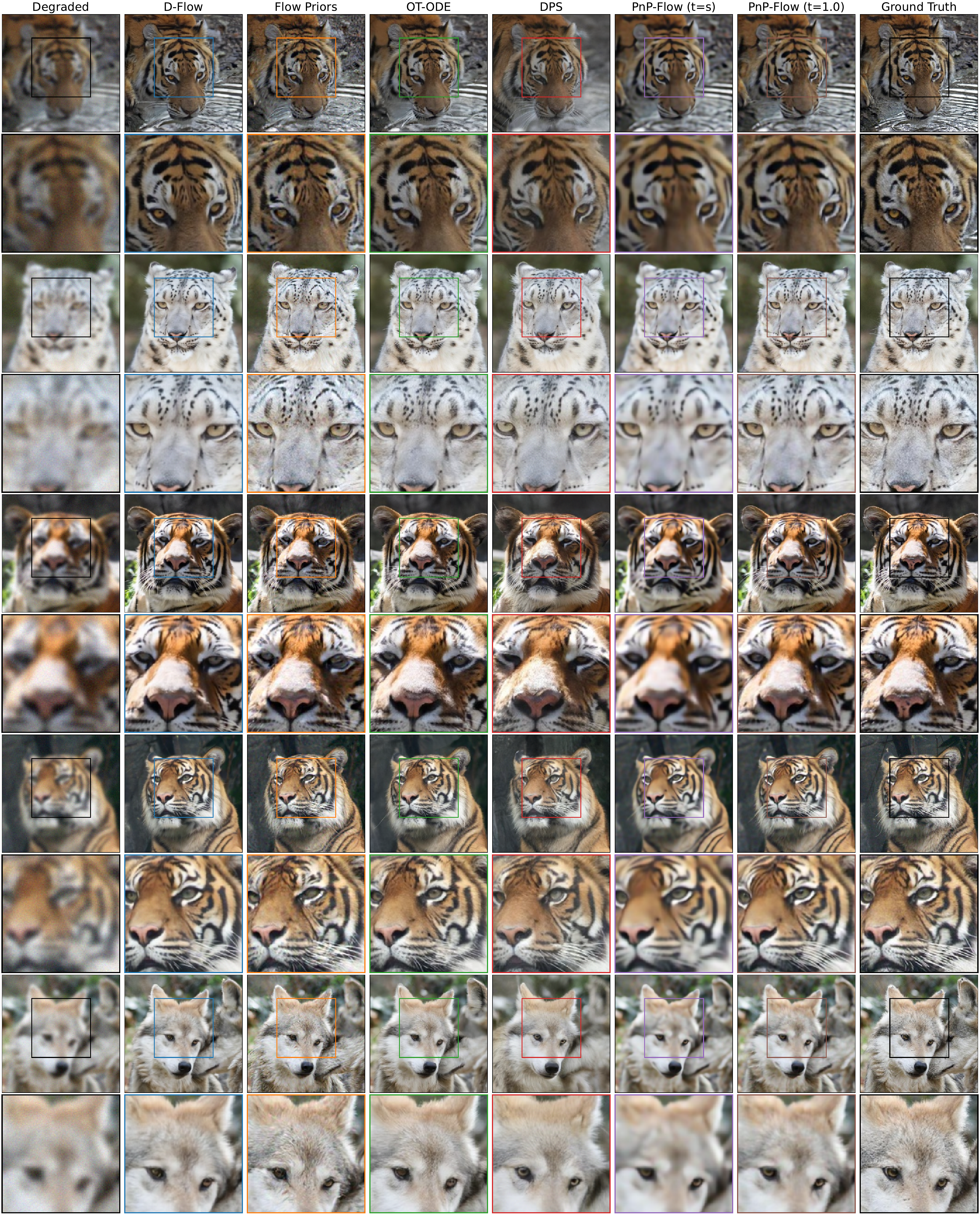}
    \caption{Gaussian deblurring with kernel size $61\times 61$ and $\sigma_b =3 $, and $\sigma =  0.05$}
    \label{fig:afhq_gauss}
\end{figure}

\begin{figure}[H]
    \centering
    \includegraphics[width=1.1\linewidth]{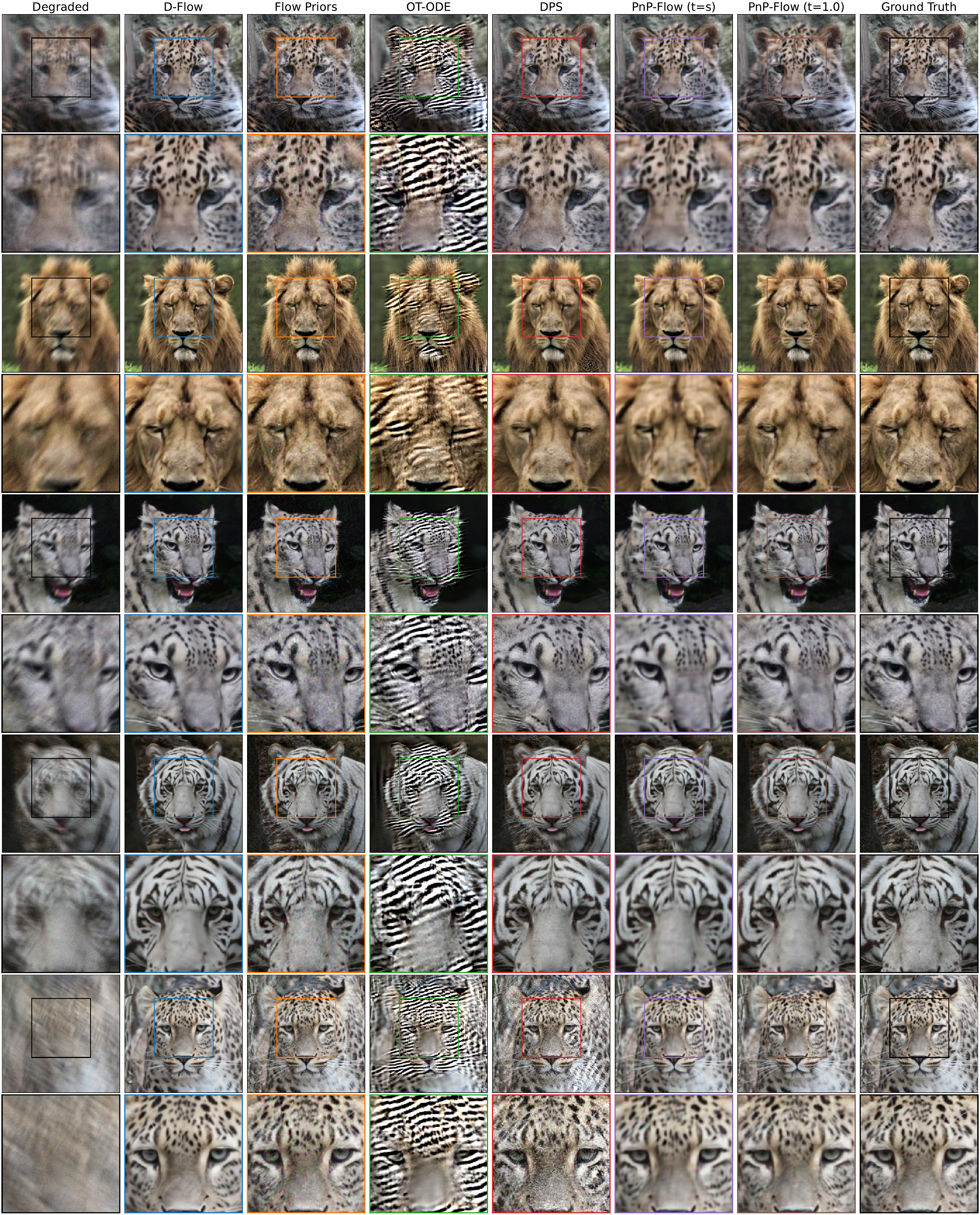}
    \caption{Motion deblurring with kernel size $61\times 61$ and $\sigma_b = 1 $, and $\sigma =  0.1$}
    \label{fig:afhq_motion}
\end{figure}

\begin{figure}[H]
    \centering
    \includegraphics[width=1.0\linewidth]{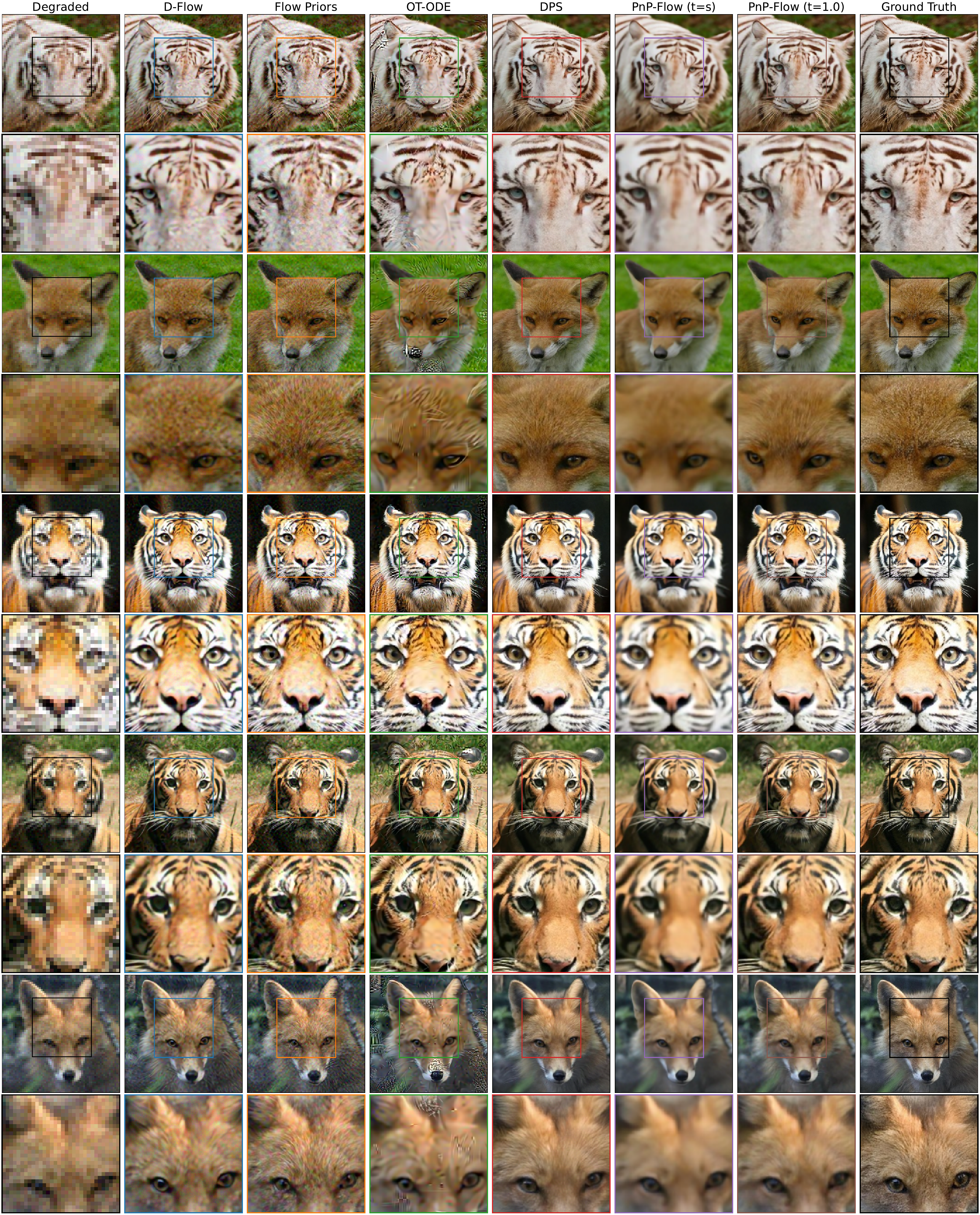}
    \caption{Super resolution $\times 4$, and $\sigma =  0.05$}
    \label{fig:afhq_sr}
\end{figure}

\begin{figure}[H]
    \centering
    \includegraphics[width=1.0\linewidth]{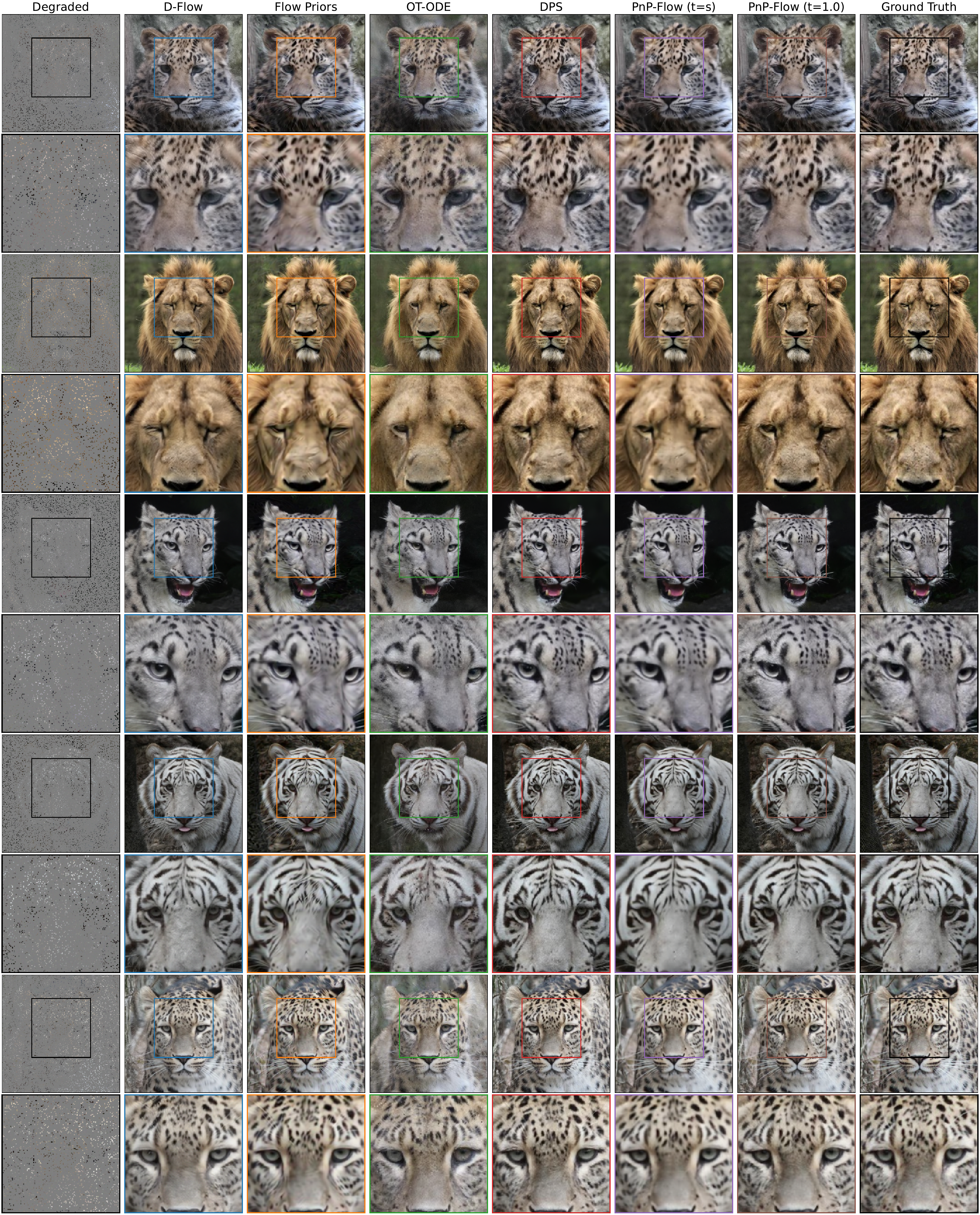}
    \caption{Random inpainting with $90\%$ drop rate, and $\sigma = 0.01$}
    \label{fig:afhq_rand_inp}
\end{figure}

\begin{figure}[H]
    \centering
    \includegraphics[width=1.0\linewidth]{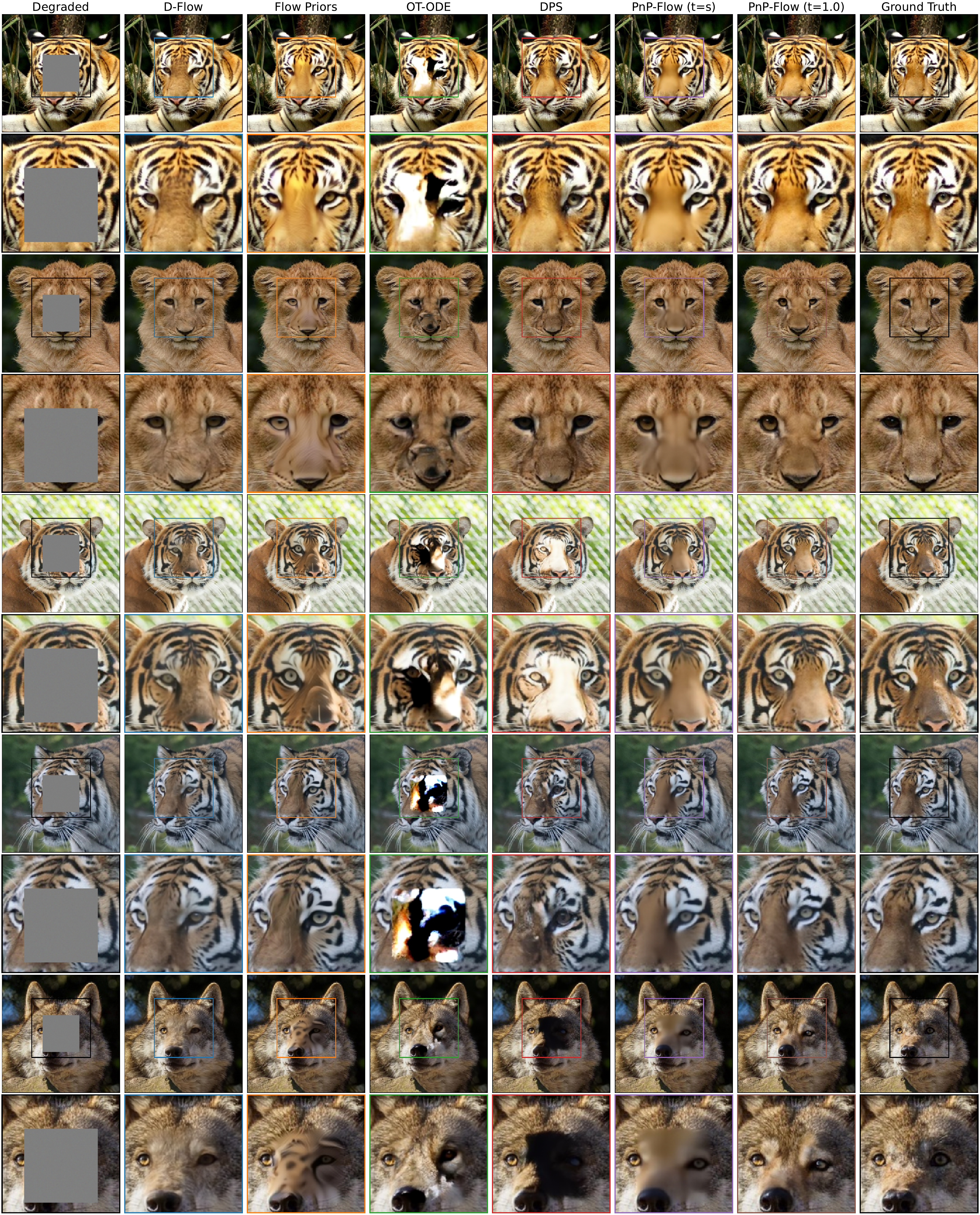}
    \caption{Box inpainting with mask $80\times 80$, and $\sigma = 0.01$}
    \label{fig:box_inp}
\end{figure}

\begin{figure}[H]
    \centering
    \includegraphics[width=0.75\linewidth]{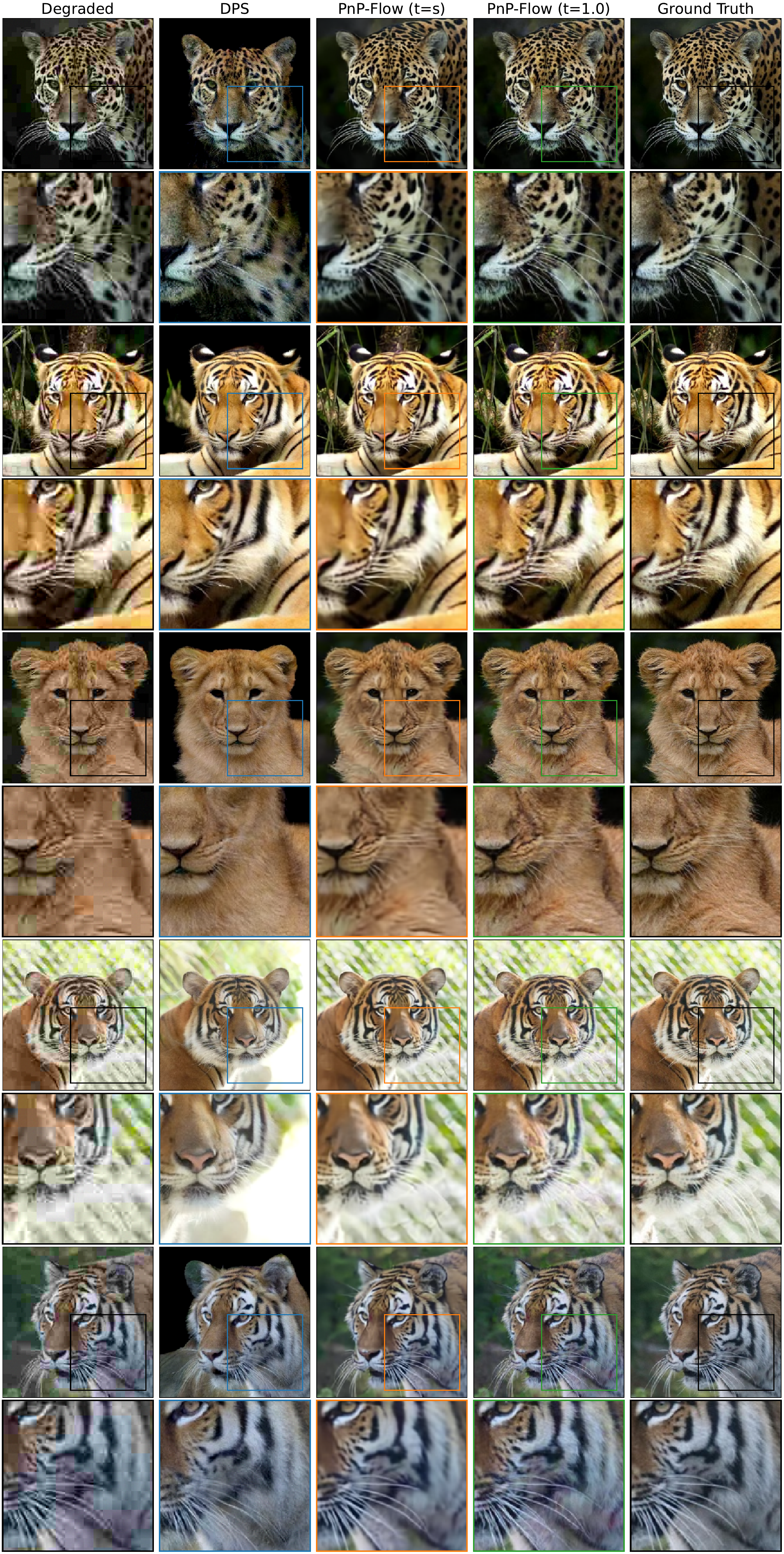}
    \caption{JPEG compresion}
    \label{fig:jpeg}
\end{figure}

\begin{figure}[H]
    \centering
    \includegraphics[width=1.0\linewidth]{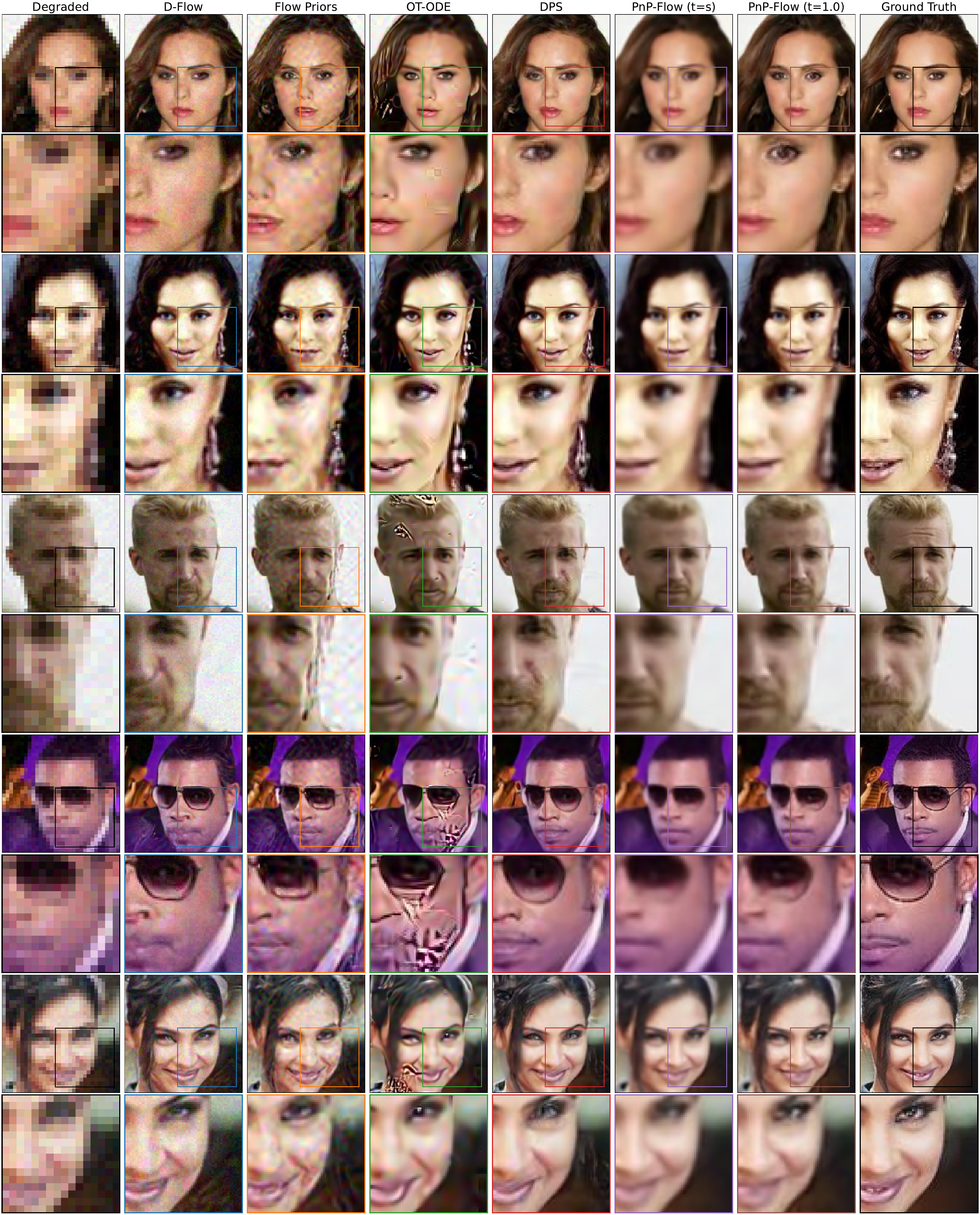}
    \caption{CelebA - Super resolution $\times 4$, and $\sigma =  0.05$}
    \label{fig:sr_celeba}
\end{figure}

\begin{figure}[H]
    \centering
    \includegraphics[width=1.0\linewidth]{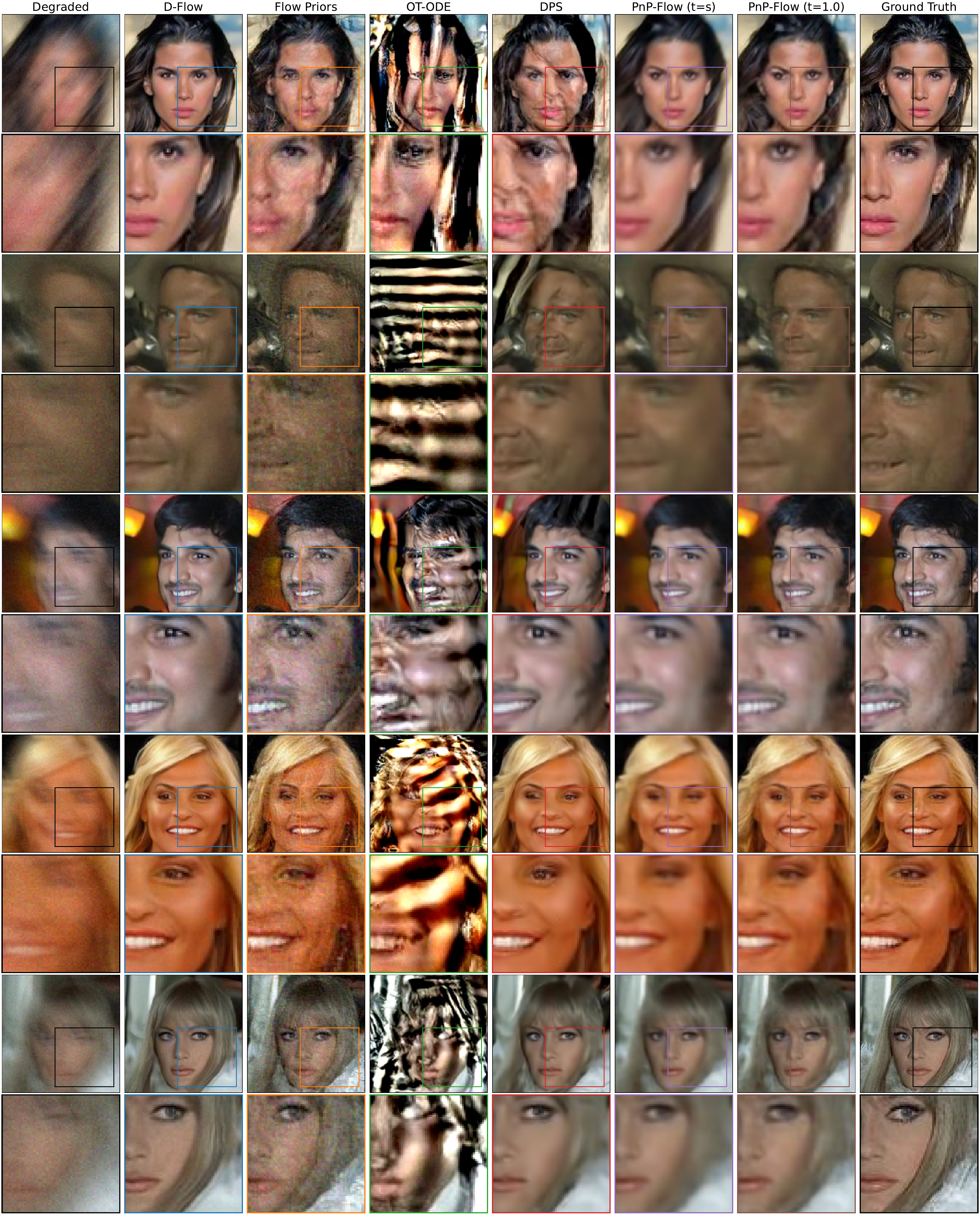}
    \caption{CelebA - Motion deblurring}
    \label{fig:motion_celeba}
\end{figure}

\begin{figure}[H]
    \centering
    \includegraphics[width=0.6\linewidth]{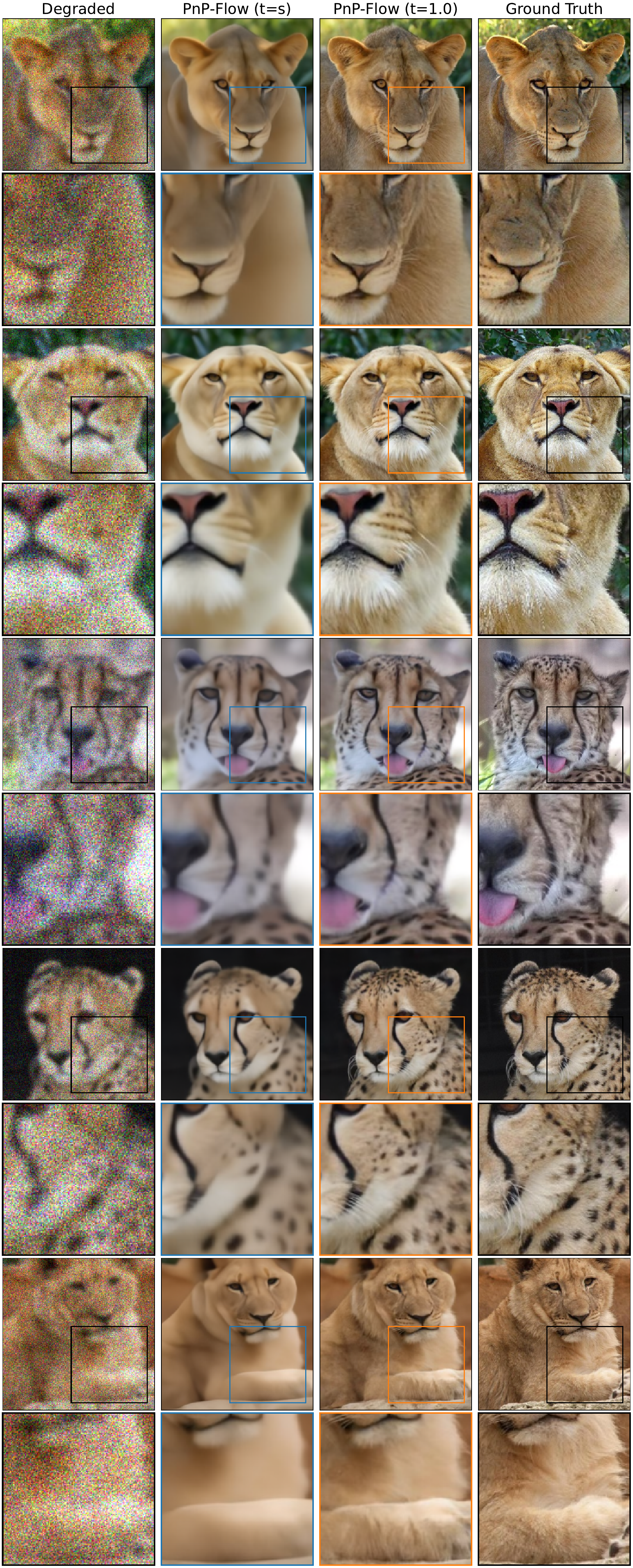}
    \caption{Gaussian deblurring with Poisson noise}
    \label{fig:poisson_visual}
\end{figure}

\begin{figure}[H]
    \centering
    \includegraphics[width=1.0\linewidth]{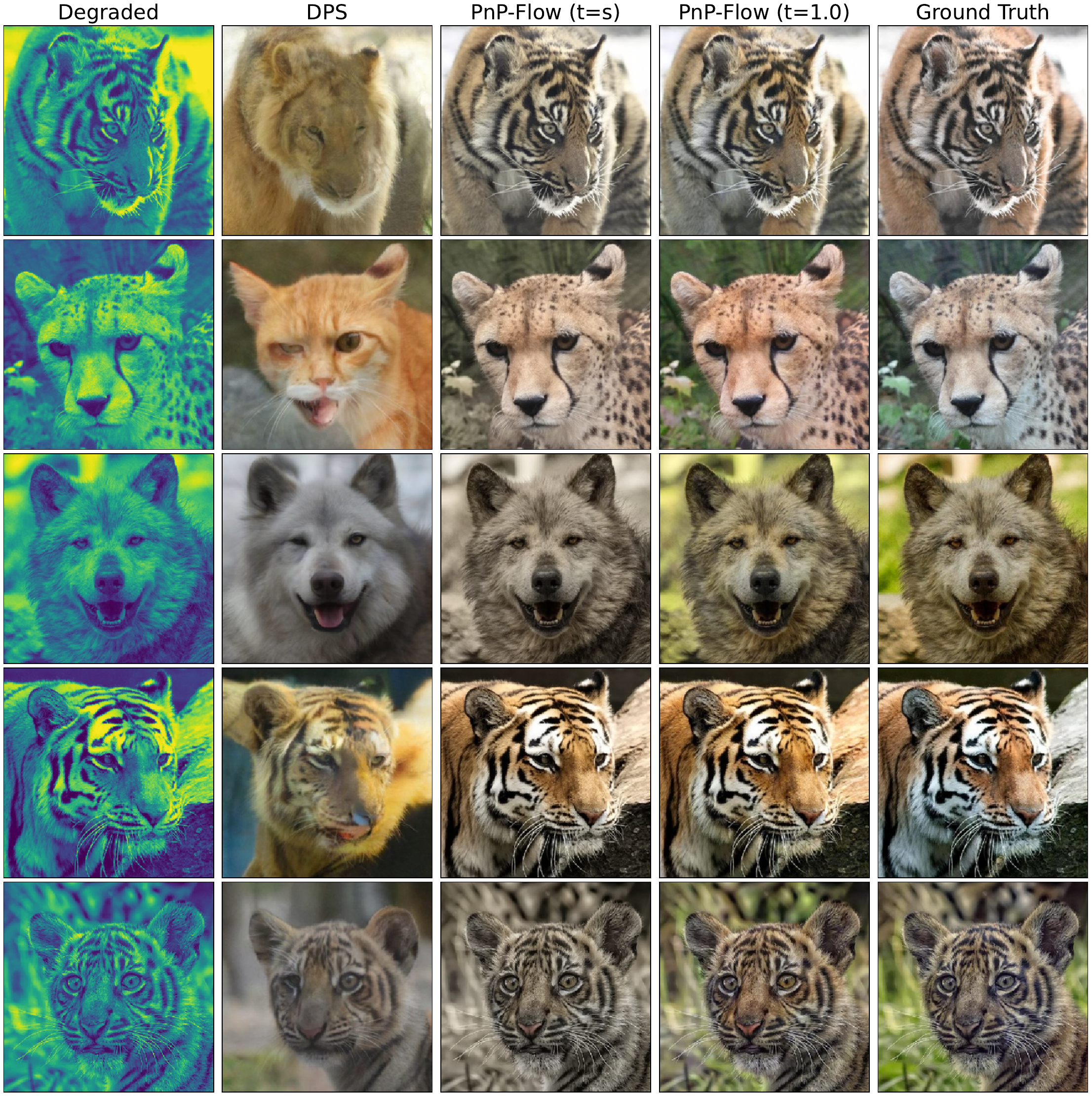}
    \caption{Colorization}
    \label{fig:colorization}
\end{figure}

\begin{figure}[H] % Use figure* if you are in a two-column format so it spans the whole top page
    \centering
    
    % --- Top Row: Three Plots ---
    \begin{subfigure}[b]{1\textwidth}
        \centering
        \includegraphics[width=1.1\textwidth]{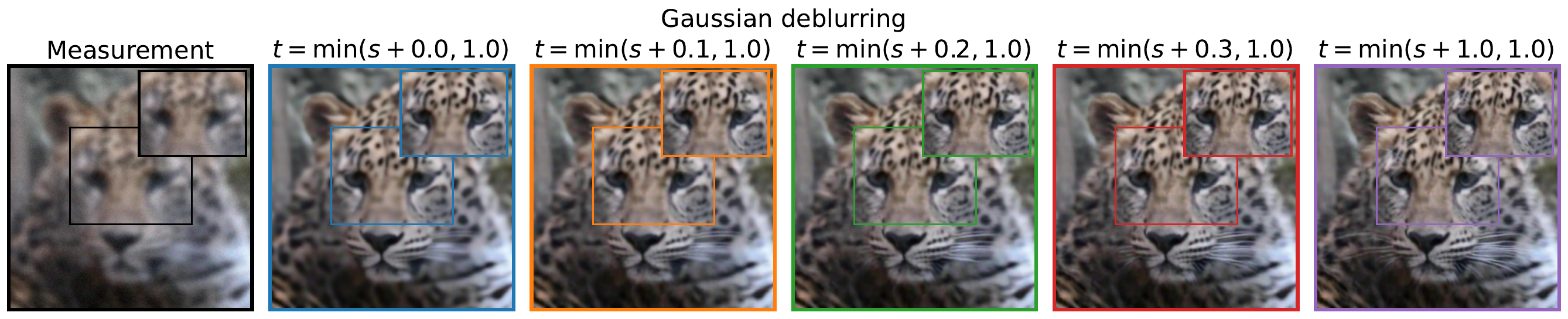}
    \end{subfigure}
    \begin{subfigure}[b]{1\textwidth}
        \centering
        \includegraphics[width=1.1\textwidth]{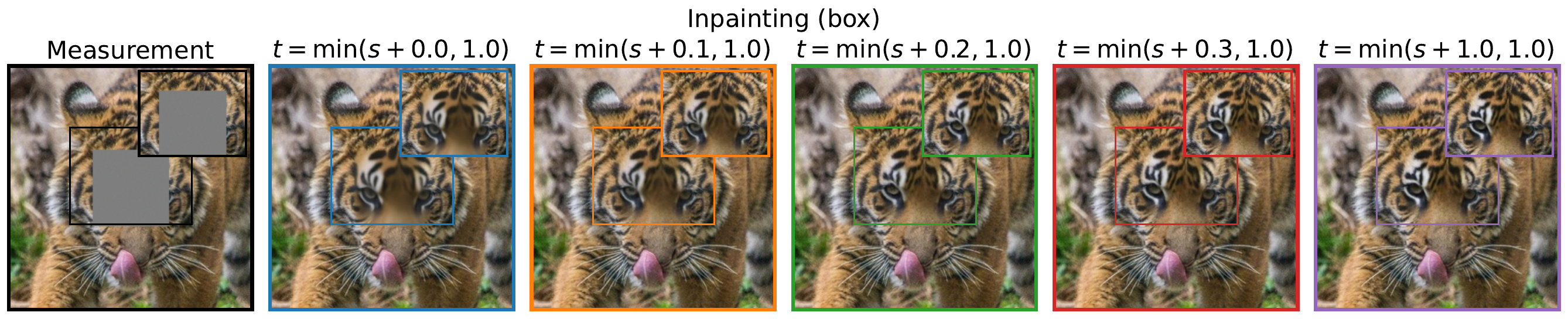}
    \end{subfigure}
    \begin{subfigure}[b]{1\textwidth}
        \centering
        \includegraphics[width=1.1\textwidth]{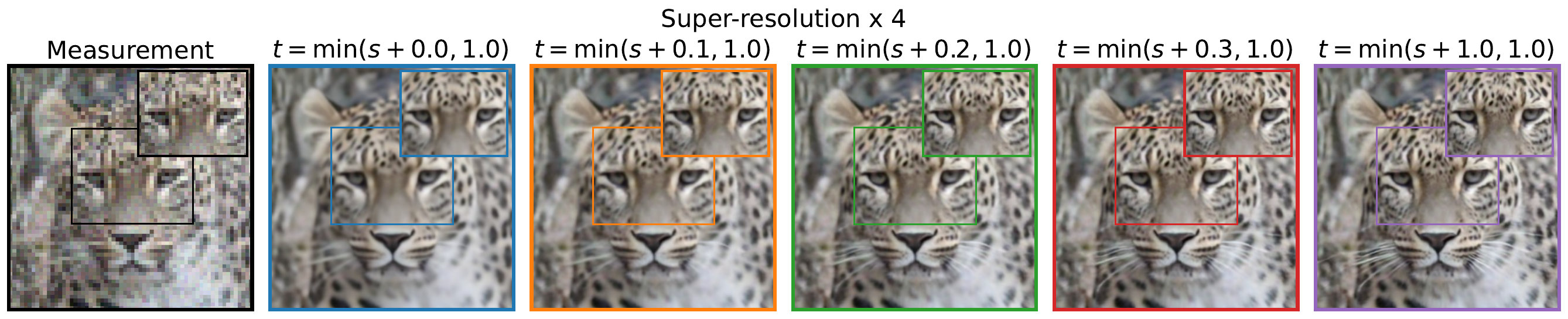}
    \end{subfigure}
    \begin{subfigure}[b]{1\textwidth}
        \centering
        \includegraphics[width=1.1\textwidth]{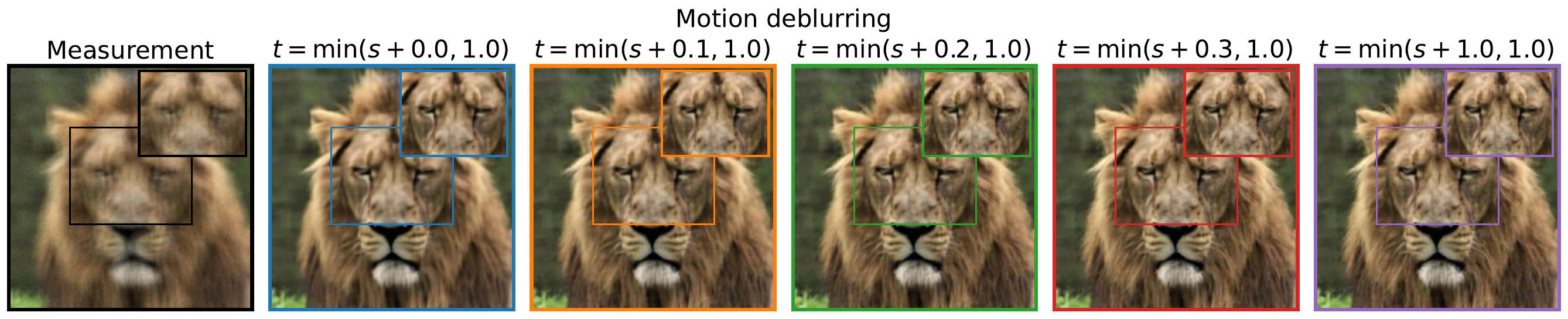}
    \end{subfigure}
    \begin{subfigure}[b]{1\textwidth}
        \centering
        \includegraphics[width=1.1\textwidth]{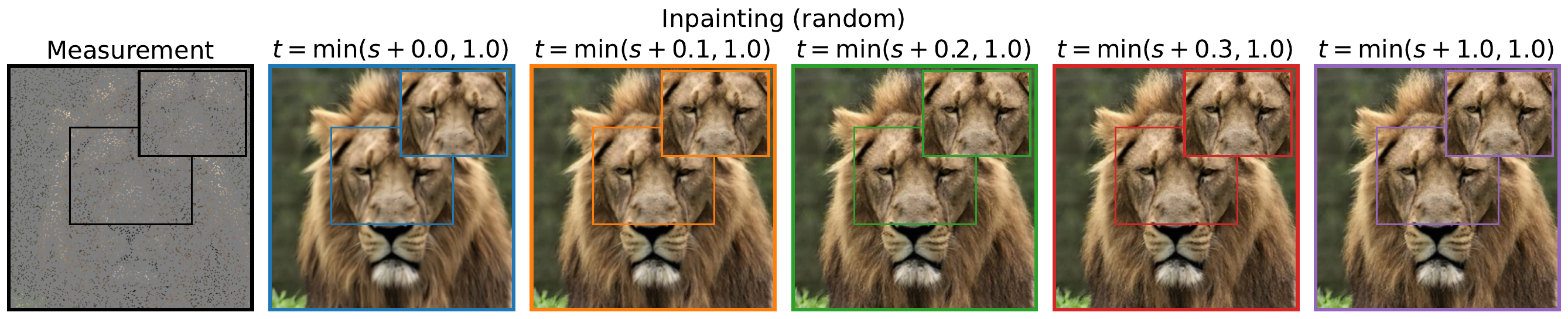}
    \end{subfigure}
    % --- Main Caption ---
    \caption{Larger version of the images in Fig.~\ref{fig:DP_exp}.}
    \label{fig:trajectory_afhq}
\end{figure}

\subsection{Visualization of the iterations for different lookaheads}

\begin{figure}[H]
    \centering
    \includegraphics[width=0.75\linewidth]{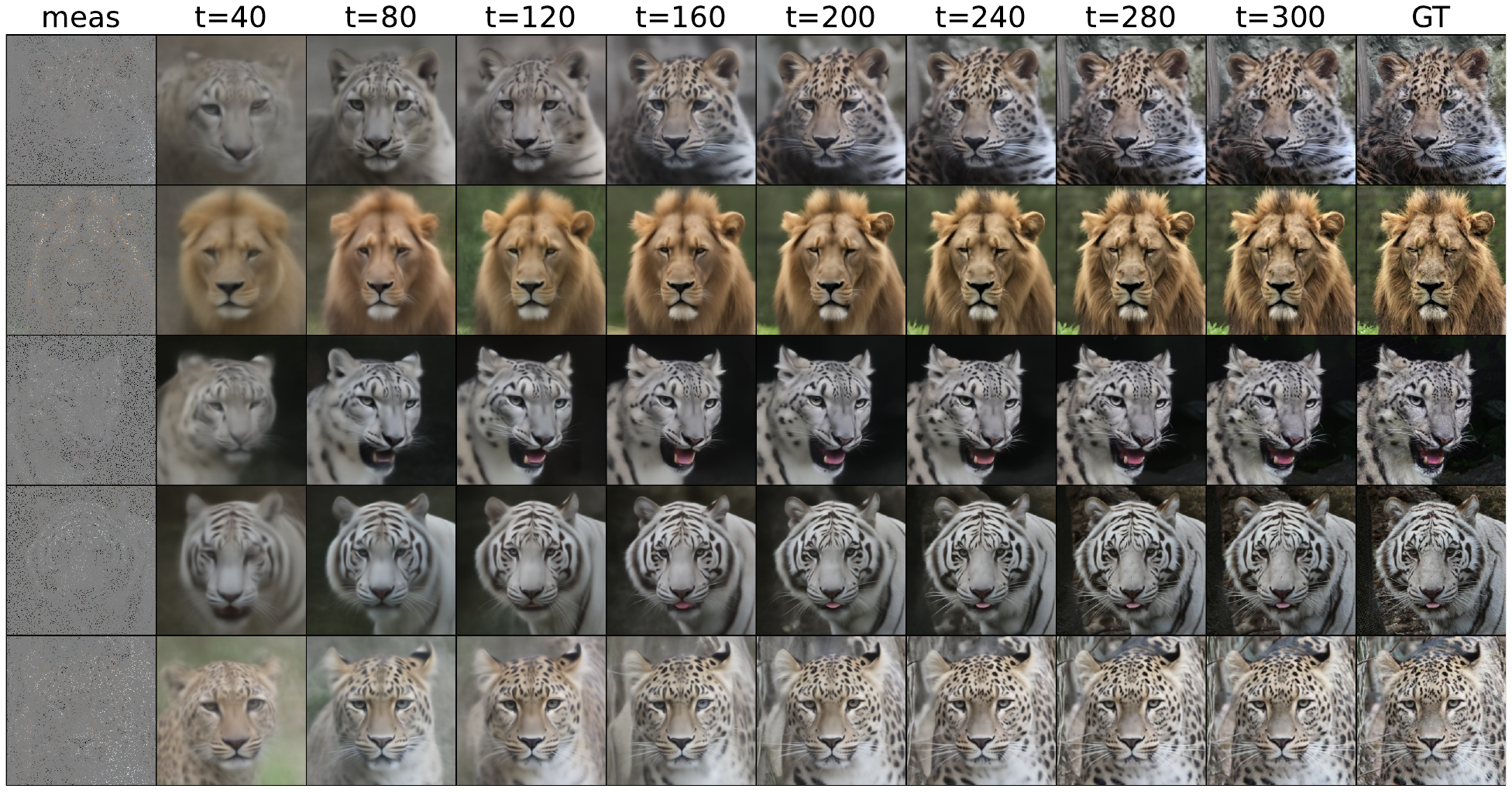}
    \caption{Progression of random inpainting with lookahead $t=s$}
    \label{fig:inp_prog_look0}
\end{figure}

\begin{figure}[H]
    \centering
    \includegraphics[width=0.75\linewidth]{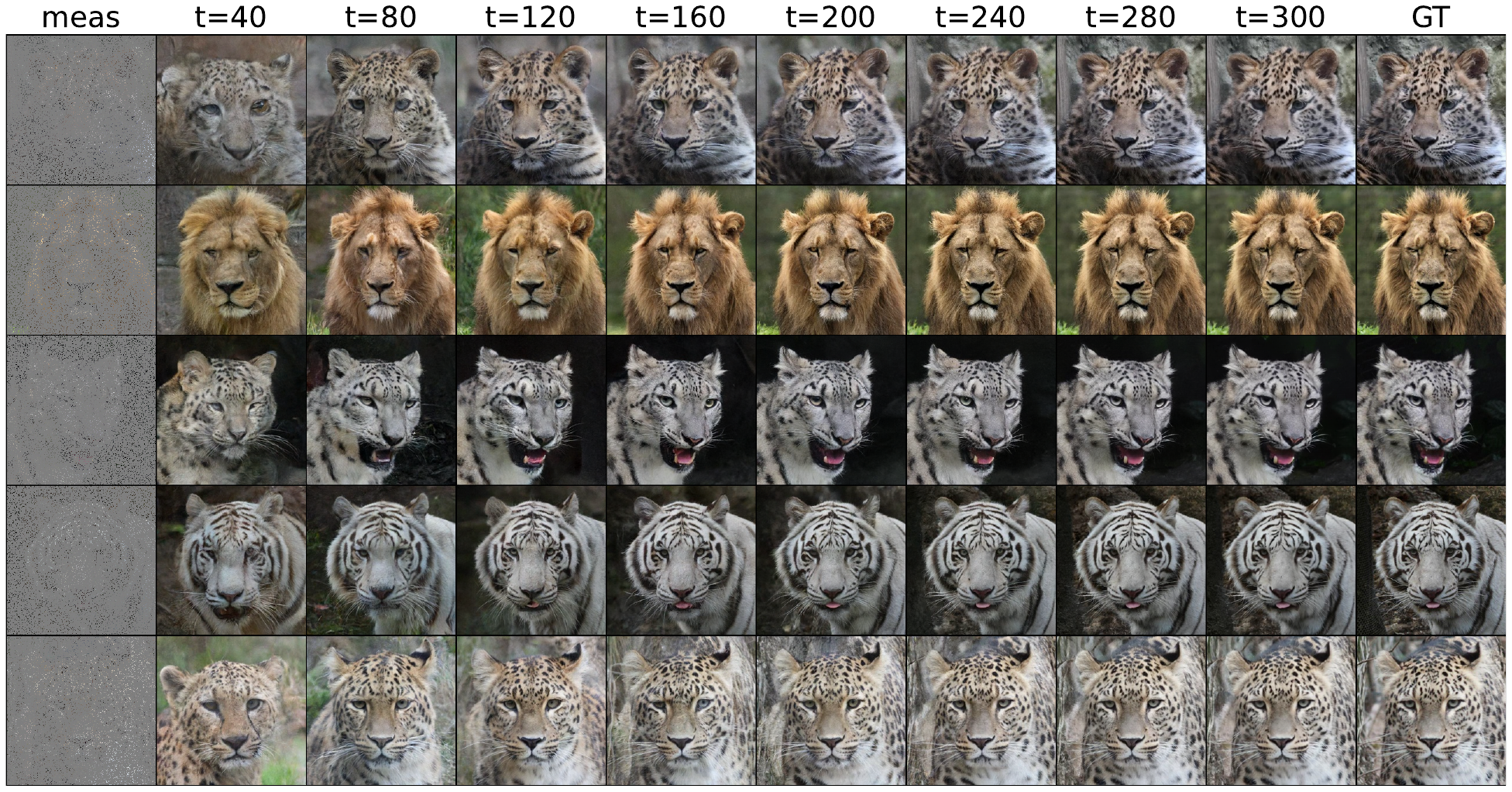}
    \caption{Progression of random inpainting with lookahead $t=1$}
    \label{fig:inp_prog_look1}
\end{figure}

\begin{figure}[H]
    \centering
    \includegraphics[width=0.85\linewidth]{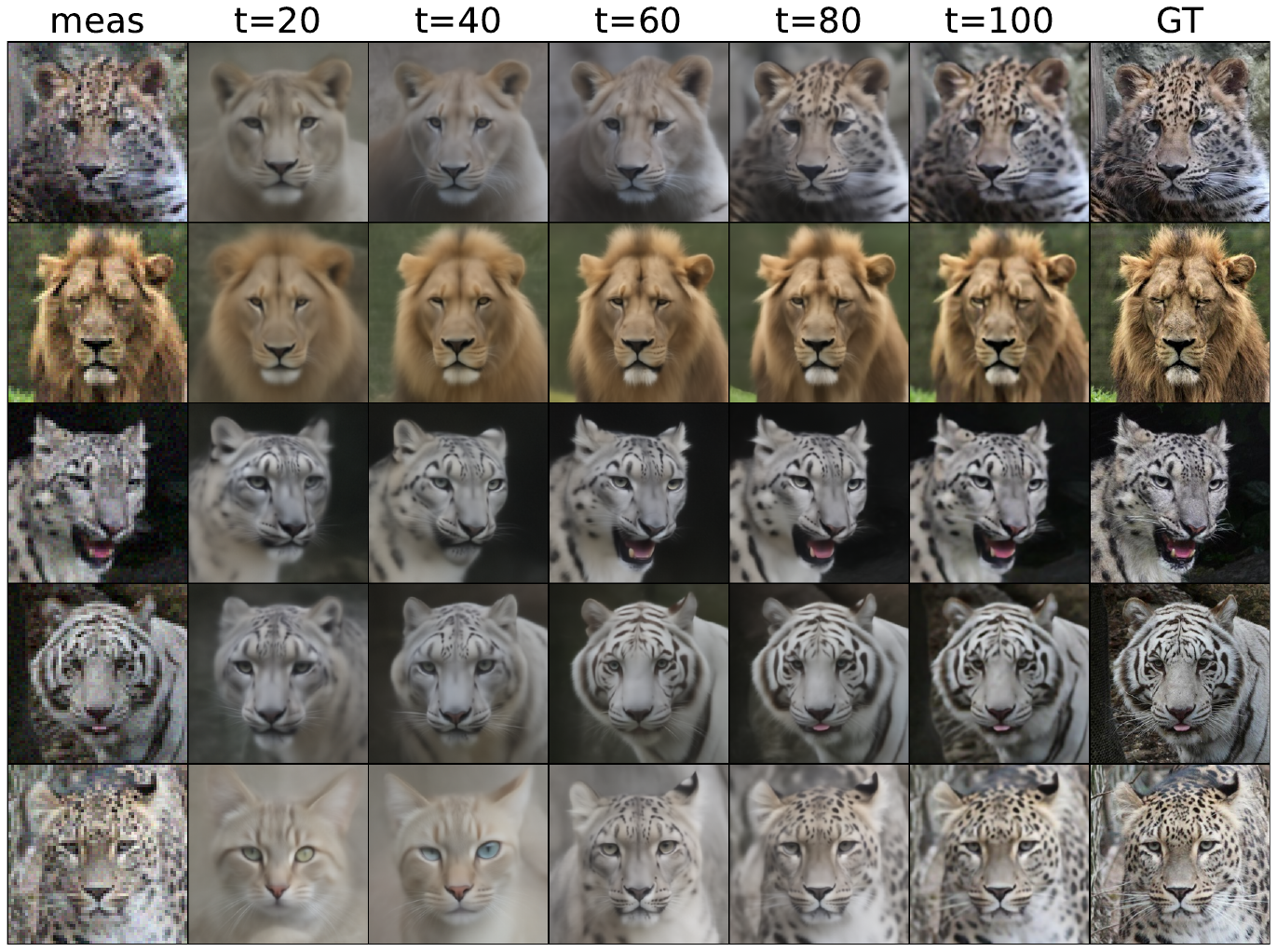}
    \caption{Progression of SR $\times 4$ with lookahead $t=s$}
    \label{fig:sr_prog_look0}
\end{figure}

\begin{figure}[H]
    \centering
    \includegraphics[width=0.85\linewidth]{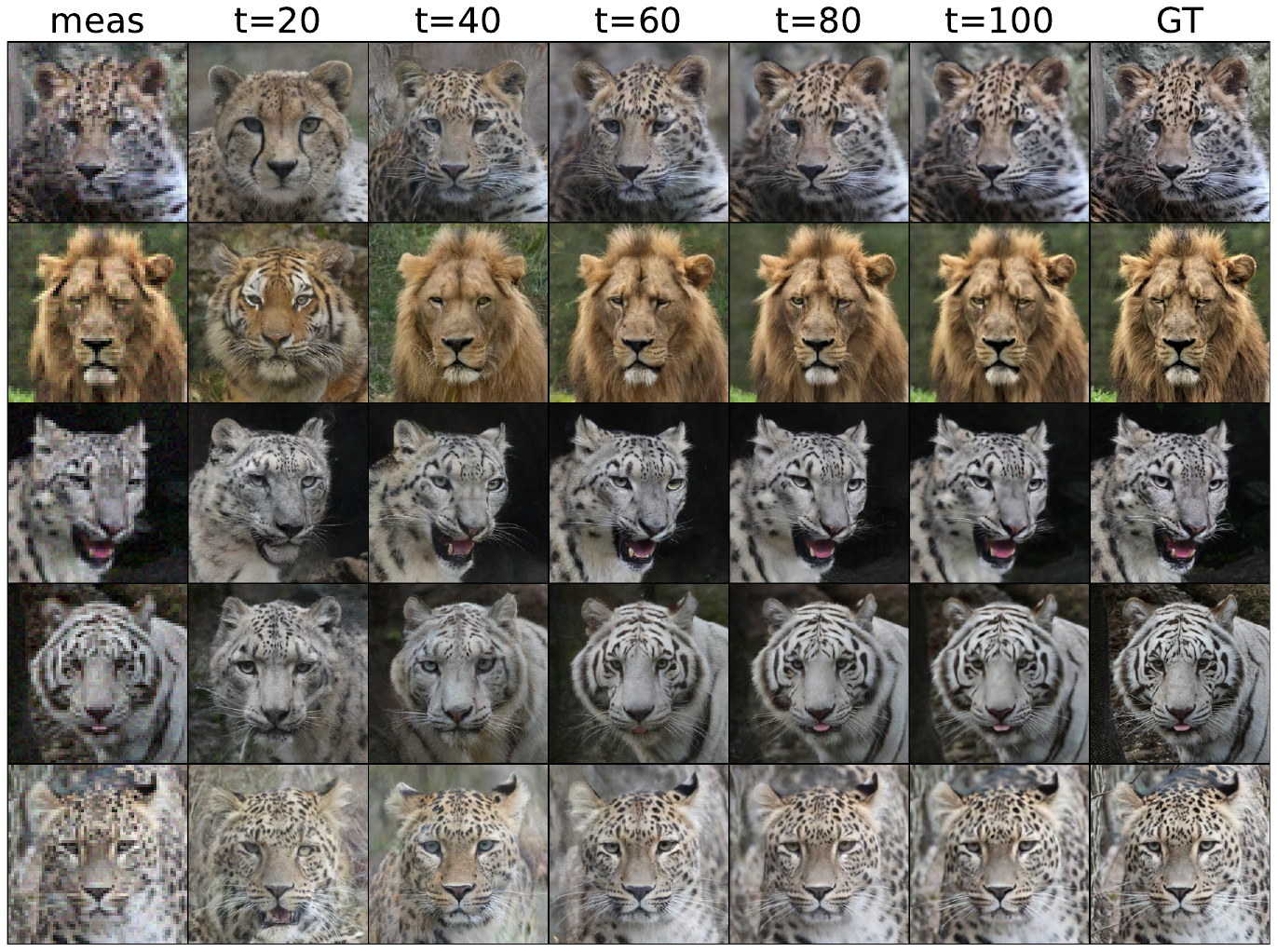}
    \caption{Progression of SR $\times 4$ with lookahead $t=1$}
    \label{fig:sr_prog_look1}
\end{figure}

% \subsection{Ablation experiments}

\subsection{Performance as a function of the number of PnP iterations}
\label{app:num_steps_ablation}

We study how the two endpoint lookaheads, $t = s$ and $t = 1$, depend on the number of PnP iterations $N$. Results for AFHQ are reported on super-resolution $\times 4$ (Fig.~\ref{fig:ablation_num_steps_sr}), Gaussian deblurring (Fig.~\ref{fig:ablation_num_steps_gd}) and box inpainting (Fig.~\ref{fig:ablation_num_steps_boxinp});
we also include for CelebA in Fig.~\ref{fig:ablation_num_steps_md_celeba}.

The two regimes behave qualitatively differently. 
At $t = s$, both RMSE and FID decrease monotonically with $N$ and plateau, consistent with the local-contraction analysis of Theorem~\ref{thm:convergence}. 
At $t = 1$, the perceptual trajectory is non-monotonic: FID drops sharply within the first few tens of iterations, reaches its minimum, and then drifts back up as $N$ continues to grow. 
The effect is most pronounced for Gaussian deblurring, where the best FID at $t = 1$ ($\approx 11$ at $N \approx 20$) is reached an order of magnitude earlier than the best FID at $t = s$ ($\approx 23.5$ at $N \approx 300$).
This behavior is consistent with the lookahead-dependent contraction factor in Theorem~\ref{thm:convergence}. 
The local Lipschitz constant $L_D^{\mathrm{loc}}(t)$ grows with $t$, reflecting the transition from a contractive MMSE estimator at $t = s$ to a less regular, perceptually-oriented estimator at $t = 1$.  
RMSE remains well-behaved in both cases, indicating that the drift is along the perceptual axis.

The practical takeaway is that the two endpoints call for different iteration schemes: the perceptual endpoint ($t = 1$) is best run for fewer steps than the distortion endpoint ($t = s$). 
The values of $N$ used in our main experiments (Appendix~\ref{app:hyperparamters}) reflect this asymmetry.
This behavior also explains why flow maps and consistency models are useful denoisers for solving restoration problem with few-steps as in~\citep{spagnoletti2025latino, gulle2025consistency}.

\begin{figure}[H]
    \centering
    \includegraphics[width=0.7\linewidth]{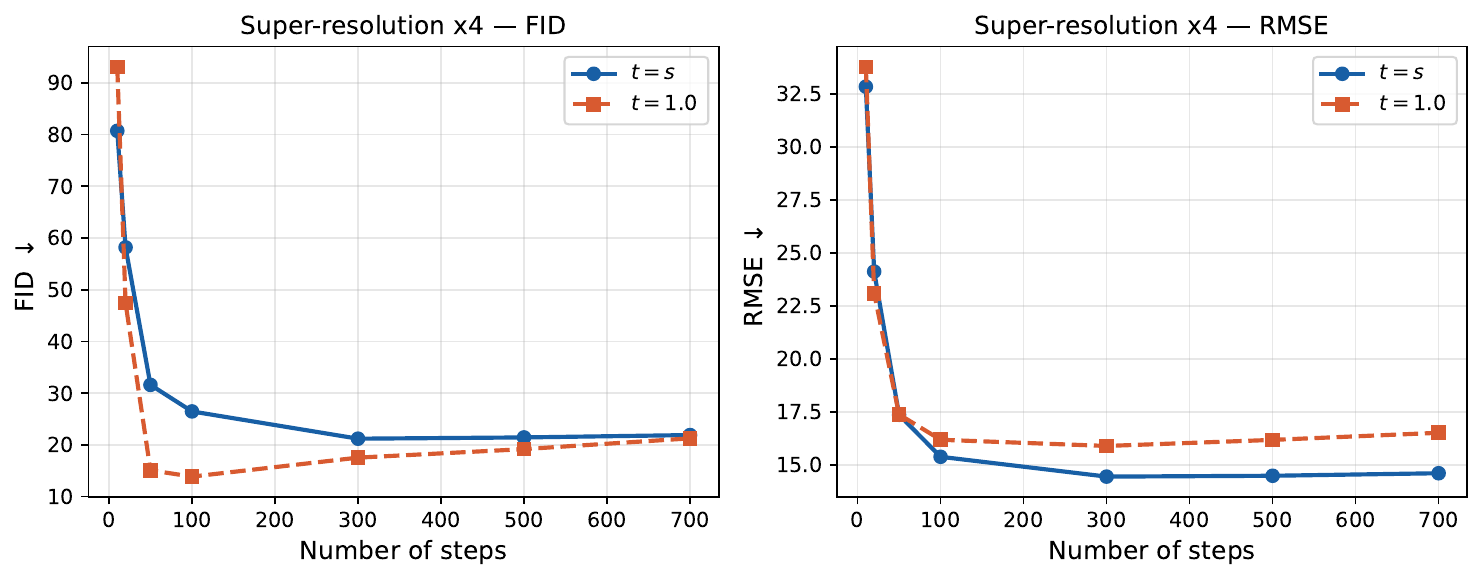}
    \caption{{AFHQ - Super-resolution $\times 4$.} Ablation of PnP-flow as a function of the number of steps.}
    \label{fig:ablation_num_steps_sr}
\end{figure}

\begin{figure}[H]
    \centering
    \includegraphics[width=0.7\linewidth]{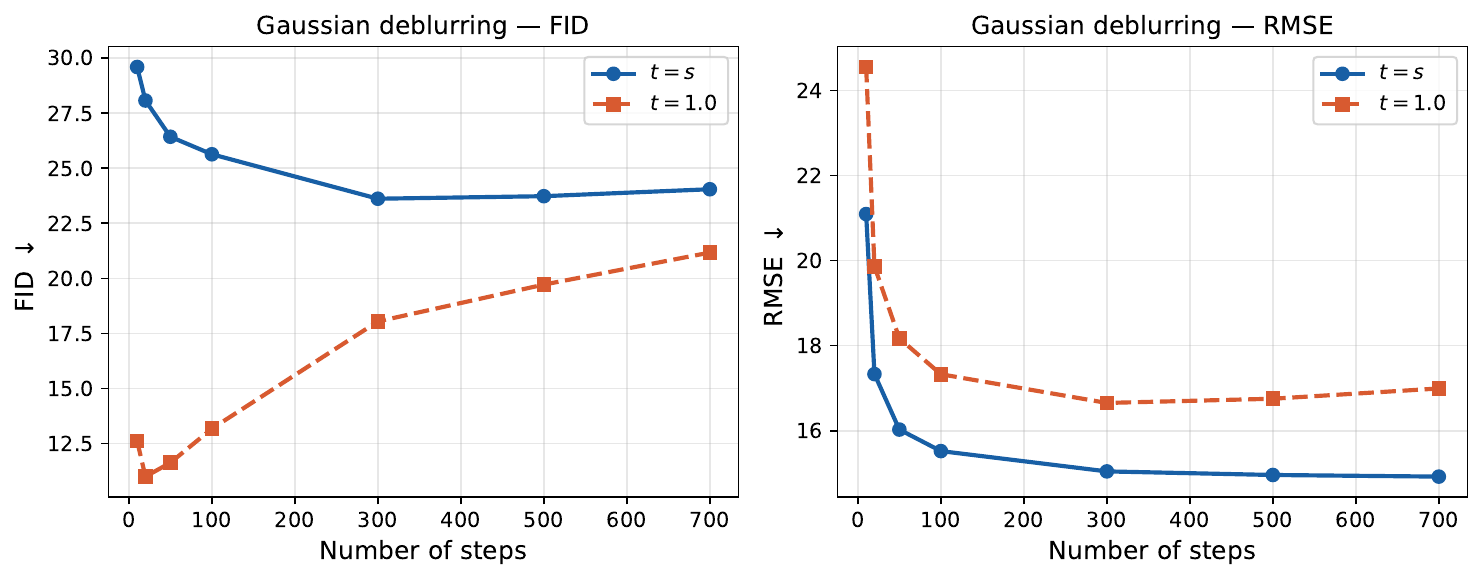}
    \caption{{AFHQ - Gaussian deblurring.} Ablation of PnP-flow as a function of the number of steps.}
    \label{fig:ablation_num_steps_gd}
\end{figure}

\begin{figure}[H]
    \centering
    \includegraphics[width=0.7\linewidth]{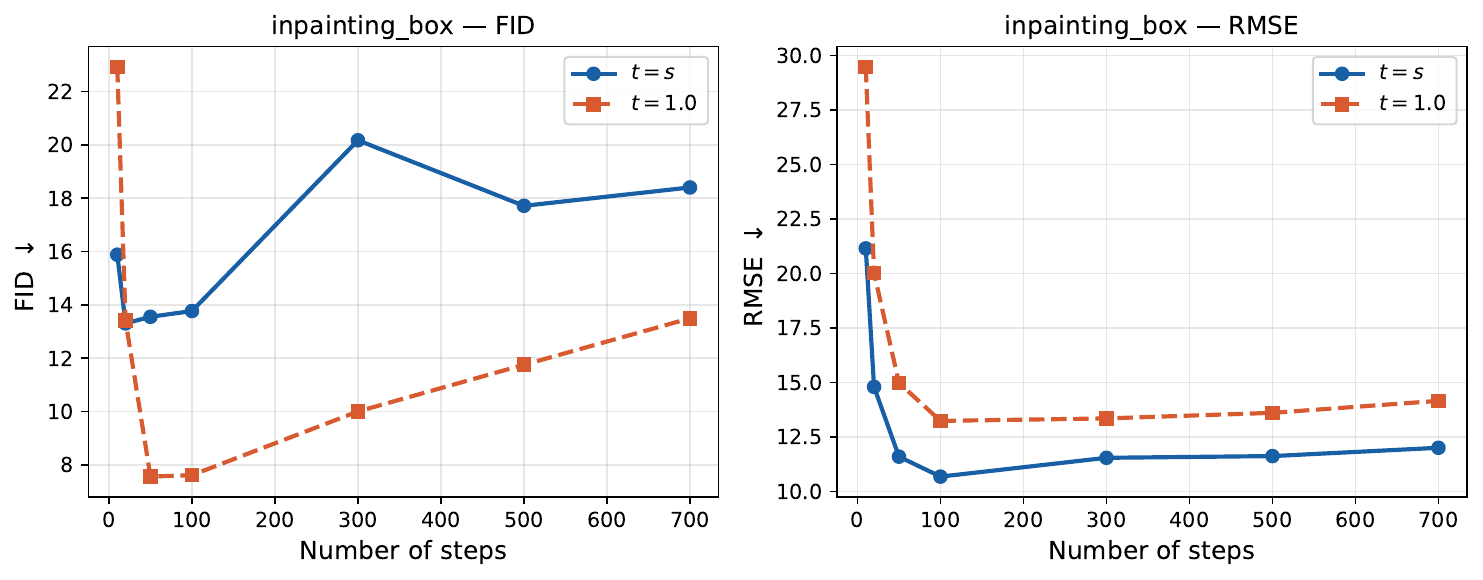}
    \caption{{AFHQ - Box inpainting.} Ablation of PnP-flow as a function of the number of steps.}
    \label{fig:ablation_num_steps_boxinp}
\end{figure}

\begin{figure}[H]
    \centering
    \includegraphics[width=0.7\linewidth]{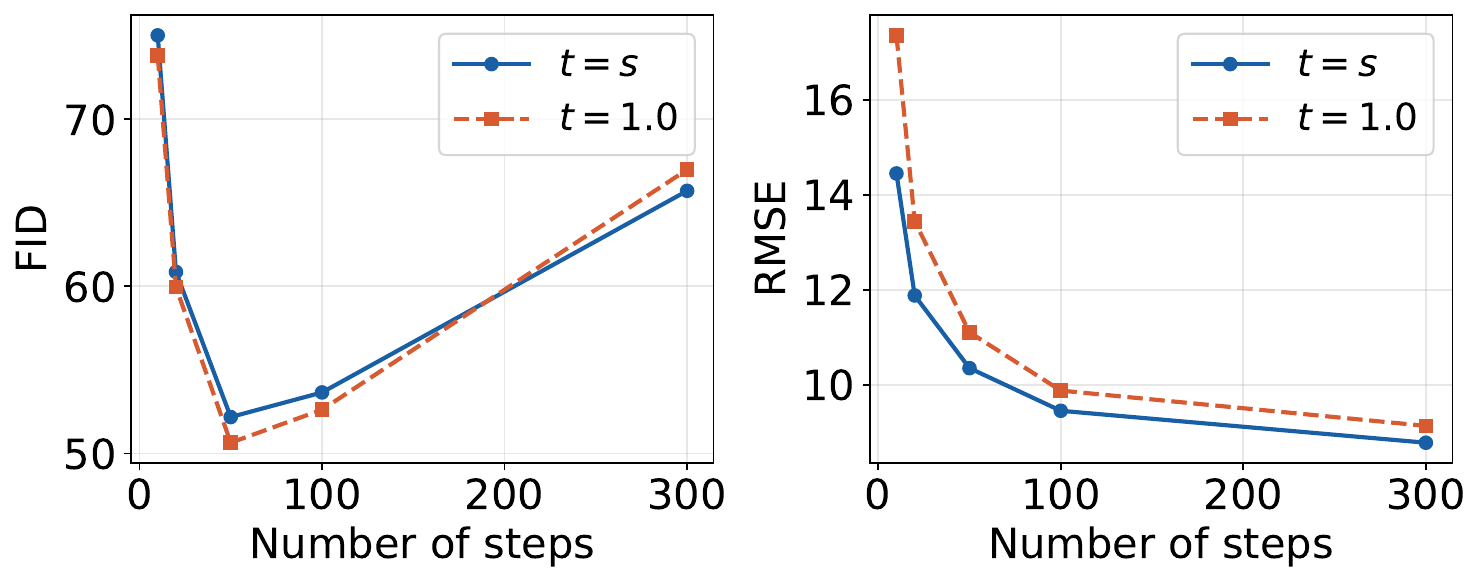}
    \caption{CelebA - Motion deblurring. Ablation of PnP-flow as a function of the number of steps.}
    \label{fig:ablation_num_steps_md_celeba}
\end{figure}

\subsection{Failure modes and limitations}
\label{app:failures_limitations}

While our contributions is using flow maps as denoisers in a PnP scheme, enhancing the perceptual quality of the estimated image, we observed that for some degradations, the DP traversing might not hold exactly.
For instance, in phase retrieval with oversample of 4, we observe that PnP with flow maps outperform the instantaneous denoiser case ($t=s$) only after certain number of steps, as shown in Fig.~\ref{fig:ablation_phase}.
Furthermore, the overall performance is not strong, as shown quantiative in Table~\ref{table:non_linear_phase} and qualitative in Fig.~\ref{fig:phase}.

\begin{figure}[H]
    \centering
    \includegraphics[width=0.5\linewidth]{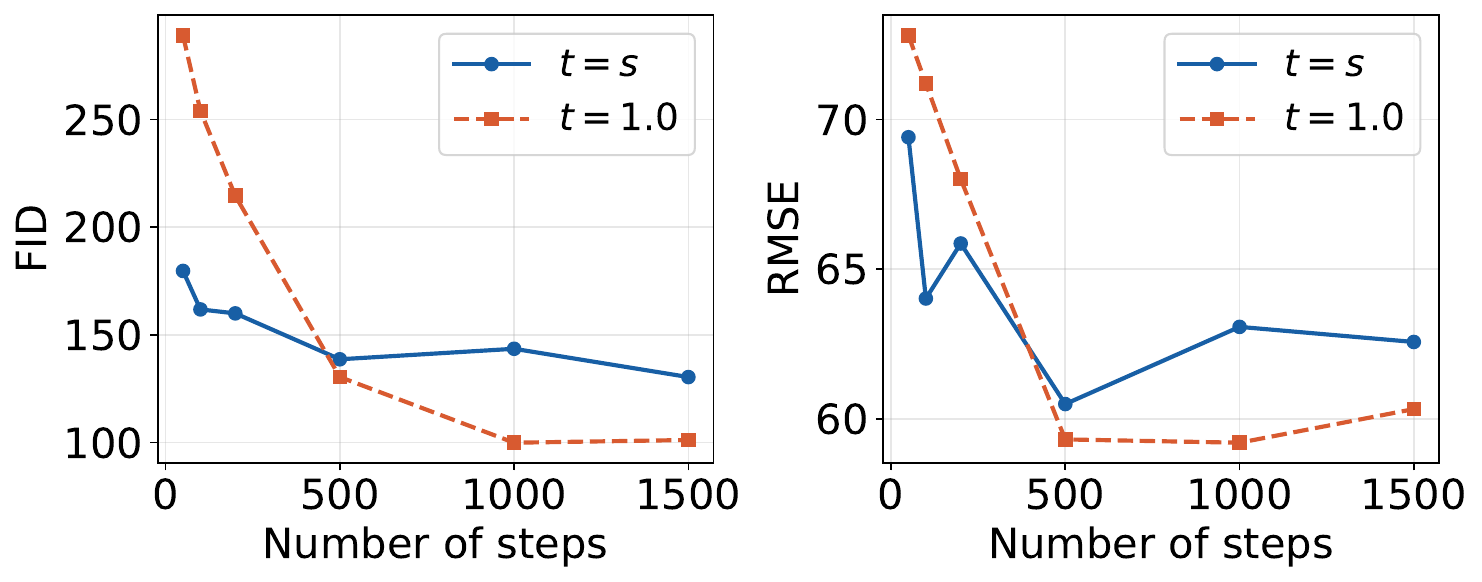}
    \caption{Performance as a function of the number of steps}
    \label{fig:ablation_phase}
\end{figure}

\begin{table}[H]
\centering
\caption{\small{Quantitative results for Phase retrieval w/$\sigma = 0.01$ on AFHQ datasets. Best results are in \textbf{bold}.}}
\label{table:non_linear_phase}
\setlength{\tabcolsep}{4pt}
\scalebox{0.7}{
\begin{tabular}{@{} r cccc @{}}
\toprule
\multirow{2}{*}{\textbf{Sampler}} & \multicolumn{4}{c}{\textbf{AFHQ $256\times256$}} \\
\cmidrule(r){2-5}
& \textbf{PSNR$\uparrow$} & \textbf{LPIPS$\downarrow$} & \textbf{FID$\downarrow$} & \textbf{DISTS$\downarrow$} \\
\midrule
PnP-Flow ($t = s$)  & \textbf{17.36} & 0.454 & 143.55 & 0.316 \\
PnP-Flow ($t = 1$)  & 20.88 & \textbf{0.317} & \textbf{99.95} & \textbf{0.240} \\
\bottomrule
\end{tabular}
}
\vspace{-0.2cm}
\end{table}

\begin{figure}
    \centering
    \includegraphics[width=0.5\linewidth]{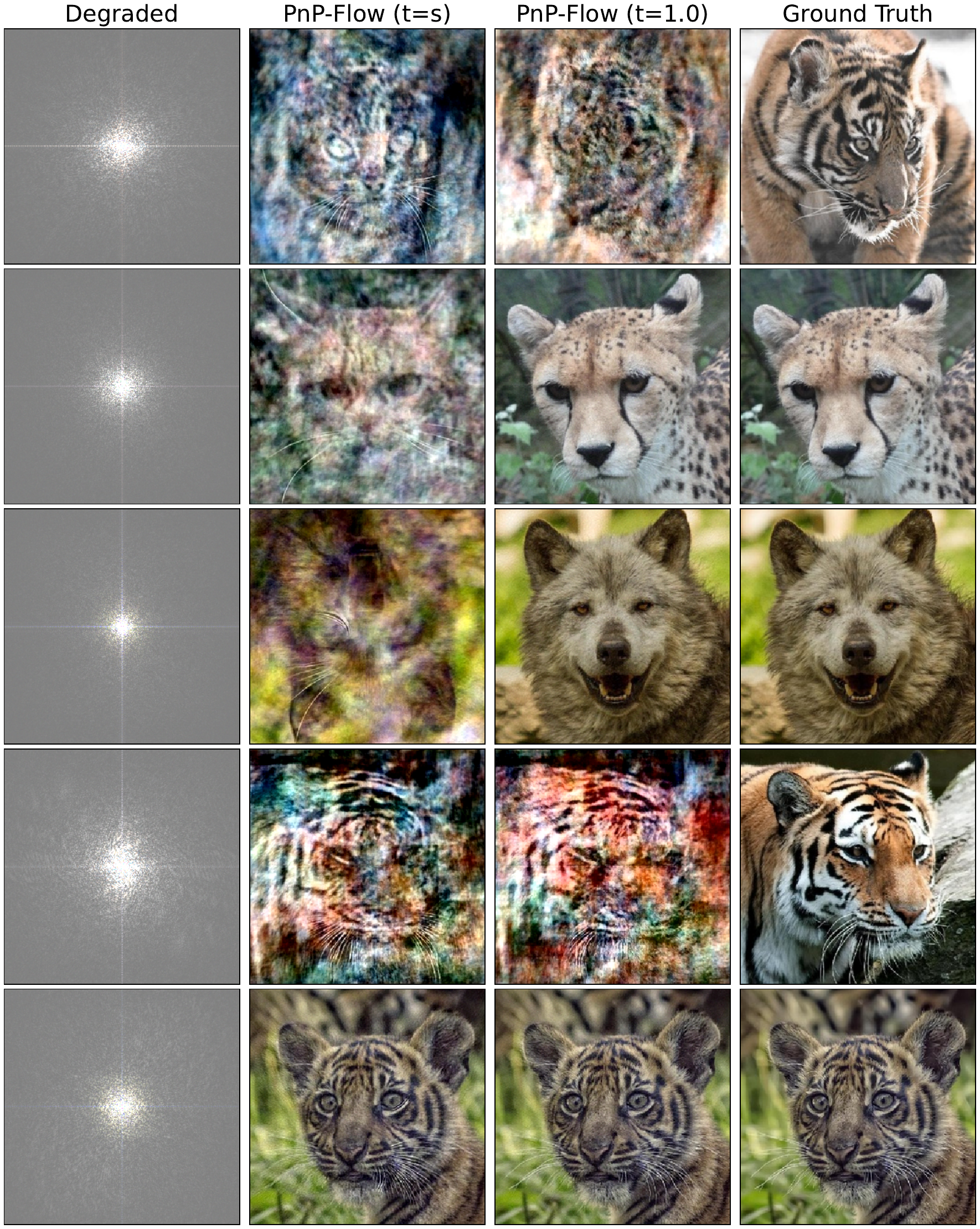}
    \caption{Reconstruction for phase retrieval}
    \label{fig:phase}
\end{figure}

\section{Proofs and theoretical results}
\label{app_general:proofs}

\subsection{Average denoiser.}
\label{app:avg_denoiser}

Recall the definition of the average denoiser
\begin{equation}
D_{s,t}(\bbx) = \bbx + (1 - s)\,\bbv(\bbx, s, t),
\end{equation}
where $\bbv(\bbx,s,t)$ denotes the average velocity of the flow map over $[s,t]$, and the flow map is parameterized as
\begin{equation}
X_{s,t}(\bbx) = \bbx + (t - s)\,\bbv(\bbx,s,t).
\end{equation}

\begin{proposition}[Basic properties]
For all $0 \le s \le u \le t \le 1$, the average denoiser satisfies:

\begin{enumerate}
\item \textbf{Flow-map relation.}
\begin{equation}
X_{s,t}(\bbx)
= \frac{1 - t}{1 - s}\,\bbx + \frac{t - s}{1 - s}\,D_{s,t}(\bbx).
\end{equation}

\item \textbf{Diagonal consistency (MMSE).}
\begin{equation}
D_{s,s}(\bbx) = D_s(\bbx) = \mathbb{E}[\bbx_1 \mid \bbx_s = \bbx].
\end{equation}

\item \textbf{Semigroup structure.}
\begin{equation}
D_{s,t}(\bbx)
= \gamma\, D_{s,u}(\bbx)
+ (1 - \gamma)\, D_{u,t}(X_{s,u}(\bbx)),
\end{equation}
with
\begin{equation}
\gamma = \frac{(1 - t)(u - s)}{(1 - u)(t - s)} \in [0,1].
\end{equation}

\end{enumerate}
\end{proposition}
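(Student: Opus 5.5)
The three items follow from the two parameterizations $X_{s,t}(\bbx) = \bbx + (t-s)\bbv(\bbx,s,t)$ and $D_{s,t}(\bbx) = \bbx + (1-s)\bbv(\bbx,s,t)$. I would prove them in order, each by eliminating the average velocity.

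\textbf{Flow-map relation.} For $t>s$, the definition of the average denoiser gives $\bbv(\bbx,s,t) = (D_{s,t}(\bbx)-\bbx)/(1-s)$ whenever $s<1$. Substituting into the flow-map parameterization yields
\[
X_{s,t}(\bbx) = \bbx + \tfrac{t-s}{1-s}\bigl(D_{s,t}(\bbx)-\bbx\bigr).
\]
Collecting the coefficient of $\bbx$, namely $1-\tfrac{t-s}{1-s} = \tfrac{1-t}{1-s}$, gives the claim. The case $s=t$ is trivial, and $s=1$ forces $s=t=1$.

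\textbf{Diagonal consistency.} I would invoke the fact from Appendix~\ref{app:flowmaps} that $\bbv(\bbx,s,s) = \bbv(\bbx,s)$. This is the limit $t\to s$ of the average velocity, or equivalently the on-diagonal constraint enforced by the loss~\eqref{eq:dsm_loss}. Plugging it into Definition~\ref{def:avg_denoiser} gives $D_{s,s}(\bbx) = \bbx + (1-s)\bbv(\bbx,s)$. By Tweedie's relation~\eqref{eq:inst_denoiser}, this equals $\mathbb{E}[\bbx_1\mid \bbx_s=\bbx]$.

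\textbf{Semigroup structure.} I would start from the semigroup property $X_{s,t} = X_{u,t}\circ X_{s,u}$ and apply item 1 three times. Write $\bby := X_{s,u}(\bbx)$.
\begin{itemize}
\item Item 1 on $[u,t]$ gives $X_{s,t}(\bbx) = \tfrac{1-t}{1-u}\bby + \tfrac{t-u}{1-u}D_{u,t}(\bby)$.
\item Item 1 on $[s,u]$ gives $\bby = \tfrac{1-u}{1-s}\bbx + \tfrac{u-s}{1-s}D_{s,u}(\bbx)$.
\item Item 1 on $[s,t]$ expresses the left-hand side as $\tfrac{1-t}{1-s}\bbx + \tfrac{t-s}{1-s}D_{s,t}(\bbx)$.
\end{itemize}
After substituting the second expression into the first, the $\bbx$ terms on both sides carry the same coefficient $\tfrac{1-t}{1-s}$ and cancel. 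Multiplying the remainder by $\tfrac{1-s}{t-s}$ gives
\[
D_{s,t}(\bbx) = \tfrac{(1-t)(u-s)}{(1-u)(t-s)}\,D_{s,u}(\bbx) + \tfrac{(t-u)(1-s)}{(1-u)(t-s)}\,D_{u,t}(\bby).
\]
The first coefficient is $\gamma$. One then checks that the two coefficients sum to one by expanding the numerator: $(1-t)(u-s) + (t-u)(1-s) = (1-u)(t-s)$.

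Membership $\gamma\in[0,1]$ follows from nonnegativity of both coefficients, given $s\le u\le t\le 1$ and that they sum to one.

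The only delicate point is the degenerate endpoints, where a denominator in $\gamma$ vanishes:
\begin{itemize}
\item \emph{$u=1$:} this forces $t=1$, so $D_{u,t}$ is taken at time $1$.
\item \emph{$t=s$:} this forces $u=s=t$.
\end{itemize}
I would state the identity for $s<u<t<1$ and handle the endpoints by continuity or by convention, e.g.\ $\gamma=0$ when $t=1>u$. This bookkeeping is the main (minor) obstacle; the algebra itself is routine.
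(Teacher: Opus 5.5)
Your proposal is correct and follows the same route as the paper. You eliminate the average velocity for item 1, use $\bbv(\bbx,s,s)=\bbv(\bbx,s)$ together with Tweedie for item 2, and for item 3 substitute item 1 into $X_{s,t}=X_{u,t}\circ X_{s,u}$ and match coefficients, explicitly carrying out the algebra (including $(1-t)(u-s)+(t-u)(1-s)=(1-u)(t-s)$) that the paper compresses into ``after rearrangement.'' Your endpoint bookkeeping goes beyond the paper, and it needs no convention: when $u=t=1$ or $s=u=t$, both sides agree for every $\gamma\in[0,1]$ (since $D_{1,1}=\mathrm{id}$, $X_{s,1}=D_{s,1}$ and $X_{s,s}=\mathrm{id}$), while $t=1>u$ is not degenerate because the formula directly gives $\gamma=0$.
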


\paragraph{Proof.}

\textbf{(1) Flow-map relation.}
Using the parameterization,
\[
D_{s,t}(\bbx) - \bbx = (1 - s)\,\bbv(\bbx,s,t),
\quad
X_{s,t}(\bbx) - \bbx = (t - s)\,\bbv(\bbx,s,t).
\]
Eliminating $\bbv$ gives
\[
X_{s,t}(\bbx)
= \bbx + \frac{t - s}{1 - s}\big(D_{s,t}(\bbx) - \bbx\big)
= \frac{1 - t}{1 - s}\,\bbx + \frac{t - s}{1 - s}\,D_{s,t}(\bbx).
\]

\textbf{(2) Diagonal consistency.}
By definition of the flow map,
\[
\lim_{t \to s} \bbv(\bbx,s,t) = \bbv(\bbx,s),
\]
the instantaneous velocity. Using Tweedie's relation,
\[
D_s(\bbx) = \bbx + (1 - s)\,\bbv(\bbx,s),
\]
which matches $D_{s,t}$ at $t=s$.

\textbf{(3) Semigroup structure.}
From the flow map semigroup property,
\[
X_{s,t}(\bbx) = X_{u,t}(\bbX_{s,u}(\bbx)).
\]
Substitute the affine forms from (i) for each map and match coefficients of $\bbx$ and $D_{s,t}$.
After rearrangement, this yields the convex combination:
\[
D_{s,t}(\bbx)
= \gamma\, D_{s,u}(\bbx)
+ (1 - \gamma)\, D_{u,t}(X_{s,u}(\bbx)),
\]
with $\gamma$ as stated.

\subsection{Full derivation for the Gaussian case (Theorem~\ref{thm:optimal})}
\label{app:gaussian}

We provide the complete derivation of the average denoiser gain and
its connection to the optimal DP estimator for the scalar Gaussian
target.

\subsubsection{Setup}

Let $p_1 = \ccalN(0, \sigma_p^2)$ with $\sigma_p^2 \leq 1$, and let
$x_\tau = \tau x_1 + (1-\tau)\epsilon$ with $x_1 \sim p_1$,
$\epsilon \sim \ccalN(0,1)$, and $x_1 \perp \epsilon$. Then
$x_\tau \sim \ccalN(0, \sigma_\tau^2)$ with
$\sigma_\tau^2 := \tau^2\sigma_p^2 + (1-\tau)^2$. The instantaneous
denoiser and velocity are linear:
\begin{equation}
    D_\tau(x) = A_\tau\,x, \quad A_\tau = \frac{\tau\sigma_p^2}{\sigma_\tau^2},
    \qquad
    v_\tau(x) = B_\tau\,x, \quad B_\tau = \frac{A_\tau - 1}{1-\tau} 
    = \frac{\tau(\sigma_p^2+1) - 1}{\sigma_\tau^2}.
\end{equation}

By linearity, the ODE $\dot x_\tau = v_\tau(x_\tau) = B_\tau x_\tau$
admits the closed-form solution $x_t = \Phi(t,s)\,x_s$ with
$\Phi(t,s) = \exp\!\bigl(\int_s^t B_\tau\,d\tau\bigr)$. The average
velocity over $[s,t]$ then satisfies $x_t = x_s + (t-s)\,v_{s,t}(x_s)$
with $v_{s,t}(x_s) = [\Phi(t,s) - 1]/(t-s) \cdot x_s$, and the
average denoiser $D_{s,t}(x) = x + (1-s)\,v_{s,t}(x)$ becomes
$D_{s,t}(x_s) = \Lambda(s,t)\,x_s$ with:
\begin{equation}
    \Lambda(s,t) = \frac{(1-s)\,\Phi(t,s) - (1-t)}{t-s}.
    \label{eq:app_Lambda_abstract}
\end{equation}

We now introduce a few useful Lemmas.

% \subsubsection{Closed form for $\Phi(t,s)$ and $\Lambda(s,t)$}

\begin{lemma}[Closed form for $\Phi(t,s)$ and $\Lambda(s,t)$]
    \label{lem:app_int_B}
    $\Phi(t,s) = \sigma_t/\sigma_s$.
\end{lemma}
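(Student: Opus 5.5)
We need $\Phi(t,s)=\exp(\int_s^t B_\tau\,d\tau)=\sigma_t/\sigma_s$. The plan is to show that $B_\tau$ is exactly the logarithmic derivative of $\sigma_\tau$, i.e.
\[
B_\tau = \frac{d}{d\tau}\log\sigma_\tau = \frac{1}{2}\,\frac{(\sigma_\tau^2)'}{\sigma_\tau^2}.
\]
The fundamental theorem of calculus then gives $\int_s^t B_\tau\,d\tau=\log\sigma_t-\log\sigma_s$, and exponentiating yields the claim.

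\textbf{Derivative of the variance.} Differentiating $\sigma_\tau^2=\tau^2\sigma_p^2+(1-\tau)^2$ gives
\[
(\sigma_\tau^2)'=2\tau\sigma_p^2-2(1-\tau)=2\bigl[\tau(\sigma_p^2+1)-1\bigr].
\]
Halving this and dividing by $\sigma_\tau^2$ recovers exactly the expression $B_\tau=[\tau(\sigma_p^2+1)-1]/\sigma_\tau^2$ from the setup.

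\textbf{Consistency of the two forms of $B_\tau$.} I would also confirm that the two stated forms of $B_\tau$ agree, since the setup defines $B_\tau=(A_\tau-1)/(1-\tau)$ via Tweedie. We have
\[
A_\tau-1=\frac{\tau\sigma_p^2-\sigma_\tau^2}{\sigma_\tau^2}=\frac{(1-\tau)\bigl[\tau(\sigma_p^2+1)-1\bigr]}{\sigma_\tau^2},
\]
using $\tau\sigma_p^2-\tau^2\sigma_p^2-(1-\tau)^2=(1-\tau)(\tau\sigma_p^2-1+\tau)$. Dividing by $(1-\tau)$ gives the second form.

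\textbf{Well-definedness and the closed form of $\Lambda$.} The integral is well defined because $\sigma_\tau^2>0$ on $[0,1]$: it is a sum of squares that cannot vanish simultaneously. The only delicate point is the endpoint $\tau=1$, where $(A_\tau-1)/(1-\tau)$ is singular in form. The rational expression $[\tau(\sigma_p^2+1)-1]/\sigma_\tau^2$ is continuous up to $\tau=1$, so the integral over $[s,t]\subset[s,1]$ poses no problem. Substituting $\Phi(t,s)=\sigma_t/\sigma_s$ into \eqref{eq:app_Lambda_abstract} and clearing the factor $\sigma_s$ then gives the closed form \eqref{eq:Lambda}. I expect no real obstacle here: the main care is the algebraic identity above and the remark about the endpoint.
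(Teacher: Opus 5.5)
Your proof is correct and follows the same route as the paper: you identify $B_\tau$ as the logarithmic derivative $\tfrac{1}{2}(\sigma_\tau^2)'/\sigma_\tau^2$ of $\sigma_\tau$, integrate from $s$ to $t$, and exponentiate. Your extra checks are sound additions that the paper leaves implicit: the agreement of the two forms of $B_\tau$, and the positivity of $\sigma_\tau^2$ on $[s,1]$, which at $\tau=1$ tacitly requires $\sigma_p>0$.
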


\begin{proof}
A direct computation gives
$\frac{d}{d\tau}\tfrac{1}{2}\log\sigma_\tau^2 =
\dot\sigma_\tau^2/(2\sigma_\tau^2) =
[\tau(\sigma_p^2+1) - 1]/\sigma_\tau^2 = B_\tau$.
Hence $\int_s^t B_\tau\,d\tau = \log(\sigma_t/\sigma_s)$ and
$\Phi(t,s) = \sigma_t/\sigma_s$.
\end{proof}

Substituting Lemma~\ref{lem:app_int_B}
into~\eqref{eq:app_Lambda_abstract} yields the closed form stated
in~\eqref{eq:Lambda}:
\begin{equation}
    \Lambda(s,t) = \frac{(1-s)\sigma_t - (1-t)\sigma_s}{\sigma_s(t-s)}.
\end{equation}

\paragraph{Boundary values.}
At $t = 1$, $\sigma_1 = \sigma_p$ gives directly
$\Lambda(s,1) = \sigma_p/\sigma_s$. At $t = s$, applying L'Hôpital
to~\eqref{eq:Lambda} and using
$\dot\sigma_\tau = [\tau(\sigma_p^2+1)-1]/\sigma_\tau$:
\begin{equation}
    \Lambda(s,s) = \frac{(1-s)\dot\sigma_s + \sigma_s}{\sigma_s}
    = 1 + \frac{(1-s)[s(\sigma_p^2+1)-1]}{\sigma_s^2}
    = \frac{s\sigma_p^2}{\sigma_s^2} = A_s,
\end{equation}
where the last equality uses
$\sigma_s^2 + (1-s)[s(\sigma_p^2+1)-1] = s\sigma_p^2$, which one
verifies by expanding $\sigma_s^2 = s^2\sigma_p^2 + (1-s)^2$.

% \subsubsection{Monotonicity of $\Lambda(s,t)$}

\begin{lemma}[Monotonicity of $\Lambda(s,t)$]
    \label{lem:app_monotone}
    For fixed $s \in (0,1)$, $t \mapsto \Lambda(s,t)$ is strictly
    increasing on $[s,1]$.
\end{lemma}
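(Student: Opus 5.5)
The plan is to rewrite $\Lambda(s,t)$ so that its $t$-dependence is carried entirely by a difference quotient of $\tau\mapsto\sigma_\tau$, and then reduce monotonicity to strict convexity of that function. Starting from~\eqref{eq:app_Lambda_abstract} with $\Phi(t,s)=\sigma_t/\sigma_s$ (Lemma~\ref{lem:app_int_B}), I split the numerator as
\[
(1-s)\Phi(t,s)-(1-t) = (1-s)\bigl(\Phi(t,s)-1\bigr) + (t-s).
\]
This gives
\[
\Lambda(s,t) = 1 + \frac{1-s}{\sigma_s}\cdot\frac{\sigma_t-\sigma_s}{t-s}, \qquad t\in(s,1].
\]
At $t=s$ the quotient is understood as its limit $\dot\sigma_s$, which is consistent with the L'H\^opital computation of $\Lambda(s,s)=A_s$ above. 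For fixed $s\in(0,1)$, the prefactor $(1-s)/\sigma_s$ is a positive constant. Hence $t\mapsto\Lambda(s,t)$ is strictly increasing on $[s,1]$ if and only if the slope $q(t):=(\sigma_t-\sigma_s)/(t-s)$ is strictly increasing in $t$.

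The key step is to show that $g(\tau):=\sigma_\tau=\sqrt{Q(\tau)}$ is strictly convex on $[0,1]$. Here
\[
Q(\tau)=a\tau^2-2\tau+1, \qquad a:=\sigma_p^2+1,
\]
and $Q>0$ on $[0,1]$, since it is a sum of squares with $\tau$ and $1-\tau$ not both zero. For $g=\sqrt{Q}$ one has $g''=(2QQ''-Q'^2)/(4Q^{3/2})$. A direct expansion should give
\[
2QQ''-Q'^2 = 4a(a\tau^2-2\tau+1)-(2a\tau-2)^2 = 4(a-1) = 4\sigma_p^2 > 0.
\]
Thus $g''>0$ and $g$ is strictly convex. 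The hypothesis $\sigma_p^2\le 1$ is not needed for this step; only $\sigma_p^2>0$ is used.

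It remains to invoke the standard slope (three-chord) lemma: for a strictly convex $C^1$ function $g$, the map $t\mapsto (g(t)-g(s))/(t-s)$, extended by $g'(s)$ at $t=s$, is strictly increasing. A self-contained route is to note that $q'(t)=\bigl[g'(t)(t-s)-(g(t)-g(s))\bigr]/(t-s)^2$. The numerator vanishes at $t=s$, and its derivative is $g''(t)(t-s)>0$ for $t>s$, so the numerator is positive and $q'(t)>0$ on $(s,1]$. Continuity at $t=s$ then gives strict monotonicity on the closed interval $[s,1]$.

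The only real obstacle is the algebraic identity $2QQ''-Q'^2=4\sigma_p^2$. I expect it to go through cleanly, because the $\tau$-dependence cancels exactly; the remaining steps are routine.
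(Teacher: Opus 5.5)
Your proof is correct and takes essentially the same route as the paper: both reduce $\partial_t\Lambda(s,t)>0$ to positivity of $(t-s)\dot\sigma_t-(\sigma_t-\sigma_s)$ via strict convexity of $\tau\mapsto\sigma_\tau$. The paper does this by comparing $\int_s^t\dot\sigma_u\,du$ with $(t-s)\dot\sigma_t$, while you note that this numerator vanishes at $t=s$ and has derivative $\ddot\sigma_t(t-s)>0$. Your identity $2QQ''-Q'^2=4\sigma_p^2$ (i.e.\ $\ddot\sigma_\tau=\sigma_p^2/\sigma_\tau^3$) supplies the ``direct calculation'' the paper leaves unstated. Your continuity argument also covers the endpoint $t=s$, where the paper's derivative formula is not defined.
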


\begin{proof}
Differentiating~\eqref{eq:Lambda} in $t$ and writing
$\sigma_t - \sigma_s = \int_s^t \dot\sigma_u\,du$:
\begin{equation}
    \partial_t \Lambda(s,t) = \frac{1-s}{\sigma_s(t-s)^2}
    \left[(t-s)\dot\sigma_t - \int_s^t \dot\sigma_u\,du\right].
\end{equation}
A direct calculation shows $\ddot\sigma_\tau > 0$ for all
$\tau \in (0,1)$, so $\dot\sigma_\tau$ is strictly increasing.
Therefore $\int_s^t \dot\sigma_u\,du < (t-s)\dot\sigma_t$ for $s < t$,
giving $\partial_t\Lambda > 0$.
\end{proof}

\subsubsection{Connection to the optimal DP estimator}

\paragraph{The DP optimal estimator from~\citet{freirich2021theory}.}
The MMSE estimator $\hat{X}_0 = A_s x_s$ has standard deviation
$\hat\sigma = A_s\sigma_s = s\sigma_p^2/\sigma_s$, and its
perceptual gap to the prior is $P^* = W_2(p_{\hat{X}_0}, p_1) =
|\sigma_p - \hat\sigma|$. The $W_2$-optimal transport map from
$\ccalN(0,\hat\sigma^2)$ to $p_1 = \ccalN(0,\sigma_p^2)$ is the
scaling $T^* = \sigma_p/\hat\sigma$. By
\citet[Theorem~3]{freirich2021theory}, the minimum-distortion
deterministic estimator at perception level $P = \alpha P^*$
($\alpha \in [0,1]$) is the displacement interpolant
\begin{equation}
    \hat{X}_\alpha^{\mathrm{det}}
    = \bigl[\alpha + (1-\alpha)T^*\bigr] A_s\, x_s
    =: \Gamma(\alpha)\,x_s,
    \label{eq:app_Gamma}
\end{equation}
with $\Gamma(1) = A_s$ (MMSE) and
$\Gamma(0) = T^* A_s = \sigma_p/\sigma_s$ (perfect perception). The
corresponding distortion-perception function is
$D(P) = D^* + (P^* - P)^2$.

\begin{proposition}[Average denoiser is exactly optimal]
    \label{prop:app_optimal}
    For any $s \in (0,1)$ and $t \in [s,1]$, $\Lambda(s,t) =
    \Gamma(\alpha(s,t))$ where
    \begin{equation}
        \alpha(s,t) = \frac{\sigma_p/\sigma_s - \Lambda(s,t)}
        {\sigma_p/\sigma_s - A_s}
    \end{equation}
    decreases monotonically from $1$ at $t=s$ to $0$ at $t=1$.
\end{proposition}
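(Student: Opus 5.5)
The plan is to treat this as an inversion of an affine map, using the boundary values and Lemma~\ref{lem:app_monotone}.

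First, write $\Gamma(\alpha)$ in the form of \eqref{eq:Gamma}. Since $T^*A_s = \sigma_p/\sigma_s$, expanding \eqref{eq:app_Gamma} gives
\[
\Gamma(\alpha)=\alpha A_s+(1-\alpha)\,\sigma_p/\sigma_s = \sigma_p/\sigma_s-\alpha\,(\sigma_p/\sigma_s-A_s).
\]
So $\Gamma$ is affine in $\alpha$ with slope $-(\sigma_p/\sigma_s-A_s)$.

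Next, check that this slope is nonzero. We have $\sigma_p/\sigma_s - A_s = (\sigma_p/\sigma_s)(1-s\sigma_p/\sigma_s)$, so it suffices to show $s^2\sigma_p^2<\sigma_s^2=s^2\sigma_p^2+(1-s)^2$. This holds strictly for $s\in(0,1)$. Alternatively, it follows from Lemma~\ref{lem:app_monotone} together with the boundary values: $A_s=\Lambda(s,s)<\Lambda(s,1)=\sigma_p/\sigma_s$.

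Then, because $\Gamma$ is affine with nonzero slope, solving $\Gamma(\alpha)=\Lambda(s,t)$ for $\alpha$ gives exactly the stated $\alpha(s,t)$. This proves $\Lambda(s,t)=\Gamma(\alpha(s,t))$ identically.

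For the endpoints, use the boundary values computed earlier. At $t=s$ (via L'Hôpital), $\Lambda(s,s)=A_s$, so $\alpha(s,s)=1$. At $t=1$, $\Lambda(s,1)=\sigma_p/\sigma_s$, so $\alpha(s,1)=0$.

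For monotonicity, $\alpha(s,\cdot)$ is a composition of $\Lambda(s,\cdot)$ with a strictly decreasing affine map, since the denominator is positive. By Lemma~\ref{lem:app_monotone}, $\Lambda(s,\cdot)$ is strictly increasing, so $\alpha(s,\cdot)$ is strictly decreasing. It is continuous in $t$, because $\Lambda$ is continuous on $(s,1]$ and extends continuously to $t=s$ by the L'Hôpital limit. Combining continuity, strict monotonicity and the endpoint values gives $\alpha(s,t)\in[0,1]$, and $t\mapsto\alpha(s,t)$ is a bijection from $[s,1]$ onto $[0,1]$. Since $P=\alpha P^*$ and each $\Gamma(\alpha)$ is the optimal estimator from \citet{freirich2021theory}, this yields the optimal-curve conclusion of Theorem~\ref{thm:optimal}.

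The main obstacle is not in this argument, which is essentially algebraic. It is the ingredient assumed from Lemma~\ref{lem:app_monotone}, namely that $\ddot\sigma_\tau>0$. If I needed to verify it, I would write $\sigma_\tau=\sqrt{q(\tau)}$ with $q$ a convex quadratic. Then $\ddot\sigma_\tau=(2q\ddot q-\dot q^2)/(4q^{3/2})$, and $2q\ddot q-\dot q^2$ equals $-$ the discriminant of $q$, which is $4\sigma_p^2>0$. Hence $\ddot\sigma_\tau>0$.
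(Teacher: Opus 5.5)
Your proposal is correct and follows essentially the same route as the paper. Both write $\Gamma$ as an affine, strictly decreasing function of $\alpha$ onto $[A_s,\sigma_p/\sigma_s]$, invert it against $\Lambda(s,t)$, and obtain the endpoint values and monotonicity from the boundary values of $\Lambda$ and Lemma~\ref{lem:app_monotone}. Your explicit checks that $\sigma_p/\sigma_s - A_s = (\sigma_p/\sigma_s)(1-s\sigma_p/\sigma_s)>0$ and that $2q\ddot q-\dot q^{2}=4\sigma_p^{2}>0$ (hence $\ddot\sigma_\tau>0$) correctly supply steps the paper only asserts or calls a direct calculation.
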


\begin{proof}
By~\eqref{eq:app_Gamma}, $\Gamma$ is affine in $\alpha$ and maps
$[0,1]$ bijectively onto $[A_s,\sigma_p/\sigma_s]$ (since
$\sigma_p/\sigma_s > A_s$ whenever $s < 1$). By the boundary
values of $\Lambda$ and Lemma~\ref{lem:app_monotone}, $\Lambda(s,\cdot)$
is continuous and strictly increasing on $[s,1]$ with
$\Lambda(s,s) = A_s = \Gamma(1)$ and
$\Lambda(s,1) = \sigma_p/\sigma_s = \Gamma(0)$. Inverting the affine
relation $\Gamma(\alpha) = \Lambda(s,t)$ yields $\alpha(s,t)$ as
stated, and the monotonicity follows because $\Gamma$ is decreasing
in $\alpha$ while $\Lambda$ is increasing in $t$.
\end{proof}

\paragraph{Summary.}
As $t$ increases from $s$ to $1$, the average denoiser gain
$\Lambda(s,t)$ traces the affine segment from the MMSE gain $A_s$
to the perfect-perception gain $\sigma_p/\sigma_s$. This segment
parametrizes the $W_2$ geodesic from the MMSE output distribution
$p_{\hat{X}_0}$ to the target $p_1$, and by
\citet[s]{freirich2021theory}, every point on it achieves
the minimum distortion at its perception level.
A visual comparison is shown in Fig.~\ref{fig:visual}.

% [Full proof from the Gaussian analysis document.]
 
% \subsection{Mixture-of-Gaussians details}
% %-----------------------------------------------------------------------------
 
% For non-Gaussian priors, the velocity field is nonlinear and the flow
% map is no longer a scalar multiplication. We validate the method
% numerically on a symmetric mixture of Gaussians
% $p_1 = \tfrac{1}{2}\ccalN(\mu, \sigma_1^2) +
% \tfrac{1}{2}\ccalN(-\mu, \sigma_1^2)$ with $\mu=2$, $\sigma_1=1$.
 
% The conditional mean involves a softmax over the two component
% likelihoods, introducing a $\tanh$-like nonlinearity in the velocity
% field. We compute the average denoiser by numerically integrating the
% ODE and evaluate the DP curve by Monte Carlo with 30,000 samples.
 
% Fig.~\ref{fig:mixture_pd} shows the results. The average denoiser
% traces a smooth, monotone path through the DP plane, close to the
% optimal Freirich frontier in the moderate-perception regime. As the
% lookahead increases, the denoiser map transitions from a smooth MMSE
% estimator to a sharp classifier-like map that separates the mixture
% components (Fig.~\ref{fig:mixture_maps}).
 
% \begin{figure}[t]
%     \centering
%     \includegraphics[width=0.48\textwidth]{pd_curve_mixture.pdf}
%     \hfill
%     \includegraphics[width=0.48\textwidth]{denoiser_maps.pdf}
%     \caption{\textbf{Left}: DP tradeoff for the mixture-of-Gaussians
%     prior. Dashed: optimal curve; solid: average denoiser. \textbf{Right}:
%     denoiser maps $D_{s,t}(x_s)$ for different lookaheads.}
%     \label{fig:mixture_pd}
% \end{figure}

\begin{figure}[H] % Use figure* if you are in a two-column format so it spans the whole top page
    \centering
        % Note: width=\textwidth here fills the 0.62 allocated above
    \includegraphics[width=\textwidth]{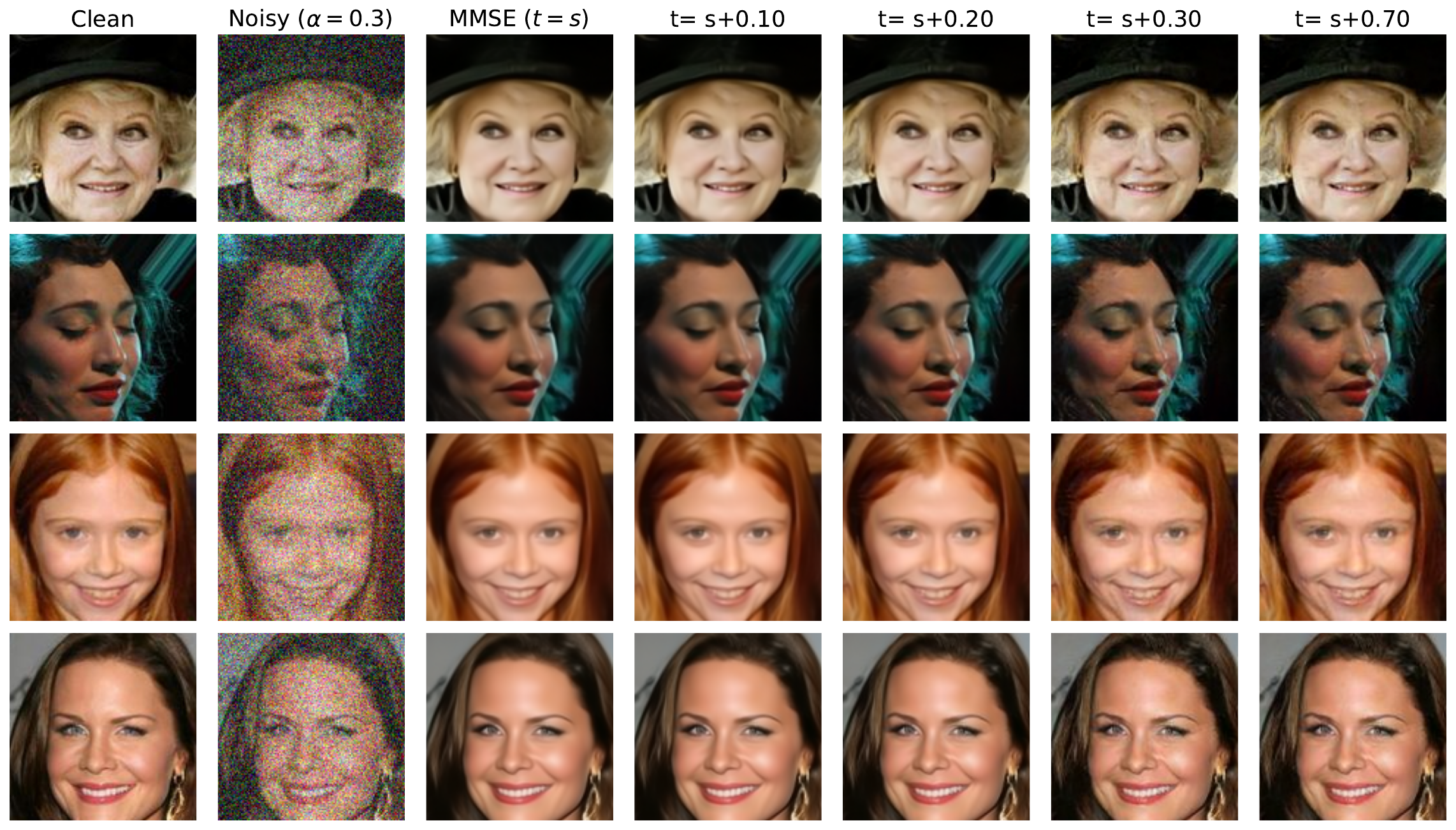} 
    % --- Main Caption ---
    \caption{Visual comparison for $\alpha = 0.3$. From left to right: clean image, noisy input, and reconstructions at increasing lookahead. The estimator smoothly transitions from a blurry conditional mean (MMSE) to a sharp sample, progressively restoring texture, contrast, and color saturation.}
    \label{fig:visual}
\end{figure}

\subsection{Experiment on Mixture-of-Gaussians}
\label{app:mog_pd}

\subsubsection{MMSE denoiser for a mixture of Gaussians}

\paragraph{Setup.}
Let $p_0 = \mathcal{N}(0, \mathbf{I})$ and $p_1 = \frac{1}{2}\mathcal{N}(\bbmu_a, \sigma_c^2 \mathbf{I}) + \frac{1}{2}\mathcal{N}(\bbmu_b, \sigma_c^2 \mathbf{I})$. Under the linear interpolant $\bbx_t = (1-t)\bbx_0 + t \bbx_1$, we seek to compute the MMSE denoiser:
\begin{equation}
    D_t(\bbx) = \mathbb{E}[\bbx_1 \mid \bbx_t = \bbx].
\end{equation}

\paragraph{Step 1: Marginal distribution of $\bbx_t$.}
Conditioned on $\bbx_1$ originating from component $k \in \{a, b\}$, we have $\bbx_1 \sim \mathcal{N}(\bbmu_k, \sigma_c^2 \mathbf{I})$ and $\bbx_0 \sim \mathcal{N}(0, \mathbf{I})$. Thus:
\begin{equation}
    \bbx_t \mid \text{component } k \;\sim\; \mathcal{N}\!\left(t\,\bbmu_k,\; \sigma_t^2 \mathbf{I}\right), \quad \text{where } \sigma_t^2 = (1-t)^2 + t^2 \sigma_c^2.
\end{equation}
The marginal distribution is the mixture:
\begin{equation}
    p(\bbx_t) = \frac{1}{2}\,\mathcal{N}(\bbx_t;\, t\bbmu_a,\, \sigma_t^2 \mathbf{I}) + \frac{1}{2}\,\mathcal{N}(\bbx_t;\, t\bbmu_b,\, \sigma_t^2 \mathbf{I}).
\end{equation}

\paragraph{Step 2: Posterior weights.}
By Bayes' rule, the posterior probability $w_k(\bbx, t)$ that $\bbx_1$ originated from component $k$ given $\bbx_t = \bbx$ is:
\begin{equation}
    w_k(\bbx, t) = \frac{\mathcal{N}(\bbx;\, t\bbmu_k,\, \sigma_t^2 \mathbf{I})}{\mathcal{N}(\bbx;\, t\bbmu_a,\, \sigma_t^2 \mathbf{I}) + \mathcal{N}(\bbx;\, t\bbmu_b,\, \sigma_t^2 \mathbf{I})}.
\end{equation}
In log-space, this is proportional to $-\frac{\|\bbx - t\bbmu_k\|^2}{2\sigma_t^2}$, and the weights are normalized via softmax such that $w_a + w_b = 1$.

\paragraph{Step 3: Conditional mean within components.}
Given that $\bbx_1$ comes from component $k$, the posterior mean of $\bbx_1$ given $\bbx_t = \bbx$ is:
\begin{equation}
    \mathbb{E}[\bbx_1 \mid \bbx_t = \bbx, \text{component } k] = \bbmu_k + A_t(\bbx - t\,\bbmu_k),
\end{equation}
where $A_t = \frac{t\,\sigma_c^2}{\sigma_t^2}$ is the regression coefficient (Tweedie's formula applied component-wise).

\paragraph{Step 4: Full MMSE denoiser.}
By the law of total expectation, the denoiser is the weighted average:
\begin{equation}
    D_t(\bbx) = w_a(\bbx,t)\,\left[\bbmu_a + A_t(\bbx - t\bbmu_a)\right] + w_b(\bbx,t)\,\left[\bbmu_b + A_t(\bbx - t\bbmu_b)\right].
\end{equation}
This is a softmax-weighted average of the within-component posterior means. Its behavior is intuitive:
\begin{itemize}
    \item When $\bbx$ is close to $t\bbmu_a$, $w_a \approx 1$ and $D_t(\bbx) \approx \bbmu_a + A_t(\bbx - t\bbmu_a)$ (assigned to component $a$).
    \item When $\bbx$ is equidistant from $t\bbmu_a$ and $t\bbmu_b$, $w_a \approx w_b \approx 0.5$, and $D_t(\bbx)$ averages the two, producing an output between the modes (the posterior averaging artifact).
\end{itemize}

\noindent We compute the average denoiser by ODE integration using the instantaneous velocity field.
The nonlinearity introduced by the softmax weights makes the MoG fundamentally different from the single Gaussian case: the denoiser is non-linear in $\bbx$, the flow map lacks a closed-form solution, and the posterior averaging artifact (blurring between modes) emerges as a direct consequence of the mixture structure.
However, given the simplicity of the instantaneous denoiser, we can approximate the average denoiser numerically.

\subsubsection{DP analysis}

In this section, we consider a controlled setting based on a 2D mixture of Gaussians. 
For this distsribution, the marginal velocity admits a closed-form expression, allowing both the average denoiser $D_{s,t}$ and the optimal DP estimator of~\citet{freirich2021theory} to be evaluated with high precision, bypassing the requirement of a learned model.

\paragraph{Setup.}
We consider $\bbmu_a = (-2, 0)$, $\bbmu_b = (2, 0)$, and $\sigma_c = 0.3$, and compute the average denoiser $D_{s,t}$ by integrating $\dot{x}_\tau = v(x_\tau, \tau)$ from $\tau = s$ to $\tau = t$ with a fourth-order Runge--Kutta scheme (200 steps). 

\paragraph{Optimal DP construction.}
To construct the optimal DP curve, we follow \citet[Theorem~3]{freirich2021theory}. 
At perception level $P \in [0, P^*]$, the optimal estimator is given by~\eqref{eq:optimal_pd}.

We approximate this construction empirically by drawing $N=5000$ samples, computing $\hat{\bbx}_0^{(i)} = D_s(\bbx_s^{(i)})$, and solving the discrete optimal transport problem between the empirical distributions $\{\hat{\bbx}_0^{(i)}\}$ and $\{\bbx_1^{(j)}\} \sim p_1$ using~\citep{flamary2021pot}. 
Let $\pi$ denote the optimal coupling; we then form the interpolated samples as
\[
\hat{\bbx}_P^{(i)} = \left(1 - \frac{P}{P^*}\right)\hat{\bbx}_0^{(i)} + \frac{P}{P^*} \bbx_1^{(\pi(i))}.
\]

Distortion is measured as MSE with respect to the ground truth, while perception is quantified via the squared 2-Wasserstein distance $W_2^2(\hat{p}, p_1)$ between empirical distributions.

\paragraph{Results.}
Fig.~\ref{fig:mog_dp} compares the resulting curves at three noise levels $s \in {0.3, 0.5, 0.7}$, corresponding to heavy, moderate, and light noise (recall that smaller $s$ implies heavier noise, with $\xi := 1 - s$). Three observations stand out.

\begin{enumerate}

\item \textbf{Endpoints coincide at MMSE.}
At $t = s$ (equivalently, $\alpha = 1$), both curves reduce to the MMSE estimator by construction, yielding identical distortion–perception coordinates across all panels.

\item \textbf{Near-optimal behavior in the interior.}
For intermediate lookaheads, the average denoiser traces a DP curve that lies slightly above the Freirich optimum, with a small and nearly uniform gap. At $s = 0.3$ (heavy noise), this gap remains below $0.02$ in MSE across the full perception range. This discrepancy is not a numerical artifact: it persists when using higher-order integration (RK4) and when increasing the sample size from $N=2000$ to $N=5000$. Instead, it reflects a structural limitation—while the flow-map trajectory matches the $W_2$ geodesic in the Gaussian case, it deviates slightly for nonlinear velocity fields (here induced by the softmax-weighted mixture).

\item \textbf{Endpoint behavior at $t=1$.}
At the perceptual endpoint, the distribution $D_{s,1}(p_{x_s})$ closely approximates $p_1$ but does not match it exactly, resulting in a small residual $W_2^2$ gap. This gap decreases as noise increases (from $9.4 \times 10^{-3}$ at $s=0.5$ to $3.8 \times 10^{-3}$ at $s=0.3$, compared to a sampling floor of $2.2 \times 10^{-3}$). This is consistent with the intuition that the integrable $1/(1-\tau)$ singularity in the velocity field has a stronger effect when the final integration interval is short.

\end{enumerate}

Taken together, these results show that the average denoiser is \emph{near-optimal} in a controlled setting: the resulting curves are smooth, monotone, and convex, match the MMSE endpoint exactly, and exhibit only small, well-characterized deviations from the Freirich optimum.
This complements the exact optimality established in the Gaussian case (Theorem~\ref{thm:optimal}) and supports the behavior observed at image scale (Fig.~\ref{fig:pd_curves}).

\begin{figure}[H]
    \centering
    \includegraphics[width=\textwidth]{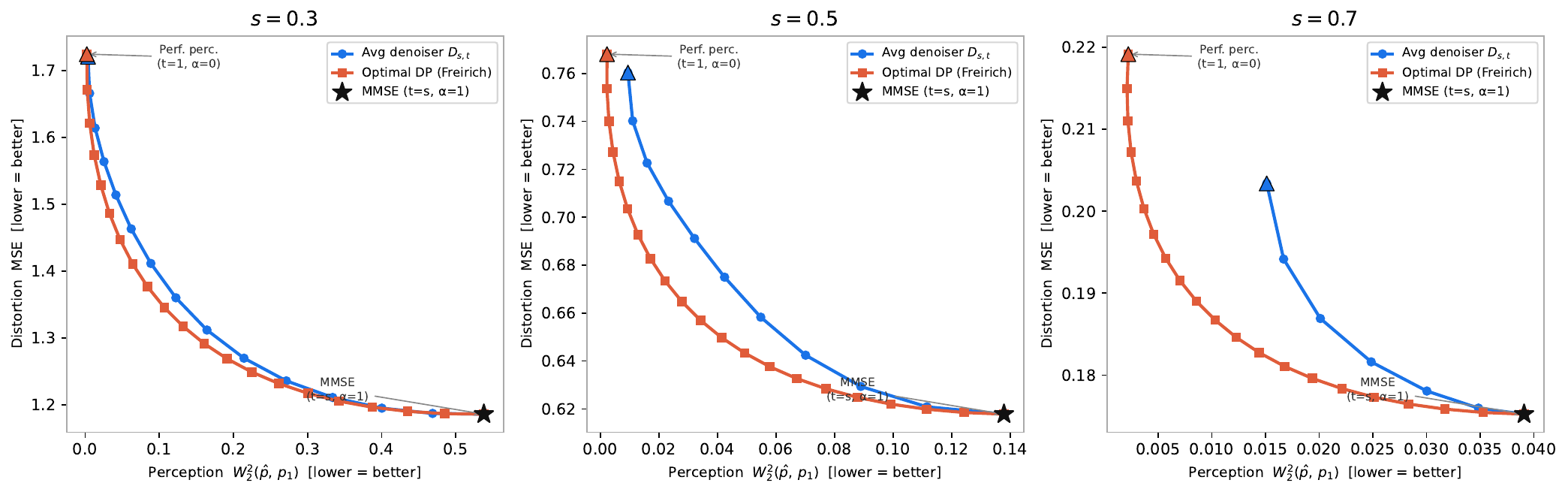}
    \caption{Distortion--perception comparison on a 2D mixture of
    Gaussians at three noise levels $s \in \{0.3, 0.5, 0.7\}$
    (heavy, moderate, light noise). Blue: average denoiser
    $D_{s,t}$ swept over lookahead $t \in [s, 1]$. Red: ~\cite{freirich2021theory} optimal DP estimator swept over perception level
    $\alpha \in [0, 1]$, computed via exact W\textsubscript{2}-optimal
    transport between the MMSE pushforward and $p_1$. Both curves share
    the MMSE endpoint exactly (\ding{72}); small residual gaps at the
    perfect-perception extreme (\ding{115}) reflect the velocity
    singularity near $\tau = 1$. The near-optimality of
    $D_{s,t}$ in the interior confirms that the flow-map trajectory
    closely approximates the W\textsubscript{2} geodesic even in a
    non-Gaussian setting, bridging Theorem~\ref{thm:optimal} to the
    image experiments in Section~\ref{sec:learned}.}
    \label{fig:mog_dp}
\end{figure}

\subsection{Fixed point characterization}
\label{app:fixed_point}

We characterize the fixed-point solution of the PnP iteration and show its dependence on the lookahead $t$. For simplicity, we consider linear inverse problems $f(\bbx)=\bbH\bbx$ and analyze the deterministic variant of Algorithm~\ref{alg:pnp_fm} (i.e., $\epsilon=0$). In this variant, the stochastic renoising step reduces to the deterministic map $R_s(\bbz) = s\,\bbz$, and the per-iteration operator becomes
\begin{equation}
T_t(\bbx) := D_{s,t}\bigl(R_s(F_\lambda(\bbx))\bigr), 
\quad \text{with} \quad 
F_\lambda(\bbx) = \bbx - \lambda \nabla g(\bbx),
\quad R_s(\bbz) = s\,\bbz.
\label{eq:Tt_operator}
\end{equation}
Let $L_D(t)$ denote the Lipschitz constant of $D_{s,t}$ and $L_F = \|\bbI - \lambda\,\bbH^\top\bbH\|$ that of $F_\lambda$.
Global convergence would require $T_t$ to be a contraction, i.e., $s\,L_D(t)\,L_F < 1$. 
The gradient step satisfies $L_F \le 1$ for any $\lambda \in (0,\,2/\|\bbH^\top\bbH\|]$, with equality (i.e., $L_F = 1$) when $\bbH^\top\bbH$ has zero eigenvalues, as in inpainting. In such ill-posed cases, contraction must come entirely from the composition $s\,L_D(t)$.
Empirically, however, $s\,L_D(t)\,L_F$ exceeds 1 across all $(s,t)$ we tested as shown in Fig.~\ref{fig:lipschitz_deblur}, so global contraction fails.
Instead, we consider local convergence. Since $T_t = D_{s,t} \circ R_s \circ F_\lambda$, its local Lipschitz constant satisfies
\begin{equation}
L_{T_t}^{\mathrm{loc}} \;\le\; s\,L_D^{\mathrm{loc}}(t)\, L_F.
\end{equation}
While $L_F \le 1$, the denoiser constant $L_D^{\mathrm{loc}}(t)$ typically increases with $t$, reflecting the transition from an averaging (MMSE) estimator at $t=s$ to a more mode-seeking estimator as $t\to1$. 
Consequently, the contraction factor degrades with $t$, explaining the empirically observed loss of monotonic convergence for large lookaheads in Appendix~\ref{app:num_steps_ablation}.

We assume there exists a closed neighborhood $U \subset \mathbb{R}^d$ of a fixed point $\bbx^\star(t)$ such that $T_t(U)\subseteq U$ and
\begin{equation}
\|T_t(\bbx) - T_t(\bbx')\| \le \rho(t)\,\|\bbx - \bbx'\|, 
\quad \forall \bbx,\bbx' \in U,
\label{eq:local_contraction}
\end{equation}
with $\rho(t) \le s\,L_D^{\mathrm{loc}}(t)\,L_F < 1$. This local contraction assumption is standard in PnP~\citep{ryu2019plug} and consistent with our empirical observations.

\begin{theorem}[Local convergence to a lookahead-dependent fixed point]
\label{thm:convergence}
Under~\eqref{eq:local_contraction}, if the iteration enters $U$, it converges geometrically to a unique fixed point $\bbx^\star(t)\in U$:
\begin{equation}
\|\bbx^k - \bbx^\star(t)\| \le \rho(t)^k \|\bbx^0 - \bbx^\star(t)\|.
\label{eq:conv_rate}
\end{equation}
Moreover, $\bbx^\star(t)$ satisfies
\[
\bbx^\star(t) = D_{s,t}\Bigl(R_s\bigl(F_\lambda(\bbx^\star(t))\bigr)\Bigr) = D_{s,t}\bigl(s\,(\bbx^\star(t) - \lambda \nabla g(\bbx^\star(t)))\bigr),
\]
so different lookaheads yield distinct fixed points whenever $D_{s,t}$ is genuinely $t$-dependent.
\end{theorem}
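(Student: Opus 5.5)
This is a direct application of the Banach fixed-point theorem, restricted to the invariant set $U$. First I will note that $U$ is a closed subset of $\mathbb{R}^d$, so it is complete under the Euclidean metric. By assumption $T_t(U)\subseteq U$, so $T_t$ restricted to $U$ is a self-map. By~\eqref{eq:local_contraction} this restriction is a contraction with factor $\rho(t)<1$.

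Banach's theorem then yields a unique fixed point $\bbx^\star(t)\in U$ of $T_t$. For the rate, let $k_0$ be the first index with $\bbx^{k_0}\in U$; re-indexing, we may take $k_0=0$. Invariance of $U$ keeps every later iterate $\bbx^k=T_t(\bbx^{k-1})$ inside $U$. Since $\bbx^\star(t)=T_t(\bbx^\star(t))$, the contraction bound gives
\[
\|\bbx^{k}-\bbx^\star(t)\| = \|T_t(\bbx^{k-1})-T_t(\bbx^\star(t))\| \le \rho(t)\|\bbx^{k-1}-\bbx^\star(t)\|.
\]
Iterating this inequality gives~\eqref{eq:conv_rate}, and hence geometric convergence.

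The fixed-point equation follows by unfolding the definition~\eqref{eq:Tt_operator}. We have $\bbx^\star(t)=T_t(\bbx^\star(t))=D_{s,t}(R_s(F_\lambda(\bbx^\star(t))))$. Substituting $R_s(\bbz)=s\bbz$ and $F_\lambda(\bbx)=\bbx-\lambda\nabla g(\bbx)$ gives the displayed form.

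For the last claim, I would read ``distinct fixed points'' as a consequence rather than a uniqueness statement across $t$. Suppose $t\neq t'$ and $\bbx^\star(t)=\bbx^\star(t')=:\bbx$. Then $\bbx = D_{s,t}(\bbz)=D_{s,t'}(\bbz)$ with $\bbz=s(\bbx-\lambda\nabla g(\bbx))$. So a coincidence of fixed points forces $D_{s,t}$ and $D_{s,t'}$ to agree at that specific point $\bbz$. Whenever the denoisers differ there, the fixed points must differ, which is exactly the sense of ``genuinely $t$-dependent''.

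The main obstacle is conceptual rather than technical. The theorem only asserts local behavior under an assumed invariant neighborhood, so the care needed is in stating precisely what ``$t$-dependent'' means. I would formalize it as $D_{s,t}(\bbz)\neq D_{s,t'}(\bbz)$ at the relevant point, and not attempt any global uniqueness claim across $t$.
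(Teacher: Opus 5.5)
Your proposal is correct and follows the paper's proof essentially step for step: $U$ is complete as a closed set, Banach's theorem applies on the invariant set, you iterate the contraction bound against $\bbx^\star(t)$, and you unfold $T_t = D_{s,t}\circ R_s\circ F_\lambda$. Your proof is slightly more careful than the paper's in two places: you re-index to the first iterate that enters $U$, whereas the paper implicitly assumes $\bbx^0\in U$; and you read the final claim pointwise (fixed points for $t\neq t'$ can only coincide if $D_{s,t}$ and $D_{s,t'}$ agree at $\bbz = s(\bbx-\lambda\nabla g(\bbx))$), whereas the paper phrases it as $D_{s,t}\neq D_{s,t'}$ as functions, which alone is not quite enough.
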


The proof follows directly from the Banach fixed-point theorem applied to $T_t$ on $(U,\|\cdot\|)$ and is given in Appendix~\ref{app:proof_theorem2}.

\subsubsection{Proof of Theorem~\ref{thm:convergence}}
\label{app:proof_theorem2}

Since $U$ is a closed subset of the complete metric space $(\mathbb{R}^d, \|\cdot\|)$, it is itself complete. By assumption, $T_t$ maps $U$ into itself ($T_t(U) \subseteq U$) and, by Equation~\ref{eq:local_contraction}, is a strict contraction on $U$ with constant $\rho(t) < 1$. The Banach fixed-point theorem then yields a unique $\bbx^\star(t) \in U$ with $T_t(\bbx^\star(t)) = \bbx^\star(t)$.

For the convergence rate, applying the contraction inequality with $\bbx' = \bbx^\star(t)$ gives
\[
\|\bbx^{k+1} - \bbx^\star(t)\|
= \|T_t(\bbx^k) - T_t(\bbx^\star(t))\|
\le \rho(t)\,\|\bbx^k - \bbx^\star(t)\|,
\]
and iterating yields the bound~\eqref{eq:conv_rate}.

Finally, the fixed-point identity $\bbx^\star(t) = T_t(\bbx^\star(t))$ together with the operator definition~\eqref{eq:Tt_operator} gives
\[
\bbx^\star(t) = D_{s,t}\bigl(R_s(F_\lambda(\bbx^\star(t)))\bigr) = D_{s,t}\bigl(s\,(\bbx^\star(t) - \lambda \nabla g(\bbx^\star(t)))\bigr),
\]
which depends on $t$ explicitly through $D_{s,t}$. Whenever $D_{s,t} \ne D_{s,t'}$ for $t \ne t'$, the corresponding fixed points $\bbx^\star(t)$ and $\bbx^\star(t')$ differ.

\paragraph{Empirical Lipschitz analysis.} In Fig.~\ref{fig:lipschitz_deblur}, we plot the composite bound $s \, L_D(t) \, L_F$ for the deterministic PnP operator $T_t = D_{s,t} \circ R_s \circ F_\lambda$ on CelebA $128 \times 128$ 
(Gaussian kernel of size $45$, std $3.0$, $\lambda = 1.0$, giving $L_F = 1.000$). 
The denoiser Lipschitz constant $L_D(t)$ is estimated by alternating power iteration on 
the Jacobian of $D_{s,t}$ at noisy samples $\bbx_s = s\,\bbx_1 + (1-s)\,\bbepsilon$, 
taking the worst case over $20$ samples. 
Across all $(s, t)$ tested, the bound exceeds $1$ (red line), so $T_t$ is never a global contraction;  the algorithm's empirical convergence is therefore explained by the local-contraction analysis of  Theorem~\ref{thm:convergence} rather than by a global Banach argument.

\begin{figure}[H]
    \centering
    \includegraphics[width=1.0\linewidth]{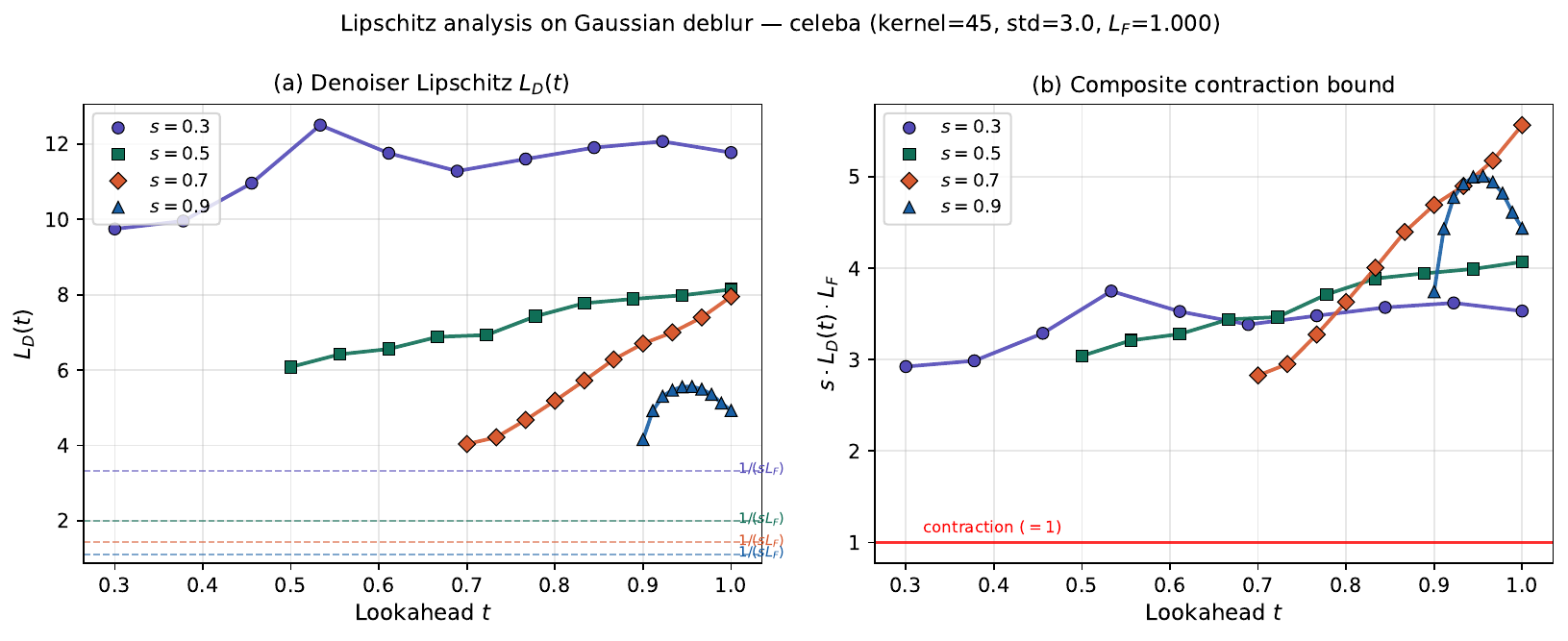}
    \caption{The composite bound $s\,L_D(t)\,L_F$ exceeds $1$ across all $(s,t)$, so $T_t$ is never a global contraction --- empirical convergence relies on the local analysis of Theorem~\ref{thm:convergence}. $L_D(t)$ is estimated via power iteration on $\partial D_{s,t}/\partial \bbx$ at noisy samples (worst case over $20$). The bound increases with $t$, with steepest growth at low noise (large $s$).}
    \label{fig:lipschitz_deblur}
\end{figure}

\paragraph{On the validation of assumption~\eqref{eq:local_contraction}.}
Empirical validation of~\eqref{eq:local_contraction} and the $t$-dependence of fixed points is provided in the Gaussian linear-inverse setting (Appendix~\ref{app:proof_tikhonov}, Fig.~\ref{fig:gaussian_pnp}), where $T_t$ admits a closed-form analysis and the iterates converge to the predicted Tikhonov solution to machine precision. For image priors, the deterministic operator analyzed in Theorem~\ref{thm:convergence} is of theoretical interest; the algorithm deployed in practice (Algorithm~\ref{alg:pnp_fm}) uses stochastic renoising, which both stabilizes the iteration and induces the variance restoration that drives the empirical distortion--perception behavior (Section~\ref{sec:learned}).

\subsection{Gaussian case: global contraction.}
We now generalize the analysis from Section~\ref{sec:gaussian} (where the denoiser was treated as a standalone estimator without a forward model) to general linear inverse problems. Focusing on the multivariate case with a Gaussian prior $p_1 = \ccalN(\mathbf{0},\,\sigma_p^2\,\bbI)$, the average denoiser remains a linear map $D_{s,t}(\bbx) = \Lambda(s,t)\,\bbx$ with the same scalar gain~\eqref{eq:Lambda}. In this case, $T_t$ is a \emph{global} contraction and~\eqref{eq:local_contraction} holds with $U = \mathbb{R}^d$.

Since the denoiser gain is a scalar, $L_D(t) = \Lambda(s,t)$ and $L_F = \|\bbI - \lambda\,\bbH^\top\bbH\|$. The contraction rate is $\rho(t) = m(t)\,\|\bbI - \lambda\,\bbH^\top\bbH\|$, where $m(t) := s\,\Lambda(s,t) < 1$ for all $t \in [s,1]$ and $s \in (0,1)$ (Appendix~\ref{app:gaussian}). For any $\lambda \in (0,\,2/\|\bbH^\top\bbH\|)$, we have $\|\bbI - \lambda\,\bbH^\top\bbH\| \le 1$. Consequently, $\rho(t) < 1$ on all of $[s,1]$: the Gaussian case admits unconditional global convergence across all lookaheads, even for ill-posed forward models. 
Solving the fixed-point equation yields the closed form
\begin{equation}
    \bbx^\star(t) \;=\; \bigl(c(t)\,\bbI + \lambda\,\bbH^\top\bbH\bigr)^{-1}\, \lambda\,\bbH^\top\bby, \qquad c(t) \;=\; \frac{1}{m(t)} - 1,
    \label{eq:tikhonov}
\end{equation}
i.e., a Tikhonov-regularized estimator whose regularization parameter $c(t)$ depends on both the forward model $\bbH$ and the lookahead $t$. 
Since $\Lambda(s,t)$ is strictly increasing in $t$ (Appendix~\ref{app:gaussian}), $c(t)$ is strictly decreasing. 
As the lookahead increases from $s$ to $1$, the Tikhonov regularization
progressively weakens, yielding a continuum of estimators with decreasing
shrinkage toward the prior mean. In contrast to the denoising setting,
these fixed points do not, in general, correspond to the MMSE or lie on
the optimal distortion--perception frontier; instead, the lookahead acts
as a regularization parameter that controls the bias--variance trade-off
of the reconstruction.

\subsubsection{Proof of Equation~\eqref{eq:tikhonov}: Fixed Point for the Gaussian Case}
\label{app:proof_tikhonov}

By definition, the fixed point $\bbx^\star(t)$ satisfies $\bbx^\star(t) = T_t(\bbx^\star(t))$. 
For a linear inverse problem with data fidelity $g(\bbx) = \frac{1}{2}\|\bbH\bbx - \bby\|^2$, the gradient is $\nabla g(\bbx) = \bbH^\top(\bbH\bbx - \bby)$. The gradient descent step is given by:
\begin{equation*}
    F_\lambda(\bbx) = s\bigl(\bbx - \lambda \bbH^\top(\bbH\bbx - \bby)\bigr).
\end{equation*}

In the multivariate Gaussian case, the average denoiser is the linear map $D_{s,t}(\bbx) = \Lambda(s,t)\bbx$. Substituting this into the fixed-point equation yields:
\begin{align*}
    \bbx^\star(t) &= D_{s,t}\bigl(F_\lambda(\bbx^\star(t))\bigr) \\
    \bbx^\star(t) &= \Lambda(s,t) \cdot s\bigl(\bbx^\star(t) - \lambda \bbH^\top(\bbH\bbx^\star(t) - \bby)\bigr).
\end{align*}

Defining the effective scaling factor $m(t) := s\,\Lambda(s,t)$, we can distribute $m(t)$ across the terms:
\begin{equation*}
    \bbx^\star(t) = m(t)\bbx^\star(t) - m(t)\lambda \bbH^\top\bbH\bbx^\star(t) + m(t)\lambda \bbH^\top\bby.
\end{equation*}

Since $m(t) \in (0, 1)$ across all valid lookaheads $t \in [s, 1]$ (as shown in Appendix~\ref{app:gaussian}), it is strictly positive. We can therefore divide both sides by $m(t)$:
\begin{equation*}
    \frac{1}{m(t)}\bbx^\star(t) = \bbx^\star(t) - \lambda \bbH^\top\bbH\bbx^\star(t) + \lambda \bbH^\top\bby.
\end{equation*}

Subtracting $\bbx^\star(t)$ from both sides and rearranging to group all terms involving $\bbx^\star(t)$ on the left yields:
\begin{equation*}
    \left(\frac{1}{m(t)} - 1\right)\bbx^\star(t) + \lambda \bbH^\top\bbH\bbx^\star(t) = \lambda \bbH^\top\bby.
\end{equation*}

Factoring out $\bbx^\star(t)$ gives:
\begin{equation*}
    \biggl[\left(\frac{1}{m(t)} - 1\right)\bbI + \lambda \bbH^\top\bbH\biggr]\bbx^\star(t) = \lambda \bbH^\top\bby.
\end{equation*}

Finally, defining the regularization parameter as $c(t) := \frac{1}{m(t)} - 1$, we invert the matrix on the left-hand side to arrive at the closed-form solution:
\begin{equation*}
    \bbx^\star(t) = \bigl(c(t)\bbI + \lambda \bbH^\top\bbH\bigr)^{-1} \lambda \bbH^\top\bby,
\end{equation*}
which completes the derivation.

\paragraph{Empirical validation.}
To corroborate the closed form~\eqref{eq:tikhonov}, we consider a 2D Gaussian prior $p_1 = \ccalN(\mathbf{0}, \sigma_p^2\,\bbI)$ with $\sigma_p = 2$, and two linear forward models: an isotropic rotation $\bbH_1 = 0.8\,R(\pi/6)$ and an anisotropic operator $\bbH_2 = R(\pi/6)\,\mathrm{diag}(1.0,\,0.3)\,R(\pi/4)$, whose Gram matrix $\bbH_2^\top\bbH_2$ has eigenvalues $\{1.0,\,0.09\}$ and couples the coordinates.
Observations are $\bby = \bbH_k\,\bbx_1 + \eta$ with
$\eta \sim \ccalN(\mathbf{0}, \sigma_{\rm obs}^2\,\bbI)$, $\sigma_{\rm obs} = 0.1$.

We run deterministic PnP-Flow ($\epsilon=0$) with $\lambda=0.1$ and $0.5$, anchor $s=0.45$, and lookaheads $t \in \{s,\, s+0.27,\, 1\}$. 
The iterates converge to $\bbx^\star(t)$ to machine precision ($\|\hat{\bbx}-\bbx^\star(t)\|_{\rm mean}<5\cdot10^{-16}$), validating~\eqref{eq:tikhonov}.
As $t$ increases, the effective regularization $c(t)/\lambda$ decreases ($0.93 \to 0.59 \to 0.43$), yielding a continuum of Tikhonov estimators with progressively weaker shrinkage toward the prior mean.
The result is shown in Fig.~\ref{fig:gaussian_pnp}

\begin{figure}[H] % Use figure* if you are in a two-column format so it spans the whole top page
    \centering
    
    % --- Top Row: Three Plots ---
    \begin{subfigure}[b]{1.0\textwidth}
        \centering
        \includegraphics[width=1.01\textwidth]{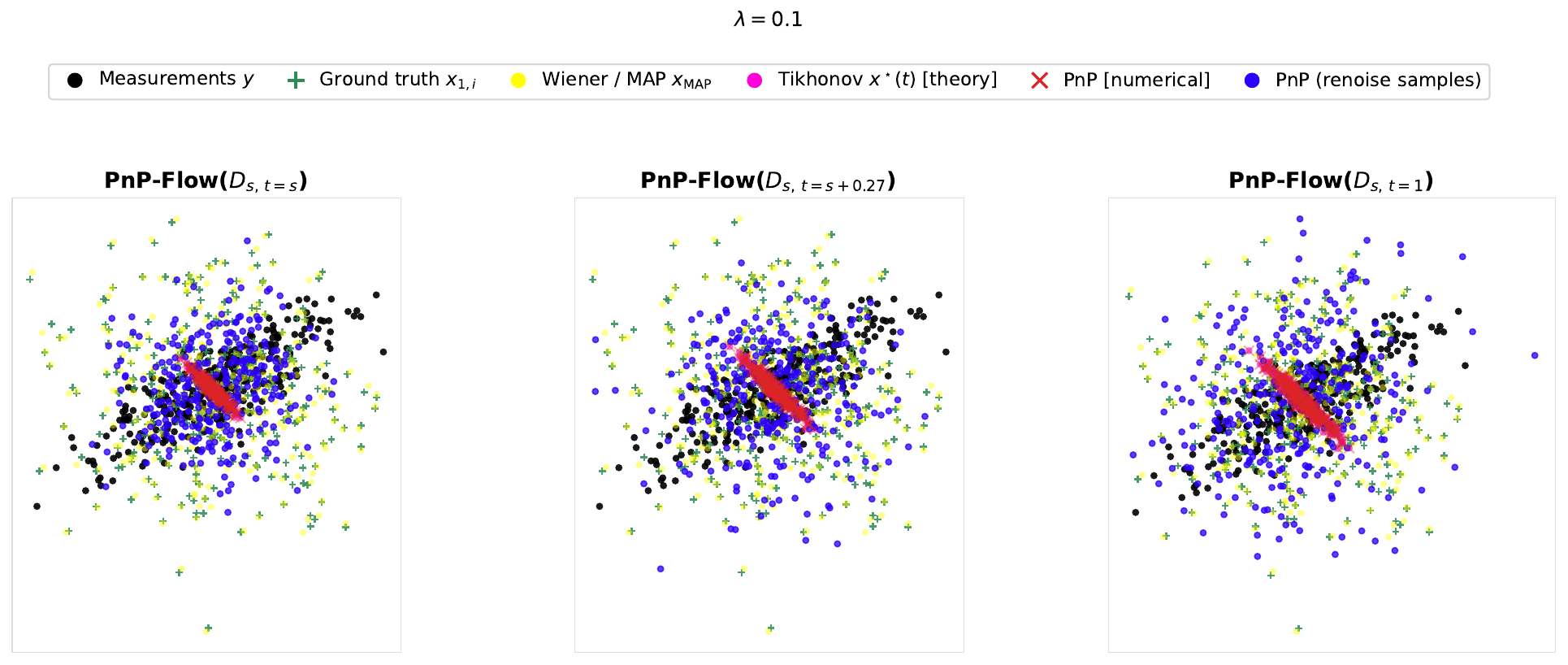}
        \caption{}
    \end{subfigure}
    % \vspace{0.5cm} % Space between the top 4 plots and this row
        \begin{subfigure}[b]{1.0\textwidth}
            \centering
            \includegraphics[width=1.01\textwidth]{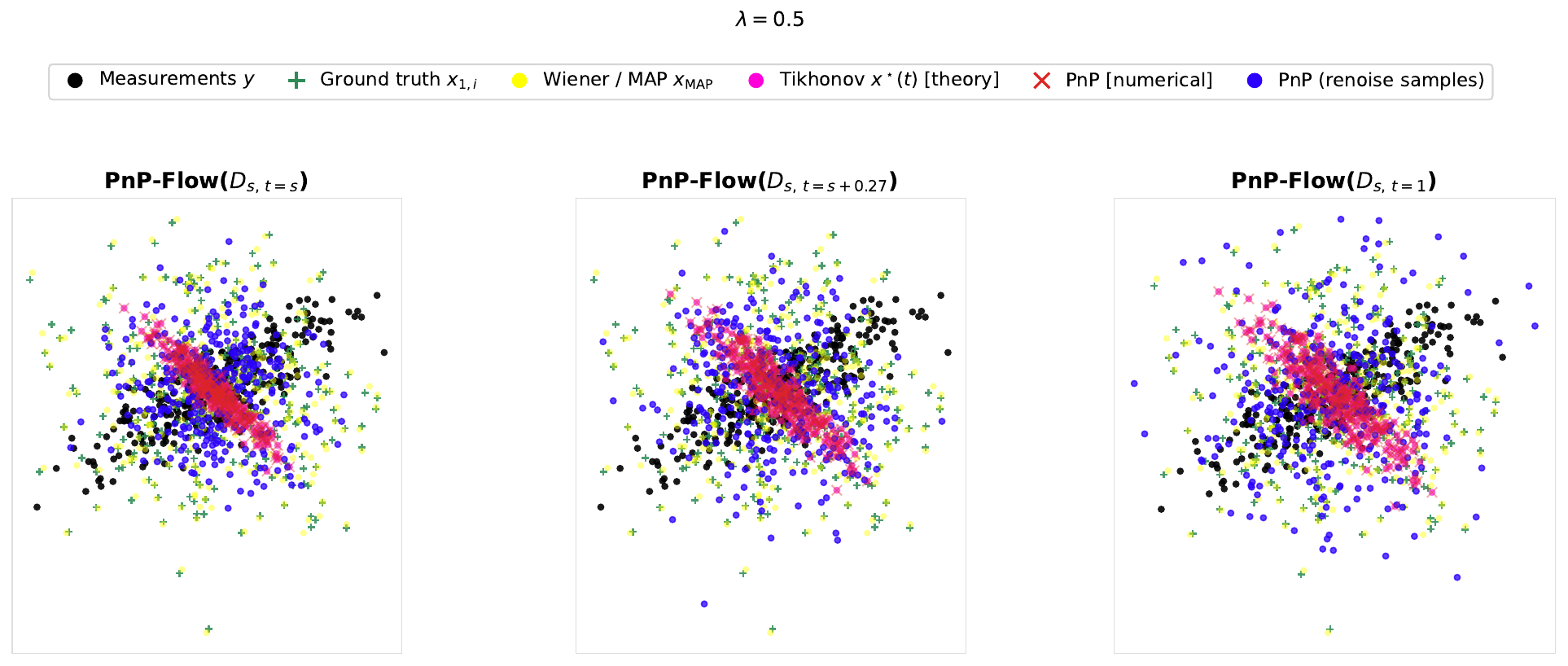}
            \caption{}
        \end{subfigure}
    % --- Main Caption ---
    \caption{{\small  Two forward models are tested: \textbf{(a)} isotropic rotation $\bbH_1 = 0.8\,R(\pi/6)$ with 
$\lambda = 0.1$, and \textbf{(b)} anisotropic operator 
$\bbH_2 = R(\pi/6)\,\mathrm{diag}(1.0, 0.3)\,R(\pi/4)$ with $\lambda = 0.5$. 
Black: measurements $\bby$. Gold crosses: ground truth $\bbx_{1,i}$. Yellow: Wiener/MAP estimator. 
Red dots: Tikhonov fixed point $\bbx^\star(t)$ predicted by Eq.~\eqref{eq:tikhonov}. 
Red $\times$: PnP iterates after convergence (matching theory to machine precision, 
$\|\hat{\bbx} - \bbx^\star(t)\|_{\rm mean} < 5\cdot 10^{-16}$). 
Blue dots: samples from the stochastic PnP variant, illustrating the variance restoration 
discussed in Section~\ref{sec:pnp}. As $t$ increases (left to right), the Tikhonov regularization 
weakens and the iterates spread further from the prior mean toward the Wiener solution.}}
    \label{fig:gaussian_pnp}
\end{figure}

In this Gaussian inverse problem, the lookahead acts as a \emph{regularization knob} rather than a parameterization of the distortion--perception frontier. Since the regime is strongly over-regularized, increasing $t$ reduces bias and decreases the empirical MSE. 
This does not contradict the image experiments: exact distortion--perception optimality holds only in the denoising setting, while in inverse problems the interaction with the forward model breaks this guarantee. 
Here, the Gaussian case isolates the algorithmic role of the lookahead through its closed-form Tikhonov interpretation.
This highlights that the exact DP interpretation holds in the denoising setting, while in inverse problems the lookahead acts as a forward-model-dependent regularization parameter.

\paragraph{Why $\bbx^\star(t)$ is not the MAP/Wiener estimator.}
One might expect PnP with $D_{s,t=s}$ (the MMSE denoiser) to recover the MAP estimator $\bbx_{\rm MAP}$, as in classical PnP with proximal operators. This is not the case. The flow-map denoiser
$D_{s,t}(\bbx) = \Lambda(s,t)\,\bbx$ is not a proximal map: it operates at the intrinsic noise level $\sigma_s$ of the interpolant $\bbx_s$, rather than the measurement noise $\sigma_{\rm obs}$.

Matching~\eqref{eq:tikhonov} to $\bbx_{\rm MAP}$ would require
$c(t)/\lambda = \sigma_{\rm obs}^2/\sigma_p^2$, which is incompatible with the contraction regime. Consequently, PnP-Flow operates at an effectively higher noise level, yielding over-regularized estimates. As the lookahead $t$ increases, $c(t)$ decreases and $\bbx^\star(t)$ moves toward the Wiener solution, reducing bias. Thus, the lookahead controls a bias--variance trade-off induced by this noise mismatch, rather than traversing the optimal distortion--perception curve. In general, $\bbx^\star(t)$ should be interpreted as a lookahead-controlled Tikhonov estimator.

\paragraph{Effect of renoising in the Gaussian setting.}
To isolate the role of the renoising step, we repeat the experiment with
the full stochastic PnP iteration. While the deterministic case converges
to a single Tikhonov solution, the stochastic iteration produces a
distribution of estimates whose spread is largely independent of the step
size $\lambda$.

Across different $\lambda$, the sample clouds consistently exhibit
meaningful variance and align with the geometry of the data distribution,
even when the mean estimate remains over-regularized. This indicates that
the renoising step, together with the denoiser, primarily controls the
variance of the estimator, while $\lambda$ governs its mean through the
data-consistency update.

As the lookahead increases, this variability is progressively amplified,
yielding the distortion--perception behavior observed in the experiment.

% \section{Social impacts} \label{app:social_impacts}
% The generation of images under constraints could lead to undesired consequences if not applied with care, particularly in image inpainting.
% In sensitive applications like security and defense, these prediction inaccuracies can have serious repercussions, including misdiagnosis.
% Moreover, misusing these models in social network analysis might inadvertently reinforce biases or invade privacy if not handled ethically. 
% Therefore, it is crucial to apply PnP-Flow and any other image restoration algorithm using diffusion models with a thorough understanding of their limitations and to validate results rigorously to mitigate these risks.

% \newpage
% \input{checklist.tex}

\end{document}